\newtheorem{theorem}{Theorem}
\newtheorem{lemma}[theorem]{Lemma}
\newtheorem{proposition}[theorem]{Proposition}
\newtheorem{remark}[theorem]{Remark}
\newtheorem{corollary}[theorem]{Corollary}
\newtheorem{definition}[theorem]{Definition}
\title{Generative Modeling with Denoising Auto-Encoders and Langevin Sampling}
\date{}
\author[1]{Adam Block}
\author[2]{Youssef Mroueh}
\author[1]{Alexander Rakhlin}
\affil[1]{MIT}
\affil[2]{IBM Research \& MIT-IBM Watson AI Lab}
\DeclareMathOperator{\osc}{osc}
\DeclareMathOperator{\divv}{div}
\DeclareMathOperator*{\argmin}{argmin}
\newtheorem{assumption}{Assumption}
\begin{document}

\maketitle

\begin{abstract}
    We study convergence of a generative modeling method that first estimates the score function of the distribution using Denoising Auto-Encoders (DAE) or Denoising Score Matching (DSM) and then employs Langevin diffusion for sampling. We show that both DAE and DSM provide estimates of the score of the Gaussian smoothed population density, allowing us to apply the machinery of Empirical Processes. 
    We overcome the challenge of relying only on $L^2$ bounds on the score estimation error and provide finite-sample bounds in the Wasserstein distance between the law of the population distribution and the law of this sampling scheme. We then apply our results to the homotopy method of \cite{Ermon} and provide theoretical justification for its empirical success.
\end{abstract}

\section{Introduction}
     Recent empirical successes of generative modeling range from high-fidelity image generation  with Generative Adversarial Networks (GANs) \cite{GANoriginal} to protein folding with differentiable simulators \cite{senior2020improved,ingraham2018learning}.  GANs are implicit likelihood models, in the sense that they do not directly model the likelihood. On the other hand, explicit generative models directly estimate the likelihood or the score function (the gradient of the $\log$ likelihood). Recent works, including \cite{Ermon,Nguyen_2017,grathwohl2019your}, show that successful image generation can be achieved by estimating the score function from data (using Denoising Score Matching, Denoising Auto-Encoders, and pre-trained classifiers respectively) and by using Langevin dynamics for sampling. Conditional text generation, as well as protein folding can be accomplished using a similar approach, see e.g \cite{dathathri2019plug} for text generation and \cite{ingraham2018learning} for learning 3D protein structures from sequences. 
    Motivated by the recent resurgence of those explicit methods and their empirical success in wide range of applications we focus on this family of explicit generative models and address their theoretical properties.

    Formally, we consider the problem of data generation from an unknown distribution.  The algorithm we consider is in two parts.  In the first part, we estimate the score of the data using a Denoising Auto-Encoder (DAE), while in the second part we plug our estimate of the score of the data into a discretized approximation to the Langevin dynamics stochastic differential equation, generating a sample.  In this paper we bound the Wasserstein 2-distance between the law of our sampling algorithm and the population law. The algorithm is a variation on the method of \cite{Ermon}, which produced state-of-the-art results on standard vision datasets.
    \par
    If we consider a distribution with density $p$ with respect to the Lebesgue measure, then Langevin Dynamics provide a well-known and much studied way to sample from this distribution.  Supposing that $\nabla \log p$ is $\frac M2$-Lipschitz, we note that under mild conditions, the Langevin diffusion given by
    \begin{align}\label{eq:langevin}
        d W(t) = \nabla \log p(W(t)) d t + \sqrt{2} d B_t && W(0) \sim \mu_0
    \end{align}
    where $B_t$ is a standard Brownian motion in $\mathbb{R}^d$, converges in law as $t \to \infty$ to the population distribution $p$.  We consider the setting where $p$ is unknown, but we have access to $n$ i.i.d. samples $X_1, \dots, X_n \sim p$.  In this case, $\nabla \log p$, often called the score of the distribution, must be estimated.  We build on the observations of \cite{Vincent,Alain} and show that DAEs trained on the sample provide estimators of the score of a smoothed distribution that are close in the sense of $L^2$.  We then show that this estimate suffices to bring the Langevin process associated to our estimate close to the Langevin process that is actually associated with the population distribution.  Finally we note that our error decomposition lends itself naturally to the homotopy method used in \cite{Ermon} and we show that this approach significantly helps the sampling scheme with respect to Wasserstein-2 distance.  The new contributions are as follows:
    \begin{itemize}

        \item We show that an estimator of score that is close in the $L^2$ sense still furnishes us with a Langevin diffusion whose law at a fixed time $t > 0$ remains close in the sense of Wasserstein to the law of the Langevin diffusion driven by the score of the population distribution.  In particular, many statistical estimators are only guaranteed to have small $L^2$ error, rather than small uniform error, and so the ability to provide estimates of Wasserstein distance for a Langevin sample using such estimates has the potential for significant general application. While the theory is more straightforward for an estimator that is uniformly close, a case that is studied in \cite{RakhlinRaginsky}, to our knowledge this is the first theoretical justification for the $L^2$-close regime.
        
        \item We exhibit a decomposition of the error between the law of our sampling algorithm and the law of the population distribution that makes the benefits of a homotopy method explicit, thereby providing the first rigorous, theoretical justification that we know of for the success realized by the algorithm in \cite{Ermon}.
        
        \item We shed new light on what the Denoising Auto-Encoder (DAE) learns from a distribution, showing that optimizing DAE loss is equivalent to optimizing Denoising score-matching (DSM) loss, which then implies that the population level DAE provides an unbiased estimator of the score of the population distribution convolved with a Gaussian.  Moreover we provide a condition to guarantee that the convolved distribution satisfies a log-Sobolev inequality, the first such result of which we know.
        
        \item We use our connection between DAE and DSM losses to provide finite sample, high probability estimates of the error of a DAE.  To our knowledge, these are the first such finite sample bounds.
    \end{itemize}
     \par We consider a sampling scheme where we fix in advance a sequence $(\eta_i, \sigma_i^2)$ with both $\eta_i$ and $\sigma_i^2$ non-increasing, $1 \leq i \leq N$.  We then consider a sequence of DAEs $\widetilde{f}_1, \dots, \widetilde{f}_N$ trained on the data with the variance parameter of $\widetilde{f}_i$ equal to $\sigma_i^2$.  Let $\widehat{f}_i(x) = \frac 1{\sigma_i^2} (\widetilde{f}_i(x) - x)$.  Then we apply a homotopy method of discrete Langevin sampling with warm restarts where on the $i^{th}$ leg of the homotopy, we use $\widehat{f}_i$ as an estimate of score and a step length of $\eta_i$.  This is the identical algorithm proposed and empirically justified by \cite{Ermon}, up to the fact that we consider the DAE criterion, while they consider the DSM criterion.
\section{Notation and Preliminaries}
    There are two steps to our sampling scheme: the estimation of the score and the Langevin sampling.  For the first, we have
    \begin{definition}
        If $p$ is a density on $\mathbb{R}^d$, we call $\nabla \log p$ the score of the distribution.  We let $g_{\sigma^2}$ denote the density of a centred Gaussian with variance $\sigma^2$ and let $p_{\sigma^2} = p \ast g_{\sigma^2}$ denote the convolved distribution.  If $r: \mathbb{R}^d \to \mathbb{R}^d$ is a function, we denote the DAE error by
        \begin{equation}
            L_{DAE}(r) = \mathbb{E}_{\substack{X \sim p \\ \epsilon \sim g_{\sigma^2}}} \left[ ||r(X + \epsilon) - X||^2 \right]
        \end{equation}
        We add a hat to indicate that we are considering the empirical distribution:
        \begin{equation}
            \widehat{L}(r) = \frac {1}{n} \sum_{i = 1}^n ||r(X_i + \epsilon_i) - X_i||^2
        \end{equation}
        where $\epsilon_i$ are i.i.d. centred Gaussians of variance $\sigma^2$.  
        We define the Denoising Score-Matching (DSM) loss as
        \begin{equation}
            L_{DSM}(s) = \mathbb{E}_{X \sim p_{\sigma^2}}[||s(X) - \nabla \log p_{\sigma^2}(X)||^2]
        \end{equation}
    \end{definition}
    For the entirety of the paper, we assume that $\nabla \log p$ is $\frac M2$-Lipschitz.  Regarding the Langevin process, we have
    \begin{definition}
        We define the Langevin diffusion started at some distribution $\mu_0$ as the (guaranteed unique by say \cite{Karatzas}) solution $W_{\sigma^2}(t)$ to
         \begin{align}
            d W_{\sigma^2}(t) = \nabla \log p_{\sigma^2}(W_{\sigma^2}(t)) d t + \sqrt{2} d B_t && W_{\sigma^2}(0) \sim \mu_0
        \end{align}
        where we drop the $\sigma^2$ when context allows. 
        If $\widehat{f}$ is an $\frac{M}{2}$-Lipschitz estimate of $\nabla \log p_{\sigma^2}$, then we let $\widehat{W}(t)$ be the unique solution to
        \begin{align}
            d \widehat{W}(t) = \widehat{f}\left(\widehat{W}(t)\right) d t + \sqrt{2} d B_t && \widehat{W}(0) \sim \mu_0
        \end{align}
        For a fixed small step length $\eta > 0$, we define
        \begin{equation}
            W_{k+1} = \eta \widehat{f}\left(W_k\right) + \sqrt{2 \eta} \xi_k
        \end{equation}
        where $\xi_k$ are i.i.d. standard Gaussians in $\mathbb{R}^d$.
    \end{definition}
    We make use of the following definition
    \begin{definition}
        For constants $m, b > 0$, we say that a vector field $f: \mathbb{R}^d \to \mathbb{R}^d$ is $(m,b)$-dissapitive if for all $x \in \mathbb{R}^d$, we have
        \begin{equation}
            \langle f(x), x \rangle \geq  m ||x||^2 - b
        \end{equation}
    \end{definition}
    We also introduce some of the notation related to the theory of empirical processes.  
    \begin{definition}
        Let $\mathcal{G}$ be a class of real valued functions on $\mathbb{R}^d$ and let $S = (X_1, \dots, X_n)$ be $n$ samples from $\mathbb{R}^d$.  We define the Rademacher average with respect to the sample as
        \begin{equation}
            \widehat{\mathfrak{R}}_n(\mathcal{G}, S) = \mathbb{E}_\varepsilon \left[\sup_{g \in \mathcal{G}} \frac 1n \sum_{i =1}^n \varepsilon_i g(X_i) \right]
        \end{equation}
        where $\varepsilon_i$ are i.i.d random variables independent of the $X_i$, taking values $\{\pm1\}$ with probability $\frac 12$ each and the expectation is conditional on the $X_i$.  We define the Rademacher complexity of the function class $\mathcal{G}$ as
        \begin{equation}
            \mathfrak{R}_n(\mathcal{G}) = \sup_{S \subset \left(\mathbb{R}^d\right)^n} \widehat{\mathfrak{R}}_n(\mathcal{G}, S).
        \end{equation}
        For a class of $\mathbb{R}^k$-valued functions $\mathcal{G}$, we denote by  $\mathfrak{R}_n(\mathcal{G})= \sum_{i=1}^k \mathfrak{R}_n(\mathcal{G}_i)$
        where $\mathcal{G}_i$ is the restriction of $\mathcal{G}$ to the $i$-th coordinate.
    \end{definition}
    We make use of the following assumptions on the population distribution:
    \begin{assumption}
        The density $p$ is positive everywhere on $\mathbb{R}^d$.
    \end{assumption}
     \begin{assumption}\label{cor1}
 	The vector field $\nabla \log p$ is $\frac M2$ Lipschitz for some $M > 0$ and there exist $\widetilde{\sigma}_{max}^2, M > 0$ such that for all $0 \leq \sigma^2 \leq \widetilde{\sigma}_{max}^2$, the vector field $\nabla \log \frac {p_{\sigma^2}}{p}$ is $\sigma^2 \frac M2$-Lipschitz and similarly for $\widehat{f}$.
	 \end{assumption}
    \begin{assumption}
        The vector fields $-\nabla \log p$ and $\widehat{f}$ are $(m,b)$-dissipative for some positive constants $m,b$.
    \end{assumption}
    The first two assumptions are standard and are used to ensure that there exists a unique strong solution to the Langevin diffusion.  The last assumption has seen increased use in recent work to bound the log-Sobolev constant of $p$ when $-\log p$ is not strongly convex, as in, for example, \cite{RakhlinRaginsky}.  Note that the third condition coupled with the fundamental theorem of calculus implies that $p$ is $\frac 2m$-sub-Gaussian, as per \Cref{subgaussian}.
    
    Regarding the initialization of the Langevin algorithm, i.e. the distribution of $W(0)$ in \Cref{eq:langevin}, we make the assumption
    \begin{assumption}\label{assumption4}
		The initialization of $W_0 = W(0)$ satisfies the condition that for some $\alpha \geq 2 M^2$ with $M$ the Lipschitz constant above that there is some $k_\alpha$ such that $\mathbb{E}\left[e^{\alpha ||W(0)||^2}\right] = k_\alpha < \infty$ as well as $KL(\mu_0 , p_{\sigma^2}) < \infty$.
    \end{assumption}
	Note that in practice $W(0)$ is often initialized by a Gaussian with sufficiently small variance, certainly satisfies the above assumption.  This technical presupposition is required in order to control Wasserstein distance by relative entropy as will be seen below.  Below, we always take $\alpha < m$ when doing computations; the $\alpha = 2 M^2$ case is only to allow us to apply Girsanov's theorem and thus the fact that $k_\alpha < \infty$ is sufficient.
\section{Estimating the Score}\label{sec:estim}
    Our first result connects the DAE to Denoising Score-Matching (DSM), showing that the objectives are equivalent up to an affine transformation, a variant on a result in \cite{Vincent}; while \cite{Vincent} proves the following for a particular DAE paramaterization, we show that the below is true in general:
    \begin{proposition}\label{prop1}
        Let $p$ be a differentiable density.  Then the DAE loss
        \begin{equation}
            L_{DAE}(r) = \mathbb{E}_{x \sim p} \mathbb{E}_{\epsilon \sim g_{\sigma^2}} \left[ || r(x + \epsilon) - x ||^2 \right]
        \end{equation}
        and the DSM loss
        \begin{equation}
            L_{DSM}(s) = \mathbb{E}_{p_{\sigma^2}} \left[||s(x) - \nabla \log p_{\sigma^2}(x)||^2\right]
        \end{equation}
        with
        \begin{equation}
            s(x) = \frac{r(x) - x}{\sigma^2}
        \end{equation}
        are equivalent up to a term that does not depend on $r$ or $s$.
    \end{proposition}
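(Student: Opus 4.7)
The plan is to expand the squared norm inside $L_{DAE}$ around $y := x+\epsilon$, handle the resulting cross-term via Tweedie's identity for the conditional mean, and then compare to the similarly expanded $L_{DSM}$. Concretely, writing $r(x+\epsilon) - x = \bigl(r(y) - y\bigr) + (y - x)$ and squaring gives
\begin{equation*}
L_{DAE}(r) = \mathbb{E}\|r(y) - y\|^2 + 2\,\mathbb{E}\langle r(y) - y,\; y - x\rangle + \mathbb{E}\|y - x\|^2,
\end{equation*}
where the expectation is over $x \sim p$, $\epsilon \sim g_{\sigma^2}$, and $y = x + \epsilon$. The last term equals $d\sigma^2$ and is independent of $r$.

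The key step is to reduce the cross-term by conditioning on $y$. The posterior density of $x$ given $y$ is $p(x)\, g_{\sigma^2}(y-x)/p_{\sigma^2}(y)$, so conditioning gives $\mathbb{E}\langle r(y) - y, y - x\rangle = \mathbb{E}\langle r(y) - y, \mathbb{E}[y - x \mid y]\rangle$. I would then establish Tweedie's identity
\begin{equation*}
\mathbb{E}[y - x \mid y] \;=\; -\sigma^2\, \nabla \log p_{\sigma^2}(y),
\end{equation*}
by differentiating $p_{\sigma^2}(y) = \int p(x)\,g_{\sigma^2}(y-x)\,dx$ under the integral sign, using $\nabla_y g_{\sigma^2}(y-x) = -\sigma^{-2}(y-x)\, g_{\sigma^2}(y-x)$. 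This is the one analytic (rather than algebraic) step: the differentiation under the integral requires interchange justification via dominated convergence, which follows from the Gaussian tails of $g_{\sigma^2}$ together with the mild regularity assumed on $p$. Substituting back yields
\begin{equation*}
L_{DAE}(r) \;=\; \mathbb{E}\|r(y)-y\|^2 \;-\; 2\sigma^2\,\mathbb{E}\langle r(y)-y,\; \nabla \log p_{\sigma^2}(y)\rangle \;+\; d\sigma^2.
\end{equation*}

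Finally, I would expand the DSM loss with $s(y) = \sigma^{-2}(r(y) - y)$, noting $y \sim p_{\sigma^2}$ by construction:
\begin{equation*}
L_{DSM}(s) \;=\; \tfrac{1}{\sigma^4}\,\mathbb{E}\|r(y)-y\|^2 \;-\; \tfrac{2}{\sigma^2}\,\mathbb{E}\langle r(y)-y,\; \nabla \log p_{\sigma^2}(y)\rangle \;+\; \mathbb{E}\|\nabla \log p_{\sigma^2}(y)\|^2.
\end{equation*}
The first two terms match those of $L_{DAE}(r)$ exactly after multiplying by $\sigma^4$, while the last expectation depends only on $p_{\sigma^2}$ and not on $s$. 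Therefore $L_{DAE}(r) = \sigma^4\, L_{DSM}(s) + C$ for a constant $C$ that depends only on $p$, $d$, and $\sigma^2$, which is the stated equivalence. The only obstacle worth flagging is the justification of Tweedie's formula under the proposition's bare hypotheses; everything else is bookkeeping from the polarization identity.
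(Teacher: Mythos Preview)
Your argument is correct and follows the same overall structure as the paper's proof: expand the square, isolate the cross-term, show it equals $-2\sigma^2\,\mathbb{E}_{y\sim p_{\sigma^2}}\langle r(y)-y,\nabla\log p_{\sigma^2}(y)\rangle$, and then match against the expanded $L_{DSM}$. The one substantive difference is how you treat the cross-term. You condition on $y$ and invoke Tweedie's identity $\mathbb{E}[y-x\mid y]=-\sigma^2\nabla\log p_{\sigma^2}(y)$, obtained by differentiating $p_{\sigma^2}(y)=\int p(x)g_{\sigma^2}(y-x)\,dx$ under the integral. The paper instead applies the Gaussian Stein identity to the $\epsilon$-integral to convert $\langle\epsilon,r(y)-y\rangle$ into a divergence of $r-\mathrm{id}$, and then integrates by parts (the divergence theorem) against $p$ to move the derivative onto $p_{\sigma^2}$. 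Your route is shorter and avoids putting any derivative on $r$, so it requires no smoothness of $r$; the paper's route trades that for two standard named identities and an explicit appeal to decay of $p_{\sigma^2}$ at infinity. Both hinge on the same interchange of differentiation and integration, which you correctly flag as the only analytic step needing justification.
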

    The proof of \Cref{prop1} is a simple application of the divergence theorem and Stein's lemma (\Cref{steinlemma}) and can be found in \Cref{app1}. \par
    An easy corollary of \Cref{prop1} is similar to a result in \cite{Alain}, which relates the population DAE to the score of the population distribution.  Instead, we consider the score of the population distribution smoothed by the addition of Gaussian noise; obviously, without knowledge of $p$, the DSM loss cannot be explicitly evaluated, so this equivalence allows for a loss that can be evaluated in practice.  Below, we establish \Cref{firstcor} using \Cref{prop1}; an alternate, direct proof, is included in \Cref{app1} for those interested.
    \begin{corollary}\label{firstcor}
		Let $p$ be a population density with respect to the Lebesgue measure.  Let $r_{\sigma^2}(x)$ denote the optimal DAE with Gaussian noise of variance $\sigma^2$.  Then
		\begin{equation}
			r_{\sigma^2}(x) = x + \sigma^2 \nabla \log p_{\sigma^2}(x)
		\end{equation}
	\end{corollary}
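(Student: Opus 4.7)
The plan is to obtain the corollary as a direct consequence of \Cref{prop1} by minimizing the DSM loss pointwise. Concretely, \Cref{prop1} gives the identity
\begin{equation*}
L_{DAE}(r) = L_{DSM}(s) + C, \qquad s(x) = \frac{r(x)-x}{\sigma^2},
\end{equation*}
where $C$ does not depend on $r$ (equivalently, on $s$). The map $r \mapsto s$ is an affine bijection between vector fields, so minimizing $L_{DAE}$ over $r$ is equivalent to minimizing $L_{DSM}$ over $s$.

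Next, I would identify the minimizer of $L_{DSM}$. Since $L_{DSM}(s) = \mathbb{E}_{X \sim p_{\sigma^2}}[\|s(X) - \nabla \log p_{\sigma^2}(X)\|^2]$ is a (weighted) $L^2$ distance between $s$ and the fixed vector field $\nabla \log p_{\sigma^2}$, the pointwise minimizer is $s^\ast(x) = \nabla \log p_{\sigma^2}(x)$ on the support of $p_{\sigma^2}$. The convolution $p_{\sigma^2} = p \ast g_{\sigma^2}$ is strictly positive on all of $\mathbb{R}^d$ (the Gaussian kernel has full support), so $s^\ast$ is well-defined everywhere and is the unique minimizer up to modifications on a null set. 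Solving the relation $s^\ast(x) = (r_{\sigma^2}(x) - x)/\sigma^2$ for $r_{\sigma^2}$ yields
\begin{equation*}
r_{\sigma^2}(x) = x + \sigma^2 \nabla \log p_{\sigma^2}(x),
\end{equation*}
which is the claim.

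There is essentially no technical obstacle beyond verifying the hypotheses of \Cref{prop1} (namely, that $p$ is differentiable, which holds since $p_{\sigma^2}$ is smooth via the Gaussian mollification—though the statement of the proposition requires the raw $p$ to be differentiable, I would simply invoke the standing assumptions on the score of $p$). The only subtlety worth a sentence in the write-up is the full-support argument for $p_{\sigma^2}$, ensuring that the pointwise identity $s^\ast = \nabla \log p_{\sigma^2}$ really does determine the unique optimizer rather than leaving it undefined on a region of measure zero.
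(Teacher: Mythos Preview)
Your proposal is correct and follows essentially the same route as the paper's own proof: invoke \Cref{prop1} to reduce minimization of $L_{DAE}$ over $r$ to minimization of $L_{DSM}$ over $s$, observe that $s^\ast=\nabla\log p_{\sigma^2}$ is the obvious minimizer, and then invert the affine change of variables. Your added remarks on full support and uniqueness are fine elaborations but not needed for the paper's (two-line) argument.
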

	\begin{proof}
	    Clearly $s(x) = \nabla \log p_{\sigma^2}(x)$ minimizes the $L_{DSM}$ loss.  Note that, by \Cref{prop1}, we have that $r(x) = x + \sigma^2 s(x)$ then minimizes $L_{DAE}$, the DAE loss.  The result follows.
	\end{proof}
	Later, in our analysis of the Langevin dynamics, we will need to assume dissipative conditions not just on the score of the population distribution $p$, but also on the score of the smoothed distribution $p_{\sigma^2}$; in particular, we will need to establish a log-Sobolev inequality for the smoothed distribution.  For reasons to become clear below, we define
	\begin{equation}
	    \sigma_{max}^2 = \frac{m}{2 M} \wedge \widetilde{\sigma}_{\max}^2
	\end{equation}
	where $\widetilde{\sigma}_{max}^2$ is as appears in \Cref{cor1}.  Then we have
	 \begin{proposition}\label{cor2}
	    Suppose that $-\nabla \log p$ is $\frac M2$-Lipschitz and $(m,b)$-dissipative.  Then for $\sigma^2 \leq \sigma_{\max}^2$, there is a constant $B$ such that $-\nabla \log p_{\sigma^2}$ is $(m_{\sigma^2}, b_{\sigma^2})$-dissipative where
	    \begin{align}
	        m_{\sigma^2} = \frac{m - \sigma^2 M}{2} && b_{\sigma^2} = b + \frac{B^2}{2(m - \sigma^2 M)}
	    \end{align}
	   \end{proposition}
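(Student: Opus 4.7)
The plan is to reduce dissipativity of $-\nabla\log p_{\sigma^2}$ to dissipativity of $-\nabla\log p$ by treating the difference as a controlled perturbation, then absorb linear errors via Young's inequality.

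First, I would write the exact decomposition
\[
	-\nabla \log p_{\sigma^2}(x) \;=\; -\nabla \log p(x) \;+\; h(x),\qquad h(x) := -\nabla \log\frac{p_{\sigma^2}}{p}(x),
\]
valid because $p$ is positive everywhere (Assumption 1) and hence $\log(p_{\sigma^2}/p)$ is well defined. By Assumption~\ref{cor1}, since $\sigma^2\le\widetilde\sigma_{\max}^2$, the perturbation $h$ is $\sigma^2 M/2$-Lipschitz, so
\[
	\|h(x)\| \;\le\; \|h(0)\| + \tfrac{\sigma^2 M}{2}\|x\|
	\;=\; B + \tfrac{\sigma^2 M}{2}\|x\|,
\]
where I set $B := \|h(0)\|$, which is finite and independent of $x$. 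This is the constant referenced in the statement.

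Next, I would take the inner product with $x$ and use the $(m,b)$-dissipativity of $-\nabla\log p$ together with Cauchy--Schwarz applied to $\langle h(x),x\rangle$:
\[
	\langle -\nabla\log p_{\sigma^2}(x),x\rangle
	\;\ge\; m\|x\|^2 - b - \|h(x)\|\,\|x\|
	\;\ge\; \Bigl(m - \tfrac{\sigma^2 M}{2}\Bigr)\|x\|^2 - b - B\|x\|.
\]
To handle the cross term $B\|x\|$, apply Young's inequality with parameter $\lambda = m - \sigma^2 M$, which is positive because $\sigma^2 \le \sigma_{\max}^2 \le m/(2M)$:
\[
	B\|x\| \;\le\; \tfrac{\lambda}{2}\|x\|^2 + \tfrac{B^2}{2\lambda}
	\;=\; \tfrac{m-\sigma^2 M}{2}\|x\|^2 + \tfrac{B^2}{2(m-\sigma^2 M)}.
\]
Plugging this in gives
\[
	\langle -\nabla\log p_{\sigma^2}(x),x\rangle
	\;\ge\; \Bigl(m - \tfrac{\sigma^2 M}{2} - \tfrac{m-\sigma^2 M}{2}\Bigr)\|x\|^2 - b - \tfrac{B^2}{2(m-\sigma^2 M)}
	\;=\; \tfrac{m}{2}\|x\|^2 - b_{\sigma^2},
\]
which in particular is at least $m_{\sigma^2}\|x\|^2 - b_{\sigma^2}$ with $m_{\sigma^2} = (m-\sigma^2 M)/2$, as claimed.

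There is no real obstacle here, only two bookkeeping points to get right: (i) the hypothesis on $p_{\sigma^2}/p$ in Assumption~\ref{cor1} is exactly what prevents the Lipschitz perturbation of the score from destroying dissipativity, since the perturbation's Lipschitz constant shrinks with $\sigma^2$ and is controlled by $m$ via the definition of $\sigma_{\max}^2$; and (ii) the Young parameter must be chosen as $m-\sigma^2 M$ to produce precisely the constant $B^2/(2(m-\sigma^2 M))$ in $b_{\sigma^2}$. The restriction $\sigma^2 \le m/(2M)$ is what ensures $m_{\sigma^2} > 0$ so that the resulting dissipativity is nontrivial.
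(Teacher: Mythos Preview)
Your proof is correct and follows essentially the same approach as the paper: decompose the smoothed score into the original score plus the perturbation $\nabla\log(p_{\sigma^2}/p)$, use Assumption~\ref{cor1} to get a linear growth bound on the perturbation, apply Cauchy--Schwarz together with $(m,b)$-dissipativity, and absorb the cross term $B\|x\|$ into the quadratic via Young's inequality (the paper completes the square, which is the same computation). The only cosmetic difference is that you keep the sharper linear growth slope $\tfrac{\sigma^2 M}{2}$ from the Lipschitz bound directly, whereas the paper invokes \Cref{growth} and works with $\sigma^2 M$; consequently you obtain the slightly stronger coefficient $m/2$ before weakening to $m_{\sigma^2}=(m-\sigma^2 M)/2$, but the argument is otherwise identical.
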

	   \begin{remark}
	        Note that as $\sigma_{max}^2 \leq \frac{m}{2M}$, we can bound the dissapitivity constants to say that $-\nabla \log p_{\sigma^2}$ is $\left(\frac m4, b + \frac{m B^2}{4 M^2} \right)$-dissipative uniformly in $\sigma^2 \leq \sigma_{max}^2$.
	   \end{remark}
	   The proposition follows from applying Cauchy-Schwarz and the definition of dissipativity, along with \Cref{cor1}; details are provided in \Cref{app1}.  Both \Cref{cor1} and \Cref{cor2} are necessary for the analysis in \Cref{sec:langevin}, the first to show the existence of the Langevin diffusion and the second to show exponential convergence to the stationary distribution. \par
	   The above analysis deals with the population risk, but in reality we are only given $n$ i.i.d samples from $p$.  We have the following result:
	   \begin{proposition}\label{thm2}
	        Let $\mathcal{F}$ be a class of $\mathbb{R}^d$-valued  functions, all of which are $\frac M2$-Lipschitz, bounded coordinate wise by $R > 0$, containing arbitrarily good uniform approximations of $\nabla \log p_{\sigma^2}$ on the ball of radius $R$.  Let $\sigma^2 < \sigma_{max}^2$ and suppose we have $n$ i.i.d samples from $p_{\sigma^2}$, $X_1, \dots, X_n$.  Let
	        \begin{equation}
	            \widehat{s} \in \argmin_{s \in \mathcal{F}} \frac 1n \sum_{i = 1}^n \left|\left| s(X_i) - \nabla \log p_{\sigma^2}(X_i) \right|\right|^2
	        \end{equation}
	        Then with probability at least $1 - 4\delta - Cn e^{- \frac{R^2}{m_{\sigma^2}}}$ on the randomness due to the sample,
	        \begin{equation}
	            \mathbb{E}_{X \sim p_{\sigma^2}} \left[ || \widehat{s}(X) - \nabla \log p_{\sigma^2}(X)||^2\right] \leq C (MR + B)^2 (\log^3 n \cdot \mathfrak{R}_n^2(\mathcal{F}) + \beta_n d)
	        \end{equation}
	        where $C$ is a universal constant, $m_{\sigma^2}$ can be found in \Cref{cor2}, $B$ is from \Cref{growth}, and
	        \begin{equation}
	            \beta_n = \frac{\log \frac 1\delta + \log\log n}{n}
	        \end{equation}
	   \end{proposition}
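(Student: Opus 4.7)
The plan is to combine a sub-Gaussian truncation of the sample, standard symmetrization and vector contraction, and a localized fast-rate argument that exploits the self-bounding structure of the squared $L_{DSM}$ loss.

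First I would reduce to a bounded regime via truncation. By \Cref{cor2}, $-\nabla\log p_{\sigma^2}$ is $(m_{\sigma^2},b_{\sigma^2})$-dissipative, and coupling this with \Cref{subgaussian} shows that $p_{\sigma^2}$ has sub-Gaussian tails with parameter determined by $m_{\sigma^2}$. A union bound over the $n$ samples then implies, outside a failure event of probability at most $Cne^{-R^2/m_{\sigma^2}}$, that every $X_i$ lies in the ball of radius $R$. On this good event each $s\in\mathcal F$ is bounded coordinate-wise by $R$ and, via \Cref{growth}, the target satisfies $\|\nabla\log p_{\sigma^2}(x)\|\le \tfrac{M}{2}R+B$, so the pointwise loss $\ell(s,x)=\|s(x)-\nabla\log p_{\sigma^2}(x)\|^2$ is uniformly bounded by $O((MR+B)^2)$ on the sample.

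Next, because $\mathcal F$ contains arbitrarily good uniform approximations of $\nabla\log p_{\sigma^2}$ on the ball of radius $R$, one has $\inf_{s\in\mathcal F}L_{DSM}(s)=0$, and controlling $L_{DSM}(\widehat s)$ reduces to a one-sided uniform deviation $\sup_{s\in\mathcal F}\bigl(L_{DSM}(s)-\widehat L_{DSM}(s)\bigr)$. Standard symmetrization together with the Ledoux--Talagrand vector-contraction inequality, applied coordinate-wise in line with the definition of $\mathfrak R_n$ for $\mathbb R^d$-valued classes, bounds the Rademacher average of the loss class by $O(MR+B)\cdot \mathfrak R_n(\mathcal F)$.

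To upgrade this slow bound to the squared-Rademacher rate in the statement, I would invoke a localized Rademacher / Bartlett--Bousquet--Mendelson argument. The self-bounding estimate $\mathrm{Var}(\ell(s,X))\le C(MR+B)^2 L_{DSM}(s)$, which holds on the truncated event because the squared loss is $O(MR+B)$-Lipschitz on the bounded domain, allows Talagrand's concentration inequality combined with peeling over geometric shells $\{s:L_{DSM}(s)\in[2^{-k-1}r,2^{-k}r]\}$ to yield a sub-root fixed-point equation of the form $r\asymp (MR+B)\mathfrak R_n(\mathcal F)\sqrt{r} + (MR+B)^2\beta_n d$; solving it returns the claimed $(MR+B)^2\bigl(\mathfrak R_n^2(\mathcal F)+\beta_n d\bigr)$ bound, and the $\log^3 n$ factor absorbs the union bounds over the $O(\log n)$ peeling levels and the standard iterated-truncation logarithmic loss inside Talagrand's inequality. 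The main obstacle I anticipate is precisely this localization step: executing the peeling/fixed-point argument cleanly while tracking the coordinate-wise Rademacher decomposition and keeping the logarithmic overhead confined to $\log^3 n$. The truncation and contraction reductions are, by comparison, routine.
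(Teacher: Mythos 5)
Your proposal is correct and shares the paper's two initial reductions exactly: the sub-Gaussian truncation to a ball of radius $R$ (via \Cref{subgaussian} and a union bound, giving the $Cne^{-R^2/m_{\sigma^2}}$ failure term), and the observation that the model is well-specified on the truncated event so that the empirical risk of the ERM vanishes. Where you diverge is in the localization machinery. The paper does not run a Bartlett--Bousquet--Mendelson fixed-point/peeling argument; instead it breaks the excess risk into $d$ coordinate functionals and directly invokes \cite[Lemmas 8, 9]{Rakhlin2017}, which package a localized two-sided concentration ("lower isometry") statement for squared-loss classes in $[0,1]$ and already carry the $\log^3 n \cdot \mathfrak{R}_n^2(\mathcal{F})$ localization radius inside. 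Summing over coordinates is what produces the $\beta_n d$ term. Your route — vector contraction to bound the Rademacher complexity of the loss class, the Bernstein-type variance bound $\mathrm{Var}(\ell(s,X)) \lesssim (MR+B)^2 L_{DSM}(s)$, Talagrand's inequality, and peeling to solve the sub-root fixed-point equation — would yield the same rate, but you would have to carry the coordinate-wise decomposition through the whole argument yourself to land on the $\beta_n d$ term, and you would have to verify that your peeling and truncation logs indeed stay within $\log^3 n$ (the exact power of $\log n$ is inherited from the Rakhlin--Sridharan chaining argument in the paper, and a hand-rolled BBM argument may give a different, possibly smaller, power). In short: identical reductions, but the paper achieves the fast rate by citing a ready-made offset-process lemma where you would re-derive an equivalent statement from Talagrand plus peeling; the paper's route is more modular and shorter, yours is more self-contained.
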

   		\begin{remark}
			Note while $B$ can be taken to be a bound on the norm of $\nabla \log p$ at the origin, in reality, by replacing $R$ with $2R$ we can take $B$ to be the infimum of $||\nabla \log p$ on the ball of radius $R$ about the origin.  By dissapitivity, if $R$ is sufficiently large, then this constant is small.
   		\end{remark}
	   \begin{remark}
	        In order to have a high probability estimate, we need $C n e^{- \frac{R^2}{m_{\sigma^2}}}$ to be small; as such we see that $R^2 = \Omega(\log n)$.  Thus, up to factors polynomial in $\log n$, we see that the $L^2$ error of the estimate is $\widetilde{O}(\mathfrak{R}_n^2(\mathcal{F}))$.
	   \end{remark}
	   From the equivalence between DSM and DAE loss, we have as an immediate corollary
	   \begin{proposition}\label{cor4}
	        Suppose we are in the setting of \Cref{thm2}.  Let
	        \begin{equation}
	        \label{eq:emp_fit_dae}
	            \widehat{r} \in \argmin_{r \in \mathcal{F}} \frac 1n \sum_{i = 1}^n ||r(X_i + \sigma \xi_i) - X_i||^2
	        \end{equation}
	        where $\xi_i$ are iid standard Gaussians. Then with probability at least $1 - 4\delta - C n e^{- \frac{R^2}{m_{\sigma^2}}}$
	        \begin{equation}
	            \mathbb{E}_{X \sim p_{\sigma^2}} \left[ \left|\left| \frac{(\hat{r}(X) - X)}{\sigma^2}-\nabla \log p_{\sigma^2}(X)\right|\right|^2 \right] \leq \frac{1}{\sigma^4} C (MR + B)^2   (\log^3 n \cdot \mathfrak{R}_n^2(\mathcal{F}) + \beta_n d)
	        \end{equation}
	        where the notation is as in \Cref{thm2}.
	   \end{proposition}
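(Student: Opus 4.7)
The plan is to reduce to Proposition \ref{thm2} by passing through the DAE/DSM equivalence of Proposition \ref{prop1} and then rerunning the same empirical-process argument on the DAE loss. The $\sigma^{-4}$ prefactor in the conclusion appears for a direct reason: by Proposition \ref{prop1}, for $s(x) = (r(x) - x)/\sigma^2$ one has $L_{DAE}(r) = \sigma^4 L_{DSM}(s) + C(p, \sigma^2)$, and by Corollary \ref{firstcor} the population DAE minimizer $r^*$ satisfies $L_{DAE}(r^*) = C(p, \sigma^2)$. Hence
$$\mathbb{E}_{X \sim p_{\sigma^2}} \Bigl[\bigl\|\tfrac{\widehat r(X) - X}{\sigma^2} - \nabla \log p_{\sigma^2}(X)\bigr\|^2 \Bigr] \;=\; \sigma^{-4} \bigl( L_{DAE}(\widehat r) - L_{DAE}(r^*) \bigr),$$
and it suffices to bound the excess DAE population risk of $\widehat r$ and divide by $\sigma^4$.

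I would then repeat the ERM analysis used in Proposition \ref{thm2}, but with the empirical DAE loss $\widehat L(r)$ in place of the empirical DSM loss. Pick $r^{\mathcal F} \in \mathcal F$ to be a uniform approximant on the ball of radius $R$ (which corresponds, after the affine transformation $r \mapsto (r - \mathrm{id})/\sigma^2$, to a uniform approximant of $\nabla \log p_{\sigma^2}$ as guaranteed by the assumption on $\mathcal F$). Decompose $L_{DAE}(\widehat r) - L_{DAE}(r^*)$ into a uniform-deviation term $2 \sup_{r \in \mathcal F} |\widehat L(r) - L_{DAE}(r)|$ plus a vanishing approximation term. Truncate to the ball of radius $R$ at cost $Cn e^{-R^2/m_{\sigma^2}}$ in probability, using the sub-Gaussian tail of $p_{\sigma^2}$ that comes from dissipativity (Proposition \ref{cor2}). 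On the truncated region the per-sample loss $\|r(Y_i) - X_i\|^2$ is Lipschitz in $r$ with constant $O(MR + B)$, so symmetrization and vector-valued contraction yield a Rademacher bound of order $\log^3 n \cdot \mathfrak R_n^2(\mathcal F)$, and a Talagrand/Bernstein concentration step supplies the $\beta_n d$ term, exactly as in Proposition \ref{thm2}.

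The main subtlety, and the step I would watch most carefully, is that the empirical DAE and DSM losses are not identical in finite samples: expanding $\|r(Y_i) - X_i\|^2 = \sigma^4 \|s(Y_i)\|^2 - 2 \sigma^3 \langle s(Y_i), \xi_i\rangle + \sigma^2 \|\xi_i\|^2$ with $Y_i = X_i + \sigma \xi_i$ shows that the DAE empirical loss differs from the DSM empirical loss by a cross term involving the unobservable noise $\xi_i$. By Tweedie's identity, $\mathbb E[\xi_i \mid Y_i] = -\sigma \nabla \log p_{\sigma^2}(Y_i)$, so this cross term has the same conditional mean as the DSM cross term and, once symmetrized, is absorbed into the same Rademacher complexity bound without enlarging the effective class. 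With this observation in hand, the rest of the argument is a line-by-line transcription of the proof of Proposition \ref{thm2}, and dividing the resulting excess-risk bound by $\sigma^4$ yields the stated conclusion.
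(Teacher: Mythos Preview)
Your opening reduction is exactly what the paper does: write
\[
\mathbb{E}_{X \sim p_{\sigma^2}}\!\Bigl[\bigl\|\tfrac{\widehat r(X)-X}{\sigma^2}-\nabla\log p_{\sigma^2}(X)\bigr\|^2\Bigr]
= \sigma^{-4}\,\mathbb{E}_{X \sim p_{\sigma^2}}\!\bigl[\|\widehat r(X)-r^*(X)\|^2\bigr],
\]
with $r^*(x)=x+\sigma^2\nabla\log p_{\sigma^2}(x)$, bound the right-hand expectation, and divide by $\sigma^4$.

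The gap is in your second paragraph. The decomposition
\[
L_{DAE}(\widehat r)-L_{DAE}(r^*)\;\le\;2\sup_{r\in\mathcal F}\bigl|\widehat L(r)-L_{DAE}(r)\bigr|+(\text{approximation})
\]
is the \emph{slow-rate} route: symmetrization and contraction applied to that uniform deviation give a bound of order $(MR+B)\,\mathfrak R_n(\mathcal F)$, linear in the Rademacher complexity, not the squared rate $\mathfrak R_n^2(\mathcal F)$ that the statement asserts. The $\log^3 n\cdot\mathfrak R_n^2(\mathcal F)$ term in \Cref{thm2} does not come from bounding a sup-deviation; it comes from the localization / lower-isometry machinery (\Cref{lem:rak1} and \Cref{lem:rak2}), which compares $\mathbb{E}[(f-f')^2]$ directly to its empirical version plus a localization radius. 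Your sentence ``symmetrization and vector-valued contraction yield a Rademacher bound of order $\log^3 n\cdot\mathfrak R_n^2(\mathcal F)$'' conflates these two mechanisms; as written the argument would only deliver $\mathfrak R_n(\mathcal F)$.

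The paper's proof is correspondingly shorter and does not pass through the excess DAE population risk at all. It simply invokes the \emph{identical} lower-isometry argument used for \Cref{thm2} on the quantity $\mathbb{E}_{p_{\sigma^2}}[\|\widehat r - r^*\|^2]$ (coordinate-wise, with the same truncation to the ball of radius $R$), and then divides by $\sigma^4$. In particular the cross-term issue you analyze in your third paragraph --- the difference between the empirical DAE and DSM losses via the $\langle s(Y_i),\xi_i\rangle$ term --- never enters the paper's argument, because the paper never compares those two empirical objectives. Your Tweedie observation is correct and is exactly the kind of thing one would need to make a careful noisy-regression version of the localization step fully rigorous, but it is orthogonal to the paper's stated proof.
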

	   \begin{proof}
	        Let $r^*(x) = x + \sigma^2 \nabla \log p_{\sigma^2}(x)$ be the population optimal DAE. 
    	    Using the identical analysis as in \Cref{thm2}, we get that
    	    \begin{equation}
    	        \mathbb{E}_{X \sim p_{\sigma^2}} \left[\left|\left| \widehat{r}(X) - X - \sigma^2 \nabla \log p_{\sigma^2}(X) \right|\right|^2 \right] \leq C (MR + B)^2   (\log^3 n \cdot \mathfrak{R}_n^2(\mathcal{F}) + \beta_n d)
    	    \end{equation}
    	    Dividing by $\sigma^4 > 0$ on both sides of the above inequality concludes the proof.
        \end{proof}
        The proof of \Cref{thm2} is an application of a result on lower isometry in \cite{Rakhlin2017}.  We provide a sketch below, with full details appearing in \Cref{app:empirical}.
        \begin{proof}(Sketch of \Cref{thm2})
            In order to apply the desired result from empirical processes theory, we require our function class to be bounded; while this does not hold, we can provide bounds in high probability.  By the fundamental theorem of calculus and the dissipativity assumption, we have that $p$ has Gaussian tails, as proved in \Cref{subgaussian}.  Thus with probability at least $1 - Cn e^{- \frac{2R^2}{m_{\sigma^2}}}$ all the samples fall in a ball of radius $R$, on which $\mathcal{F}$ is certainly bounded.  Then the minimizer of the population DSM loss is given by $\nabla \log p_{\sigma^2}$ and we are in the well-specified case.  Breaking the error up coordinate-wise, we bound the squared error by the product of the dimension and the largest coordinate-wise error.  Applying \cite[Lemmas 8, 9]{Rakhlin2017} to this coordinate concludes the proof.
        \end{proof}
        
        The generality of \Cref{thm2} with regard to the function class is potentially helpful in fine-grained analysis of how DAEs are used in practice: usually $\mathcal{F}$ is a class of neural networks; combining known results on the complexity of such classes with \Cref{cor3} gives high-probability bounds on the DAE error.  If we consider the special case where $\mathcal{F}$ is the class of Lipschitz functions bounded by $R$ in each coordinate, then we can apply known results on the complexity of this class \cite[Theorem XIII]{Tikhomirov1993} to get a rate of $\widetilde{O}\left( n^{-\frac 2d}\right)$, ignoring factors polynomial in $\log n$. On the other hand, norm-based bounds for Rademacher averages of neural networks in \cite{bartlett2017spectrally,golowich2017size,neyshabur2015norm} can imply the faster $n^{-1}$ rate, as long as the empirical error in \Cref{eq:emp_fit_dae} is small.
\section{The Langevin Process: Approximation and Convergence}\label{sec:langevin}
    In this section, we analyze one section of the homotopy method described above.  As such, we fix an $\eta$ and a $\sigma^2$ and bound $\mathcal{W}_2\left(\mu_k, p\right)$, where we recall that $\mu_k$ is the law of $W_k$, the $k^{th}$ iterate in the discrete Langevin sampling scheme described in the introduction.  We have the following theorem:
    \begin{theorem}\label{thm1}
        Let $d \geq 3$ and suppose that $-\nabla \log p$ is $\frac M2$-Lipschitz and $(m,b)$-dissipative.  Suppose that $\widehat{f}$ is an estimate of $\nabla \log p_{\sigma^2}$ that is also $\frac M2$-Lipschitz and $(m,b)$-dissipative and whose squared error is bounded by $\varepsilon^2$:
        \begin{equation}\label{eq:thm1}
            \mathbb{E}_{X \sim p_{\sigma^2}} \left[\left|\left| \widehat{f}(X) - \nabla \log p_{\sigma^2}(X)\right|\right|^2 \right] \leq \varepsilon^2
        \end{equation}
        If we initialize $W_0$ according to a distribution $\mu_0$ such that $\mathcal{W}_2 \left(\mu_0, p_{\sigma^2}\right) < \infty$ and $\mu_0$ is concentrated in high probability on a ball of radius $R$ and satisfying Assumption \ref{assumption4} for some $\alpha, k_{\alpha}$, and we let $\mu_k$ denote the law of a discrete Langevin sampling scheme with constant step size $\eta$ run for $k$ iterations with score estimate $\widehat{f}$, then, for sufficiently small $\varepsilon > 0$ with $\tau = k \eta$, we have
        \begin{align}\label{eq3}
            &\mathcal{W}_2\left(\mu_k, p\right) \leq \sigma \sqrt{d} + A_{M,d,B}(\eta,\tau) + \sqrt{c_{LS}(\sigma^2) \cdot KL\left( \mu_0|| p_{\sigma^2}\right)} \cdot e^{-  \frac{\tau}{c_{LS}(\sigma^2)}} + C_{\sigma^2, M, R, B, d}(\tau, \varepsilon)
        \end{align}
        where $c_{LS}(\sigma^2)$ can be found in \Cref{prop4},
        \begin{align}
            A_{M,d,B}(\eta, \tau) = C \sqrt{d \eta \tau}e^{\frac{M^2}{2} \tau}
        \end{align}
        and
        \begin{equation}
            C_{\sigma^2, M, R, B, d, \alpha}(\tau, \varepsilon) = C \sqrt{(b + d) \tau} \left(\varepsilon \tau + C ||p_{\sigma^2}||_\infty^{\frac 12 - \frac 1d} e^{\frac{M \sqrt{d}}{4} \tau} \sqrt{\tau} \varepsilon^{\frac 1d}\right)^{\frac 14}
        \end{equation}
        where $B$ is a constant from \Cref{growth} and $C$ depends on $M, B, m, b, \alpha, k_{\alpha}$ and $\mathbb{E}\left[||W(0)||^4\right]$ in both expressions above, with explicit dependence found in \Cref{app1} and \Cref{app3} respectively.
    \end{theorem}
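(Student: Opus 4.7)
The plan is to split $\mathcal{W}_2(\mu_k,p)$ by the triangle inequality into four pieces, each of which will match one summand on the right--hand side of \Cref{eq3}. Introducing the law $\nu_\tau$ of the continuous-time Langevin $W_{\sigma^2}(\tau)$ driven by the true score $\nabla\log p_{\sigma^2}$ and the law $\widehat{\nu}_\tau$ of the continuous-time process $\widehat{W}(\tau)$ driven by $\widehat{f}$, both started at $\mu_0$, I would write
\begin{equation*}
\mathcal{W}_2(\mu_k,p) \leq \mathcal{W}_2(\mu_k,\widehat{\nu}_\tau) + \mathcal{W}_2(\widehat{\nu}_\tau,\nu_\tau) + \mathcal{W}_2(\nu_\tau,p_{\sigma^2}) + \mathcal{W}_2(p_{\sigma^2},p),
\end{equation*}
and control each of the four terms separately.

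Three of these admit essentially standard treatment. The explicit coupling $(X,X+\sigma Z)$ with $X\sim p$ and $Z\sim\mathcal{N}(0,I_d)$ independent immediately yields $\mathcal{W}_2(p_{\sigma^2},p)\leq\sigma\sqrt{d}$, giving the first summand. For $\mathcal{W}_2(\nu_\tau,p_{\sigma^2})$, \Cref{cor2} supplies uniform dissipativity of $-\nabla\log p_{\sigma^2}$, which in turn provides a log-Sobolev inequality for $p_{\sigma^2}$ with constant $c_{LS}(\sigma^2)$ (the constant referenced in \Cref{prop4}); the Otto--Villani framework then yields the exponential $\mathcal{W}_2$-contraction $\mathcal{W}_2(\nu_\tau,p_{\sigma^2})\leq\mathcal{W}_2(\mu_0,p_{\sigma^2})\,e^{-2\tau/c_{LS}(\sigma^2)}$. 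For the discretization error $\mathcal{W}_2(\mu_k,\widehat{\nu}_\tau)$, a synchronous coupling of $\widehat{W}$ with its Euler--Maruyama scheme of step $\eta$, combined with Grönwall's inequality applied to the $\frac{M}{2}$-Lipschitz drift $\widehat{f}$, produces a strong error of the form $C\sqrt{d\eta\tau}\,e^{M^2\tau/2}$, matching $A_{M,d,B}(\eta,\tau)$.

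The main obstacle is $\mathcal{W}_2(\widehat{\nu}_\tau,\nu_\tau)$, where only the $L^2(p_{\sigma^2})$ bound \eqref{eq:thm1} on the score-estimation error is available. My plan is to invoke Girsanov's theorem to compare the path laws of $\widehat{W}$ and $W_{\sigma^2}$ on $[0,\tau]$ --- the Novikov condition being secured via \Cref{assumption4} and the constant $k_\alpha$ --- to obtain
\begin{equation*}
\mathrm{KL}\bigl(\widehat{\mathbb{P}}_{[0,\tau]}\,\big\|\,\mathbb{P}_{[0,\tau]}\bigr) \leq \tfrac{1}{4}\int_0^\tau\mathbb{E}\bigl[\|\widehat{f}(\widehat{W}(s)) - \nabla\log p_{\sigma^2}(\widehat{W}(s))\|^2\bigr]\,ds,
\end{equation*}
and pass to the marginals via data processing. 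The integrand, however, is evaluated in expectation against the law $\widehat{\nu}_s$, whereas the hypothesis controls the $L^2(p_{\sigma^2})$-norm of the error. To bridge the two measures, I would interpolate between the deterministic $L^\infty$ control afforded by the coordinate-wise bound on $\mathcal{F}$ (yielding an $MR+B$ factor) and the $L^2(p_{\sigma^2})$ hypothesis, using Hölder with exponents tuned in terms of $d$ and controlled through the uniform density bound $\|p_{\sigma^2}\|_\infty$; this produces an intermediate contribution of order $\|p_{\sigma^2}\|_\infty^{1/2-1/d}\,e^{M\sqrt{d}\tau/4}\,\sqrt{\tau}\,\varepsilon^{1/d}$, while the un-interpolated $L^2$ piece contributes $\varepsilon\tau$. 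Finally, converting the resulting KL bound back to $\mathcal{W}_2$ by a weighted Pinsker-type inequality of Bolley--Villani --- whose moment prefactor is controlled, through dissipativity and the exponential moment of $\mu_0$, by $b$ and $d$ --- delivers the $\sqrt{(b+d)\tau}$ factor and the outer exponent $1/4$ appearing in $C_{\sigma^2,M,R,B,d,\alpha}(\tau,\varepsilon)$.

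The delicate point throughout is the interpolation step: the pair $\|p_{\sigma^2}\|_\infty^{1/2-1/d}$ and $\varepsilon^{1/d}$ is precisely the price we pay for turning an $L^2(p_{\sigma^2})$ estimate into a bound against the evolving pathwise law $\widehat{\nu}_s$, and the $d$-dependence enters because the Hölder exponent must be chosen with care to keep the density-ratio term finite. Summing the four contributions and identifying each with the corresponding summand in \Cref{eq3} completes the proof.
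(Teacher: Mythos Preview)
Your four-term decomposition and the treatment of the first three terms (smoothing, log-Sobolev contraction, and Euler--Maruyama discretization) match the paper exactly. The gap is in the fourth term, $\mathcal{W}_2(\widehat{\nu}_\tau,\nu_\tau)$.

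First, a small but consequential point: after Girsanov the paper evaluates the integrand along the \emph{true} diffusion $W(s)$, not $\widehat{W}(s)$ (see \Cref{prop6}). This is not cosmetic: the rest of the argument crucially uses transition-density estimates for the Langevin process with drift $\nabla\log p_{\sigma^2}$, which are unavailable for $\widehat{W}$. Under your formulation you would have to control densities of a process whose stationary measure you do not know.

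Second, and more seriously, ``H\"older interpolation between $L^\infty$ and $L^2(p_{\sigma^2})$ with exponents tuned in $d$'' does not produce the claimed bound. A pure interpolation of the error function $\phi=\|\widehat{f}-\nabla\log p_{\sigma^2}\|^2$ between its $L^\infty$ and $L^2(p_{\sigma^2})$ norms says nothing about $\int_0^\tau\mathbb{E}[\phi(W(s))]\,ds$, because the law of $W(s)$ is neither $p_{\sigma^2}$ nor anything for which your interpolation gives leverage; in particular no such argument generates the factor $e^{M\sqrt{d}\tau/4}$, which arises from the \emph{time evolution} of the process, not from any static norm comparison. The paper's actual mechanism is: (i) split the spatial domain at the level set $U=\{\phi>\varepsilon\}$; (ii) on $U^c$ bound trivially by $\varepsilon\tau$; (iii) on $U$ apply Cauchy--Schwarz and reduce to the expected occupation time $\mathbb{E}_x[\int_0^\tau\mathbf{1}_U(W(s))\,ds]$; (iv) write this as $\int_U\int_0^\tau\pi_s(x,y)\,ds\,dy$, bound the transition kernel $\pi_s$ by a Gaussian via a second Girsanov argument (\Cref{parabolicpde}, this is where $e^{M\sqrt{d}t/2}$ enters); and (v) apply the Hardy--Littlewood rearrangement inequality to the resulting Riesz-potential integral, which converts $p_{\sigma^2}(U)\leq\varepsilon$ (Markov) into $p_{\sigma^2}(U)^{2/d}$ and yields the $\|p_{\sigma^2}\|_\infty^{1-2/d}$ prefactor. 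The exponents $\varepsilon^{1/d}$ and $\|p_{\sigma^2}\|_\infty^{1/2-1/d}$ in the final statement come from the subsequent square root in Cauchy--Schwarz, not from a choice of H\"older exponent. None of steps (iii)--(v) is a H\"older interpolation, and without them the argument does not close.
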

    \begin{remark}
        The $\varepsilon$ in \Cref{eq:thm1} above is controlled in high probability by \Cref{thm2} or \Cref{cor4} as rates depending on the number of samples and the complexity of the function class over which we are optimizing.  Thus, combining \Cref{thm2} instantiated on a given function class and \Cref{thm1} gives explicit high probability bounds on how well the Langevin sampling algorithm with an estimate trained on $n$ samples approximates the population distribution.
    \end{remark}
	\begin{remark}
		Note that the condition that $\varepsilon$ is sufficiently small is included only to give a nice functional form to $C_{\sigma^2, M, R, B, d, \alpha}$ and can be relaxed if we a term with a square root dependence in addition to the fourth root, as seen in \Cref{app3}.
	\end{remark}
    \begin{remark}\label{rmk1}
        While the exponential dependence on $\tau$ in the second and last terms of \Cref{eq3} may seem bad, note that the exponential convergence in the third term tells us that if we want to get $\delta$-close in Wasserstein distance, then $\tau$ needs only be $\mathsf{poly}\left( \log \frac 1\delta\right)$ and so the exponential dependence on $\tau$ is only a polynomial dependence on error. \par
        The exponential growth with respect to dimension is a touch more serious.  If we make more than a Lipschitz assumption on the score function and assume moreover that $|\Delta \log p|\leq C$ then $M \sqrt{d}$ can be replaced by $C$ in the factor under the square root above.  While we leave to future work the job of determining if the $\varepsilon^{\frac 1{2d}}$ factor is tight, we suspect that, without further assumptions, an exponential dependence on dimension cannot be avoided.  In similar work that makes no assumption of convexity, such as \cite{RakhlinRaginsky}, polynomial dependence on dimension is not achieved and there is reason to believe that it cannot be true in general.  Regarding sample complexity, minimax results on Wasserstein estimation (see e.g. \cite{Weed2019,Goldfeld2019}) suggest that exponential dependence in dimension cannot be improved without further assumptions.
    \end{remark}
    The remainder of this section is devoted to a sketch of the proof of \Cref{thm1}; the rigorous proof is relegated to the appendices.  With respect to the approximation of the continuous Langevin diffusion by a discrete process, we apply the standard technique to produce an explicity coupling. Note that our result, in contradistinction to earlier work, requires only that we have access to an estimate of the score that is $L^2(p)$-close, rather than uniformly close, significantly increasing the difficulty of proving the bound.\par
    In order to prove \Cref{thm1}, we consider several intermediate measures.  Let $\nu_t$ be the law of $W(t)$ and $\widehat{\nu}_t$ the law of $\widehat{W}(t)$ at a fixed time $t$.  Then by the triangle inequality we can decompose
    \begin{equation}\label{eq1}
        \mathcal{W}_2\left(\mu_k, p\right) \leq \mathcal{W}_2\left(\mu_k, \widehat{\nu}_{\tau} \right) + \mathcal{W}_2\left( \widehat{\nu}_{\tau}, \nu_{\tau}\right) + \mathcal{W}_2\left( \nu_{\tau}, \nu_{\infty} \right) + \mathcal{W}_2\left(\nu_\infty, p\right)
    \end{equation}
    Note that $\nu_\infty = p_{\sigma^2}$ and so the last error term is controlled by
    \begin{lemma}\label{lem1}
        Let $p$ be a measure on $\mathbb{R}^d$.  Then
        \begin{equation}
            \mathcal{W}_2\left(p, p_{\sigma^2}\right) \leq \sigma \sqrt{d}
        \end{equation}
    \end{lemma}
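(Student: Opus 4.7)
The plan is to exhibit an explicit coupling between $p$ and $p_{\sigma^2}$ and then use the definition of the Wasserstein-2 distance as an infimum over couplings.

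Concretely, I would let $X \sim p$ and $\epsilon \sim g_{\sigma^2}$ be drawn independently on a common probability space, and consider the random pair $(X, X+\epsilon)$. By construction, the first marginal has law $p$, and since the convolution $p_{\sigma^2} = p \ast g_{\sigma^2}$ is exactly the law of a sum of independent draws from $p$ and $g_{\sigma^2}$, the second marginal has law $p_{\sigma^2}$. Therefore this pair defines an admissible coupling $\pi \in \Pi(p, p_{\sigma^2})$.

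From the definition of the Wasserstein-2 distance,
\begin{equation*}
\mathcal{W}_2^2(p, p_{\sigma^2}) \;\leq\; \mathbb{E}\bigl[\|X - (X+\epsilon)\|^2\bigr] \;=\; \mathbb{E}\bigl[\|\epsilon\|^2\bigr] \;=\; \sigma^2 d,
\end{equation*}
since $\epsilon$ is a centered Gaussian in $\mathbb{R}^d$ with covariance $\sigma^2 I_d$. Taking square roots yields the claimed bound $\mathcal{W}_2(p, p_{\sigma^2}) \leq \sigma \sqrt{d}$.

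There is essentially no obstacle here — the only thing to be slightly careful about is ensuring that the product coupling is indeed admissible (i.e.\ that the marginals are as claimed), which is immediate from the definition of convolution. No regularity assumptions on $p$ beyond being a probability measure on $\mathbb{R}^d$ are needed, which is consistent with the generality of the statement.
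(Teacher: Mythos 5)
Your proof is correct and takes exactly the same route as the paper: both use the coupling $(X, X+\epsilon)$ with $X \sim p$ and $\epsilon \sim g_{\sigma^2}$ independent, then bound $\mathcal{W}_2^2$ by $\mathbb{E}[\|\epsilon\|^2] = \sigma^2 d$. Your write-up simply spells out the marginal check and the variance computation in a bit more detail than the paper's one-line version.
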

    \begin{proof}
        This follows immediately from considering the coupling $(X, X + \xi)$ where $X \sim p$ and $\xi \sim g_{\sigma^2}$ is independent of $X$.  The variance of $\xi$ is $\sigma \sqrt{d}$, concluding the proof.
    \end{proof}
    We now bound the other three terms. \par
    The first term in \Cref{eq1} comes from the error introduced by the fact that our sampling algorithm is only an approximation of the continuous Langevin process.  We have as a standard result, with proof in \Cref{app1}:
	\begin{proposition}\label{prop2}
		With the notation as above and assuming that $\widehat{f}$ is $\frac M2$-Lipschitz, we have that there is a constant $C$ depending on $M, m, b$ and linearly on $\mathbb{E}\left[||W(0)||^2\right]$ such that
		\begin{align*}
			\mathcal{W}_2\left(\mu_k, \widehat{\nu}_{\tau} \right)^2 \leq C d \eta \tau e^{M^2 \tau}
		\end{align*}
	\end{proposition}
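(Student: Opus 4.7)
\textbf{Proof sketch for \Cref{prop2}.} The plan is to construct an explicit synchronous coupling of the discrete iterates $W_k$ and the continuous process $\widehat{W}(t)$, and then to run Grönwall's inequality on the mean squared displacement, relying on the $\frac{M}{2}$-Lipschitz property of $\widehat{f}$ together with a uniform second-moment bound on the iterates obtained from the dissipativity assumption.

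Concretely, I would introduce the piecewise-constant drift continuous-time interpolation $\overline{W}(t)$ defined, for $t \in [k\eta,(k+1)\eta]$, by
\begin{equation*}
  \overline{W}(t) = \overline{W}(k\eta) + \widehat{f}\bigl(\overline{W}(k\eta)\bigr)(t - k\eta) + \sqrt{2}\,(B_t - B_{k\eta}),
\end{equation*}
driven by the same Brownian motion $B_t$ as $\widehat{W}(t)$ and with $\overline{W}(0) \sim \mu_0$ coupled to $\widehat{W}(0)$. Then $\overline{W}(k\eta)$ has the same law as $W_k$, so $\mathcal{W}_2(\mu_k,\widehat\nu_\tau)^2 \le \mathbb{E}\bigl[\|\widehat{W}(\tau) - \overline{W}(\tau)\|^2\bigr]$, and it suffices to control the right-hand side. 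Setting $\kappa(t) = \eta\lfloor t/\eta\rfloor$ and $D(t) = \widehat{W}(t) - \overline{W}(t)$, the SDEs share the Brownian increments, so
\begin{equation*}
  \tfrac{d}{dt}\|D(t)\|^2 = 2\bigl\langle D(t),\, \widehat{f}(\widehat{W}(t)) - \widehat{f}(\overline{W}(\kappa(t)))\bigr\rangle.
\end{equation*}
Applying Cauchy--Schwarz, the triangle inequality on $\|\widehat{W}(t) - \overline{W}(\kappa(t))\| \le \|D(t)\| + \|\overline{W}(t) - \overline{W}(\kappa(t))\|$, the $\frac{M}{2}$-Lipschitz property of $\widehat{f}$, and $2ab \le a^2 + b^2$, one obtains
\begin{equation*}
  \tfrac{d}{dt}\mathbb{E}\|D(t)\|^2 \le M^2\,\mathbb{E}\|D(t)\|^2 + M^2\,\mathbb{E}\|\overline{W}(t) - \overline{W}(\kappa(t))\|^2.
\end{equation*}

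The single-step fluctuation is controlled directly from the definition of $\overline{W}$: for $t\in[k\eta,(k+1)\eta]$,
\begin{equation*}
  \mathbb{E}\|\overline{W}(t) - \overline{W}(k\eta)\|^2 \le 2\eta^2\,\mathbb{E}\|\widehat{f}(\overline{W}(k\eta))\|^2 + 4 d\eta,
\end{equation*}
and $\|\widehat{f}(x)\|^2 \lesssim \|\widehat{f}(0)\|^2 + M^2\|x\|^2$ by the Lipschitz bound, so the crux reduces to a uniform-in-$k$ second moment bound $\sup_k \mathbb{E}\|W_k\|^2 < \infty$. This is the main obstacle and also the source of the dependence on $m,b$ and $\mathbb{E}\|W(0)\|^2$: I would establish it by a one-step drift computation using $(m,b)$-dissipativity of $-\nabla\log p_{\sigma^2}$ (which transfers to $\widehat{f}$ up to constants, since $\widehat{f}$ has the same Lipschitz constant and small $L^2(p_{\sigma^2})$ error to $\nabla\log p_{\sigma^2}$), giving a recursion of the form $\mathbb{E}\|W_{k+1}\|^2 \le (1-c\eta)\mathbb{E}\|W_k\|^2 + C\eta$ for $\eta$ sufficiently small, hence a uniform bound $C(M,m,b,\mathbb{E}\|W(0)\|^2)$ on $\mathbb{E}\|\widehat{f}(\overline{W}(\kappa(t)))\|^2$.

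Plugging this uniform bound back in, the inhomogeneous term becomes $O(d\eta)$ uniformly in $t \le \tau$, and Grönwall's inequality yields
\begin{equation*}
  \mathbb{E}\|D(\tau)\|^2 \le \int_0^\tau e^{M^2(\tau-s)}\,C d\eta\,ds \le C\,d\eta\tau\, e^{M^2 \tau},
\end{equation*}
with $D(0)=0$, which is exactly the stated bound. The only delicate pieces are (i) the uniform second-moment bound, which I expect to be the main obstacle and to fix the form of the constant, and (ii) verifying that $\widehat{f}$ inherits enough dissipativity from $\nabla\log p_{\sigma^2}$ on average to drive that recursion; everything else is a mechanical synchronous coupling argument.
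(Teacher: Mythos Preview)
Your synchronous-coupling-plus-Gr\"onwall strategy matches the paper's. The organization differs in one useful way: rather than bounding the one-step fluctuation of the \emph{discrete} interpolation $\overline{W}(t)-\overline{W}(k\eta)$ (which forces you to control $\sup_k\mathbb{E}\|W_k\|^2$), the paper splits $\widehat{W}(s)-W_j$ through the continuous value $\widehat{W}(j\eta)$, bounds the continuous one-step increment $\widehat{W}(s)-\widehat{W}(j\eta)$ via $\mathbb{E}\|\widehat{W}(s)\|^2$, and then runs a discrete Gr\"onwall on the accumulated error $\mathbb{E}\|\widehat{W}(j\eta)-W_j\|^2$. This way the second moment needed is that of the continuous diffusion, which the paper simply reads off from \Cref{fourthmoment}, avoiding any discrete drift recursion.

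Your step (ii) is a genuine gap. The claim that $(m,b)$-dissipativity of $-\nabla\log p_{\sigma^2}$ ``transfers to $\widehat{f}$ up to constants'' because $\widehat{f}$ shares the Lipschitz constant and is $L^2(p_{\sigma^2})$-close is not correct: $L^2$ closeness gives no pointwise control on $\langle -\widehat{f}(x),x\rangle$, and the iterates $W_k$ are not distributed according to $p_{\sigma^2}$, so you cannot average the error away. Without an additional assumption (for instance uniform boundedness of $\widehat{f}$, as holds for the function class in \Cref{thm2}, or explicit dissipativity of $\widehat{f}$), the recursion $\mathbb{E}\|W_{k+1}\|^2\le(1-c\eta)\mathbb{E}\|W_k\|^2+C\eta$ does not go through. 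Switch to the paper's organization and bound the moment of the continuous process instead; that is where the dependence on $m,b$ and the linear dependence on $\mathbb{E}\|W(0)\|^2$ actually enters, via \Cref{fourthmoment}.
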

	Having dispensed with the first and last terms in \Cref{eq1}, we are now ready to tackle the middle terms, the error due to the lack of convergence to the stationary distribution and the error due to the difference between the estimated and population Langevin diffusions.
\subsection{Log-Sobolev Inequalities and Exponential Convergence in Wasserstein Distance}
    In order to deal with convergence in Wasserstein distance to the stationary distribution, we show that for sufficiently small $\sigma^2 \geq 0$, $p_{\sigma^2}$ satisfies a log-Sobolev inequality.  This will also be helpful in bounding the second term in \Cref{eq1}, as we shall see in the following section.  We will use the Lyaponov function criterion as proved in \cite{Cattiaux}.  The key result is the proof that if $-\nabla \log p$ is $(m,b)$-dissipative, then $p$ satisfies a log-Sobolev inequality with a constant bounded in terms of $m$ and $b$, a result proved in \cite{RakhlinRaginsky}.  We have as a translation of Proposition 3.2 from \cite{RakhlinRaginsky}:
    \begin{proposition}\label{prop4}
        Let $-\nabla \log p$ be $\frac M2$-Lipschitz and $(m,b)$-dissipative.  Then for $0 \leq \sigma^2 \leq \sigma_{max}^2$, the smoothed distribution $p_{\sigma^2}$ satisfies a log-Sobolev inequality with constant
        \begin{equation}
            c_{LS}(\sigma^2) \leq \frac{8M}{m_{\sigma^2}^2} + \frac 2M + c_P \left(2 + \frac{6M}{m_{\sigma^2}} \left(b+d \right)\right)
        \end{equation}
        Where $m_{\sigma^2}$ is the constant appearing in \Cref{cor2} and
        \begin{equation}
            c_P(\sigma^2) \leq \frac{2}{m_{\sigma^2}(d+b)} \left(1 + C (d + b)^2 e^{\frac{8(M + B)(d + b)}{m_{\sigma^2}}}\right)
        \end{equation}
        where the constant $B$ appears in \Cref{growth}.
    \end{proposition}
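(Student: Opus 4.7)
The plan is to apply the Cattiaux--Guillin--Wu Lyapunov function criterion for log-Sobolev inequalities to the smoothed density $p_{\sigma^2}$, exactly as done in Proposition~3.2 of \cite{RakhlinRaginsky}, but with the dissipativity parameters of $p$ replaced by the ones derived for $p_{\sigma^2}$ in \Cref{cor2}. In other words, the proof is essentially a reduction: once we have shown that $p_{\sigma^2}$ satisfies the same qualitative hypotheses as $p$, we can run the existing Lyapunov-based proof verbatim and read off new constants.

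The first step is to verify these hypotheses. For the Lipschitzness of $\nabla \log p_{\sigma^2}$, I would decompose $\nabla \log p_{\sigma^2} = \nabla \log p + \nabla \log (p_{\sigma^2}/p)$ and combine the $\tfrac M 2$-Lipschitz bound on the first summand with the $\tfrac{\sigma^2 M}{2}$-Lipschitz bound on the second supplied by \Cref{cor1}, so that on the range $\sigma^2 \le \sigma_{\max}^2$ the Lipschitz constant of $\nabla \log p_{\sigma^2}$ is controlled uniformly in terms of $M$. For the dissipativity of $-\nabla \log p_{\sigma^2}$, I would invoke \Cref{cor2} directly to obtain the pair $(m_{\sigma^2}, b_{\sigma^2})$, which are precisely the constants that appear in the displayed bound on $c_{LS}(\sigma^2)$.

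With these two inputs in hand, the next step is to replay the two-stage Lyapunov argument of \cite{RakhlinRaginsky}. Concretely, I would choose a Lyapunov function of the form $V(x)=\exp(\alpha\|x\|^2)$ with $\alpha$ of the order of $m_{\sigma^2}$, and compute
\begin{equation}
\frac{\mathcal{L} V(x)}{V(x)} = 2\alpha \langle \nabla \log p_{\sigma^2}(x), x\rangle + 2\alpha d + 4 \alpha^2 \|x\|^2,
\end{equation}
where $\mathcal{L}$ is the generator of the Langevin diffusion associated with $p_{\sigma^2}$. Plugging in the dissipativity bound from \Cref{cor2} yields a drift inequality of the form $\mathcal{L} V / V \le -c \|x\|^2 + K$; combined with a local Poincar\'e inequality on balls, which follows from the Lipschitz bound on $\nabla \log p_{\sigma^2}$ together with a density growth estimate featuring the constant $B$ of \Cref{growth}, the Cattiaux--Guillin--Wu criterion then delivers the global Poincar\'e constant $c_P(\sigma^2)$ stated in the proposition. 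A second drift argument, exactly as in \cite{RakhlinRaginsky}, upgrades this Poincar\'e inequality to a log-Sobolev inequality whose constant is assembled from $c_P(\sigma^2)$ together with the residual contribution $\tfrac{8M}{m_{\sigma^2}^2} + \tfrac{2}{M}$ coming from the Lipschitz/Bakry--\'Emery piece of the argument.

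The main obstacle is not conceptual but arithmetic: one must carefully propagate the substitution $(m,b) \mapsto (m_{\sigma^2}, b_{\sigma^2})$ through every occurrence in the proof of \cite{RakhlinRaginsky} and track how each of $M$, $B$, and $d$ enters at each stage, so that the final expressions for $c_P(\sigma^2)$ and $c_{LS}(\sigma^2)$ agree with the displayed formulas. The uniform dissipativity remark following \Cref{cor2} ensures that $m_{\sigma^2}$ stays bounded away from zero throughout $\sigma^2 \le \sigma_{\max}^2$, so every intermediate constant remains well defined uniformly in $\sigma^2$.
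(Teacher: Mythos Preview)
Your proposal is correct and matches the paper's proof essentially line for line: the paper also reduces to the $\sigma^2=0$ case via \Cref{cor1} and \Cref{cor2}, takes the Lyapunov function $V(x)=e^{\frac{m}{4}\|x\|^2}$, computes $\mathcal L V/V$ exactly as you wrote, applies the Bakry--Barthe--Cattiaux--Guillin criterion to obtain the Poincar\'e constant, and then feeds this into the Cattiaux--Guillin--Wu theorem (bounding $\mathbb{E}_p[\|X\|^2]$ via \Cref{fourthmoment}) to assemble $c_{LS}$. The only step you left implicit is this second-moment bound, but that is indeed part of the Raginsky--Rakhlin--Telgarsky template you are invoking.
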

	\begin{remark}
		Note that without further assumptions, the exponential dependence on dimension is unavoidable.  This is because were this not the case, we would mix in polynomial time, which would allow for general nonconvex optimization in polynomial time, even though the problem is NP-hard.  Thus at the very least, the above bound has the tightest functional dependence on dimension.
	\end{remark}
    We replicate the proof used in \cite{RakhlinRaginsky} in detail, in \Cref{app:sobolev}.  With the log-Sobolev constant established, the convergence in Wasserstein distance is immediate:
    \begin{proposition}\label{prop5}
        Let $-\nabla \log p$ be $\frac M2$-Lipschitz and $(m,b)$-dissipative.  Then for $0 \leq \sigma^2 \leq \sigma_{\max}^2$, if $\nu_t$ is the law of $W(t)$, then
        \begin{equation}
            \mathcal{W}_2\left(\nu_t, p_{\sigma^2} \right) \leq \sqrt{c_{LS}(\sigma^2) \cdot KL(\nu_t , p_{\sigma^2}) } \leq \sqrt{c_{LS}(\sigma^2) \cdot KL(\mu_0 , p_{\sigma^2}) } \cdot e^{- \frac t{c_{LS}(\sigma^2)}}
        \end{equation}
    \end{proposition}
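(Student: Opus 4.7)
The plan is to combine the log-Sobolev inequality (LSI) for $p_{\sigma^2}$ from \Cref{prop4} with the standard entropy-dissipation argument for the Langevin flow and the Otto--Villani transport inequality, mirroring the treatment in \cite{RakhlinRaginsky}.

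First, for $\sigma^2 \leq \sigma_{\max}^2$, \Cref{prop4} yields LSI for $p_{\sigma^2}$ with constant $c_{LS}(\sigma^2)$. Next, along the Langevin diffusion $dW(t) = \nabla \log p_{\sigma^2}(W(t))\, dt + \sqrt{2}\, dB_t$, the de Bruijn identity reads
\[
\frac{d}{dt} H(\nu_t \mid p_{\sigma^2}) \;=\; - I(\nu_t \mid p_{\sigma^2}),
\]
where $I$ denotes relative Fisher information. The LSI lower bound $I(\nu_t \mid p_{\sigma^2}) \geq \tfrac{2}{c_{LS}(\sigma^2)} H(\nu_t \mid p_{\sigma^2})$ combined with Gr\"onwall gives exponential decay
\[
H(\nu_t \mid p_{\sigma^2}) \;\leq\; H(\nu_0 \mid p_{\sigma^2})\, e^{-2t/c_{LS}(\sigma^2)}.
\]
Since LSI implies Talagrand's $T_2$ inequality by Otto--Villani, we have $\mathcal{W}_2^2(\nu_t, p_{\sigma^2}) \leq 2 c_{LS}(\sigma^2) H(\nu_t \mid p_{\sigma^2})$. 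Chaining the two bounds produces an inequality of the form $\mathcal{W}_2(\nu_t, p_{\sigma^2}) \leq C \cdot e^{-t/c_{LS}(\sigma^2)}$ where $C$ depends on the initial KL. The leading constant can then be reexpressed in terms of $\mathcal{W}_2(\nu_0, p_{\sigma^2})$ (another use of $T_2$ applied to $\nu_0$, together with the moment control from \Cref{assumption4}), recovering the stated form, up to the precise conventions in \Cref{prop4}'s formulation of LSI that set the constants.

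The main technical obstacle is justifying the entropy-dissipation identity: one needs $\nu_t$ to admit a smooth density with finite relative Fisher information for $t > 0$ so that the de Bruijn computation is rigorous. Both properties follow from standard parabolic regularity for the Fokker--Planck equation under the Lipschitz and dissipativity assumptions on $\nabla \log p_{\sigma^2}$ (\Cref{cor1} and \Cref{cor2}) combined with the sub-Gaussian control of $\mu_0$ from \Cref{assumption4}. The routine but necessary step is a smoothing-and-passage-to-the-limit argument (mollify $\nu_0$, apply the identity to the regularized flow, then pass to the limit in $H$ and $I$ using lower semicontinuity), which is exactly the type of bookkeeping carried out in \cite{RakhlinRaginsky}; once completed, the exponential contraction follows immediately.
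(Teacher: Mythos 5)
The paper's ``proof'' is a citation: \Cref{prop4} establishes the log--Sobolev inequality for $p_{\sigma^2}$, and the exponential contraction in Wasserstein distance is then quoted verbatim from the standard reference (it appears as \Cref{thm:wasser} in the appendix, attributed to \cite{Bakry}). Your proposal replaces that citation with an explicit chain: de Bruijn entropy dissipation plus LSI plus Gr\"onwall to get exponential decay of $H(\nu_t\mid p_{\sigma^2})$, then Otto--Villani's $T_2$ inequality to convert to Wasserstein distance. That is a genuinely different route, and the de Bruijn + Gr\"onwall part is correct (up to normalization; with the paper's convention $H\le \tfrac{c_{LS}}{4}I$, one actually gets $H(\nu_t)\le H(\nu_0)e^{-4t/c_{LS}}$ and hence the right exponent $e^{-2t/c_{LS}}$ after taking a square root --- your $e^{-t/c_{LS}}$ stems from a different LSI normalization).

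The gap is the final step. After chaining, what you have is
\[
\mathcal{W}_2(\nu_t, p_{\sigma^2}) \;\le\; \sqrt{c_{LS}\,H(\nu_0\mid p_{\sigma^2})}\;e^{-2t/c_{LS}},
\]
and you then assert the prefactor can be ``reexpressed in terms of $\mathcal{W}_2(\nu_0,p_{\sigma^2})$ (another use of $T_2$).'' But $T_2$ reads $\mathcal{W}_2^2(\nu_0,p_{\sigma^2}) \le c_{LS}\,H(\nu_0\mid p_{\sigma^2})$, which only shows $\sqrt{c_{LS}H(\nu_0)} \ge \mathcal{W}_2(\nu_0,p_{\sigma^2})$ --- the wrong direction. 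There is no converse: $H(\nu_0\mid p_{\sigma^2})$ can be arbitrarily large, or infinite, while $\mathcal{W}_2(\nu_0,p_{\sigma^2})$ stays finite (for instance, a singular $\nu_0$ such as a point mass, which is exactly the type of initialization one cares about in Langevin sampling). So the entropy-then-$T_2$ chain yields a strictly weaker and sometimes vacuous bound. The actual theorem (Otto--Villani 2000; \cite{Bakry} Thm.\ 9.9.1) is proved by differentiating $\mathcal{W}_2(\nu_t,p_{\sigma^2})^2$ directly along the Fokker--Planck flow, using the Wasserstein gradient-flow structure of the relative entropy together with $T_2$ to obtain a closed differential inequality $\tfrac{d}{dt}\mathcal{W}_2^2 \le -\tfrac{4}{c_{LS}}\mathcal{W}_2^2$; this is what produces the clean prefactor $\mathcal{W}_2(\nu_0,p_{\sigma^2})$, and it is not recoverable by first passing through the relative entropy of $\nu_0$.
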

    The proof of this result is well known given the Otto-Villani theorem \cite[Theorem 1]{Otto2000} and the exponential convergence of Langevin in $KL(\cdot , p_{\sigma^2})$ under a log-sobolev inequality; see \cite{Bakry} for details.

\subsection{Running a Langevin Diffusion with a Score Estimator}\label{subsec:hatdiffusion}
    The second term in \Cref{eq1} is the error due to the difference between running a continuous Langevin diffusion with drift $\widehat{f}$ and that with drift $\nabla \log p_{\sigma^2}$.  The details are in \Cref{app3}, but a sketch of the argument is below.  Let $\widehat{\nu}_t$ be the law of $\widehat{W}(t)$ and let $\nu_t$ be the law of $W(t)$ at a fixed time $t$.   We recall that the classic theorem of Bolley and Villani (found in \cite{Bolley2005}) tells us that for all $t > 0$,
    \begin{equation}
        \mathcal{W}_2\left( \widehat{\nu}_t, \nu_t \right) \leq C \left(\sqrt{KL(\nu_t ,\widehat{\nu}_t}) + \left(\frac{KL(\nu_t, \widehat{\nu}_t)}{2}\right)^{\frac 14} \right)
    \end{equation}
    where
    \begin{equation}
    	C = 2 \inf_{\alpha > 0} \left(\frac 1\alpha\left(\frac 32 + \log \mathbb{E}\left[e^{\alpha ||\widehat W(t)||^2} \right]\right) \right)^{\frac 12}
    \end{equation}
    and
    \begin{equation}
        KL(\nu_t, \widehat{\nu}_t) = \mathbb{E}_{\nu_t}\left[\log \frac{d \nu_t}{d \widehat{\nu}_t} \right]
    \end{equation}
    is the relative entropy. The constant $C$ is controlled in \Cref{cor:bolley} in \Cref{app3} using the identical technique as that applied in \cite{RakhlinRaginsky}.  Thus it suffices to bound the relative entropy.\par
    In order to compute the relative entropy between the laws of two diffusions with the same noise but different drifts, we can apply Girsanov's theorem and take expectations, as in \cite{Dalalyan2014,Dalalyan2012,RakhlinRaginsky}.  In particular, this tells us that
    \begin{proposition}\label{prop6}
        Let $W(t), \widehat{W}(t)$ be as above and assume that $\widehat{f}, \nabla \log p_{\sigma^2}$ is $\frac M2$-Lipschitz.  Then
        \begin{equation}
            KL\left(\nu_t , \widehat{\nu}_t\right) = \mathbb{E}\left[\frac 12\int_0^t \left|\left| \nabla \log p_{\sigma^2}( W(s)) - \widehat{f}( W(s))\right|\right|^2\right]
        \end{equation}
    \end{proposition}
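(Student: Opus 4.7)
The plan is to apply Girsanov's theorem to the path-space laws of $W$ and $\widehat{W}$, exactly as in \cite{Dalalyan2014,RakhlinRaginsky}. Let $\mathbb{P}$ denote the law of $W$ on $C([0,t],\mathbb{R}^d)$ and $\widehat{\mathbb{P}}$ the law of $\widehat{W}$. Since both processes share the same Brownian driver $B$, the same initial distribution, and the same diffusion coefficient $\sqrt{2}$, differing only in drift, they are mutually absolutely continuous on $\mathcal{F}_t$, and Girsanov furnishes the explicit Radon--Nikodym derivative
$$\left.\frac{d\widehat{\mathbb{P}}}{d\mathbb{P}}\right|_{\mathcal{F}_t} = \exp\!\left(\int_0^t \frac{\widehat{f}(W(s)) - \nabla\log p_{\sigma^2}(W(s))}{\sqrt{2}} \cdot dB_s - \frac{1}{4}\int_0^t \left\|\widehat{f}(W(s)) - \nabla\log p_{\sigma^2}(W(s))\right\|^2 ds\right).$$

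Before taking expectations I would verify Novikov's condition, which upgrades the stochastic exponential above to a genuine $\mathbb{P}$-martingale and, in particular, certifies that the It\^o integral in the exponent has mean zero under $\mathbb{P}$. The Lipschitz hypothesis on both $\widehat{f}$ and $\nabla\log p_{\sigma^2}$ yields the pointwise bound $\|\widehat{f}(x) - \nabla\log p_{\sigma^2}(x)\|^2 \le C(1+\|x\|^2)$. Standard Langevin moment estimates, using dissipativity of $-\nabla\log p_{\sigma^2}$ from \Cref{cor2} together with the exponential moment supplied by Assumption \ref{assumption4} (exactly as in the proof of \Cref{prop2}), then give uniform-in-$s\in[0,t]$ control of $\mathbb{E}_{\mathbb{P}}[\exp(\lambda\|W(s)\|^2)]$ for $\lambda$ small enough, which closes Novikov on the finite horizon $[0,t]$.

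With the martingale property in hand, the remainder is mechanical: taking $\log$ of the Radon--Nikodym derivative, then $\mathbb{P}$-expectation, kills the It\^o term and leaves precisely the integrated squared drift difference along $W$, which is the formula stated (up to the prefactor $\frac{1}{4}$ dictated by the $\sqrt{2}$ diffusion scaling; the downstream use in Bolley--Villani absorbs any such universal constant). I expect the Novikov verification to be the only genuinely non-routine step, since in principle the stochastic integral might only be a local martingale and one would then obtain the KL identity only along a localizing sequence; this is precisely why Assumption \ref{assumption4} was stated with $\alpha \geq 2M^2$, so the required Gaussian-type moment bound on $\|W(s)\|$ is available at the scale needed to close the exponential integrability.
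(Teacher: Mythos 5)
Your argument is, step for step, the proof the paper gives: invoke Girsanov (the paper uses the version in Liptser--Shiryaev, Theorem~7.20) to write $d\widehat{\nu}_t/d\nu_t$ as the stochastic exponential with Girsanov kernel $(\widehat{f} - \nabla\log p_{\sigma^2})/\sqrt{2}$, close Novikov via the linear-growth bound from the Lipschitz hypotheses together with the exponential moment on $\|W(s)\|$ supplied by \Cref{exponential} under Assumption~\ref{assumption4}, and then take $\log$ and expectation so the It\^o integral drops out. You are also right about the prefactor: the $\sqrt{2}$ diffusion coefficient gives $\tfrac14$ rather than $\tfrac12$; the appendix restatement and proof of \Cref{prop6} indeed carry $\tfrac14$, so the $\tfrac12$ in the main-text statement is a typo, and as you note it is harmless because the Bolley--Villani step absorbs the universal constant.
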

    Thus it suffices to control this last quantity.  If we had that $\widehat{f}$ were uniformly close to $\nabla \log p$ or even that we were close in the $L^2(W(t))$ sense, we would be done; unfortunately, we only have that the estimate and the score are close in the sense of $L^2(p_{\sigma^2})$ and so it is not a priori obvious that the above integral can be controlled.  In order to get around this, we use the concept of local time and expected occupation density, as seen in \cite{Karatzas,Geman1980}, as well as bounds on the transition density of a diffusion.
    \begin{proposition}\label{prop3}
        Let $\widehat{f}$ be an estimator of $\nabla \log p_{\sigma^2}$ such that both the estimator and the score are $\frac M2$-Lipschitz and $(m,b)$-dissipative and the error is bounded:
        \begin{equation}
            \mathbb{E}_{X \sim p_{\sigma^2}} \left[\left|\left| \widehat{f}(X) - \nabla \log p_{\sigma^2}(X)\right|\right|^2 \right] \leq \varepsilon^2
        \end{equation}
        Suppose we initialize $W(0) \sim \mu_0$ which is concentrated in high probability on a ball of radius $R$.  Then with high probability under the randomness due to initialization at $x$,
        \begin{align}
		\mathbb{E}_x\left[\int_0^t \left|\left|\nabla \log p(W(s)) - \widehat{f}(W(s))\right|\right|^2 d s\right] \leq \varepsilon t + C ||p_{\sigma^2}||_{L^\infty}^{\frac12 - \frac 1d}e^{\frac{M \sqrt{d}}{4} t} \sqrt{t} \varepsilon^{\frac 1d}
        \end{align}
        for a constant $C$ depending on $M, B, m, b, R$ and $\mathbb{E}\left[||W(0)||^4\right]$, whose explicit dependence is given in \Cref{app3}.
    \end{proposition}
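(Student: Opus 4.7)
The plan is to recast the time integral as an integral against the expected occupation density of $W$. Setting $g := \|\widehat f - \nabla\log p_{\sigma^2}\|^2$ and writing $p_s(x,\cdot)$ for the transition density of the continuous Langevin diffusion $W$ started at $x$, define
$$\mu_t(y) \;:=\; \int_0^t p_s(x,y)\,ds,$$
so that by Fubini
$$\mathbb{E}_x\!\left[\int_0^t g(W(s))\,ds\right] \;=\; \int g(y)\,\mu_t(y)\,dy, \qquad \int\mu_t(y)\,dy = t.$$
The hypothesis controls $g$ only through its integral against the stationary measure $p_{\sigma^2}$, so the crux is to exchange the reference measure $p_{\sigma^2}$ for the occupation density $\mu_t$, whose Radon--Nikodym ratio with $p_{\sigma^2}$ is singular at small $s$.

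To bridge the mismatch I would exploit the Lipschitz regularity of the score difference: since both $\widehat f$ and $\nabla\log p_{\sigma^2}$ are $(M/2)$-Lipschitz, the function $h := \sqrt g = \|\widehat f - \nabla\log p_{\sigma^2}\|$ is $M$-Lipschitz as the Euclidean norm of an $M$-Lipschitz vector field. Combining this with the $L^1(p_{\sigma^2})$ bound $\int h\,p_{\sigma^2}\le\varepsilon^2$ and the assumption $\|p_{\sigma^2}\|_\infty<\infty$, the standard volume argument gives a pointwise control on $h$ inside the ball $B(0,R)$ where the trajectory is concentrated: at any $y_0$ with $h(y_0)=H$, we have $h\ge H/2$ on $B(y_0,H/(2M))$, and a lower bound on $\int_{B(y_0,H/(2M))}p_{\sigma^2}$ (obtained through the log-smoothness of $p_{\sigma^2}$ coming from the Lipschitz gradient and its $L^\infty$ hypothesis) forces the pointwise size of $H$ to scale as a power of $\varepsilon$ with the dimensional exponent $1/d$ appearing in the statement.

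With this pointwise control on $\|g\|_\infty$ I would truncate $g$ at a level $\lambda$,
$$\int g\,\mu_t \;\le\; \lambda\,t \;+\; \|g\|_\infty\!\int_{\{g>\lambda\}}\mu_t(y)\,dy,$$
and bound the tail $\mu_t$-mass by Aronson-type upper bounds on the transition density of a Langevin diffusion with $(M/2)$-Lipschitz drift. These bounds are obtained via Girsanov change of measure against a Brownian reference and yield an $L^\infty$ estimate of $p_s(x,\cdot)$ that inflates by the factor $e^{M\sqrt d\,t/4}$ visible in the statement; integrating in $s$ gives a bound on $\|\mu_t\|_\infty$. Optimizing in $\lambda$ against the pointwise bound on $\|g\|_\infty$ produces the leading $\varepsilon\,t$ term and the claimed $\|p_{\sigma^2}\|_\infty^{1/2-1/d}\sqrt t\,\varepsilon^{1/d}$ correction.

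The main obstacle is the interpolated dimensional exponent: the powers $\tfrac12-\tfrac1d$ on $\|p_{\sigma^2}\|_\infty$ and $\tfrac1d$ on $\varepsilon$ must emerge from a careful joint balancing of the Lipschitz-volume argument, the $L^\infty$ bound on $p_{\sigma^2}$, and the short-time $s^{-d/2}$ heat-kernel singularity, and this interpolation step is the delicate part of the proof. A subsidiary technicality is ensuring that $W(s)$ remains in a ball of controlled radius so that the bounds apply, which is where the dissipativity of $-\nabla\log p$, the moment hypothesis on $W(0)$, and the concentration of $\mu_0$ on $B(0,R)$ all enter to yield the high-probability statement.
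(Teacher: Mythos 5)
The overall shape of your plan -- expressing $\mathbb{E}_x\left[\int_0^t \phi(W(s))\,ds\right]$ as an integral against the expected occupation density $\mu_t(y) = \int_0^t \pi_s(x,y)\,ds$, and bounding $\pi_s(x,y)$ by a Girsanov change of measure against the Gaussian reference (yielding the $e^{M\sqrt{d}t/2}$ inflation) -- matches the paper's Lemmas \ref{localtime} and \ref{parabolicpde}. But the step you correctly identify as ``delicate'' contains a genuine gap, and the mechanism you propose does not close it.

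The central problem: for $d \geq 3$ the occupation density $\mu_t$ is \emph{not} in $L^\infty$. Integrating the Gaussian bound $\pi_s(x,y) \lesssim s^{-d/2} e^{-|x-y|^2/(4s)}$ over $s \in (0,t]$ produces the Green's function singularity $|x-y|^{2-d}$, which blows up as $y \to x$. So the truncation $\int g\,\mu_t \leq \lambda t + \|g\|_\infty \int_{\{g > \lambda\}} \mu_t$ cannot be combined with ``a bound on $\|\mu_t\|_\infty$'' as you propose. The paper resolves exactly this obstruction by a different device: it applies Cauchy--Schwarz to split $\mathbb{E}_x\left[\int_0^t \phi \mathbf{1}_U(W(s))\,ds\right]$ into a fourth-moment factor (controlled via \Cref{fourthmoment} and the growth bound \Cref{growth}, \emph{not} via any pointwise bound on $\phi$) and the occupation time $\mathbb{E}_x\left[\int_0^t \mathbf{1}_U(W(s))\,ds\right]$; Markov's inequality gives $p_{\sigma^2}(U) \leq \varepsilon$; and then the generalized Hardy--Littlewood rearrangement inequality (\Cref{HardyLittlewood}) is the key tool that bounds $\int_U |x-y|^{2-d}\,p_{\sigma^2}(y)\,dy \leq C\|p_{\sigma^2}\|_\infty^{1-2/d} p_{\sigma^2}(U)^{2/d}$ \emph{despite} the non-integrable singularity, by trading mass for smallness of the set. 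This rearrangement step is the genuine new idea in \Cref{prop3}, and it is absent from your proposal.

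Your alternative mechanism -- a Lipschitz-volume argument giving a pointwise bound on $h = \|\widehat{f} - \nabla\log p_{\sigma^2}\|$ -- also does not yield the stated exponent, and has a secondary defect. If $h(y_0) = H$ then $h \geq H/2$ on $B(y_0, H/(2M))$, and bounding $\int h\,p_{\sigma^2}$ from below by a lower bound $c$ on $p_{\sigma^2}$ over that ball gives $\varepsilon^2 \gtrsim c\, H^{d+1}/M^d$, i.e.\ $H \lesssim \varepsilon^{2/(d+1)}$. This is the ``wrong'' exponent -- it does not interpolate to $\varepsilon^{1/d}$, and the dependence on $\|p_{\sigma^2}\|_\infty$ comes out with the wrong sign: you need an \emph{upper} bound on $p_{\sigma^2}$ to produce $\|p_{\sigma^2}\|_\infty^{1/2-1/d}$, but your volume argument needs a \emph{lower} bound on $p_{\sigma^2}$, which fails far from the origin where $p_{\sigma^2}$ has Gaussian tails and yet $h$ need not be small. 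The paper sidesteps this entirely: it only needs a lower bound on $p_{\sigma^2}$ at the (high-probability bounded) initialization point $x$, via $p_{\sigma^2}(x) \geq C e^{-MR^2 - BR}$, while the paths of $W$ are handled through moment bounds rather than confinement to a ball. If you want to pursue a pointwise-control route you would have to work on a fixed compact set and separately account for excursions, and you would still need to explain how $\varepsilon^{2/(d+1)}$ becomes $\varepsilon^{1/d}$; the cleaner route is the Cauchy--Schwarz plus Hardy--Littlewood decomposition.
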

    A full proof can be found in \Cref{app3}, but we provide a sketch without details.
    \begin{proof}(Sketch)
        In order to bound the desired expected value, we consider the set $U \subset \mathbb{R}^d$ where $\left|\left| \widehat{f}(x) - \nabla \log p_{\sigma^2}(x)\right|\right| > \varepsilon$.  We can break the integral into the times when $W(s) \in U$ and the times when $W(s) \not\in U$.  In the latter case we can apply a uniform bound of $\varepsilon t$.  In the former case, we can apply Cauchy-Schwarz to bound this term by the square root of the product of the fourth moment of $||\widehat{f}((W(s)) - \nabla \log p_{\sigma^2}(W(s)||$ and the expected amount of time that $W(s)$ spends in $U$.  The former follows easily from the Lipschitz bounds of \Cref{growth} and the moment bound of \Cref{moments}.  The second comes from a bound on expected occupation times, which are time integrals of the transition density of the Langevin diffusion.  We bound the Radon-Nikodym derivative of these transition densities with respect to the population distribution with the Girsanov theorem, and then integrate with respect to time.  Finally, we apply a generalization of a famous inequality of Hardy and Littlewood, giving the result.
    \end{proof}
    Combining \Cref{prop3} with the theorem of Bolley and Villani and \Cref{exponential} yields
    \begin{corollary}\label{cor3}
        Let $\widehat{f}$ be an estimator of $\nabla \log p_{\sigma^2}$ such that both the estimator and the score are $\frac M2$-Lipschitz and $(m,b)$-dissipative.  Suppose further that the error is bounded:
        \begin{equation}
            \mathbb{E}_{X \sim p_{\sigma^2}} \left[\left|\left| \widehat{f}(X) - \nabla \log p_{\sigma^2}(X)\right|\right|^2 \right] \leq \varepsilon^2.
        \end{equation}
        Suppose we initialize $W(0) \sim \mu_0$ which is concentrated in high probability on a ball of radius $R$ and satisfies Assumption \ref{assumption4}.  Then, with high probability under the randomness due to initialization at $x$, , we have
        \begin{equation}
		\mathcal{W}_2\left(\widehat{\nu}_t, \nu_t\right) \leq C \sqrt{(b + d)t}\left( \left(\varepsilon t + C ||p||_\infty^{\frac 12 - \frac 1d} e^{\frac{M \sqrt{d}}{4} t} \sqrt{t} \varepsilon^{\frac 1d}\right)^{\frac 12} + \left(\varepsilon t + C ||p||_\infty^{\frac 12 - \frac 1d} e^{\frac{M \sqrt{d}}{4} t} \sqrt{t} \varepsilon^{\frac 1d}\right)^{\frac 14}\right)
        \end{equation}
        where $C$ depends on $\alpha, k_{\alpha}$ in Assumption \ref{assumption4} with the explicit dependence given in \Cref{app3}.
    \end{corollary}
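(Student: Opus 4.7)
The plan is to chain together three ingredients that are already in hand. First, Girsanov's theorem in the form of \Cref{prop6} identifies the KL divergence between $\widehat{\nu}_t$ and $\nu_t$ with one half the expectation of $\int_0^t \|\nabla \log p_{\sigma^2}(W(s)) - \widehat{f}(W(s))\|^2\,ds$ along the true Langevin trajectory. \Cref{prop3} then bounds this path-integrated squared error, under the high-probability event that $W(0)$ lies in the ball of radius $R$, by
$$E := \varepsilon t + C \|p_{\sigma^2}\|_\infty^{\frac 12 - \frac 1d} e^{\frac{M \sqrt d}{4} t} \sqrt t\, \varepsilon^{\frac 1d}.$$
Combining the two statements yields $KL(\widehat{\nu}_t, \nu_t) \le \tfrac 12 E$ on that event.

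Next, I invoke the Bolley--Villani transportation inequality recalled in the excerpt,
$$\mathcal{W}_2(\widehat{\nu}_t, \nu_t) \le C_{BV} \left(\sqrt{KL(\widehat{\nu}_t, \nu_t)} + \left(KL(\widehat{\nu}_t, \nu_t)/2\right)^{\frac 14} \right),$$
and substitute the bound $KL \le \tfrac 12 E$ into the right hand side; this immediately accounts for the square-root and fourth-root of $E$ that appear in the statement of the corollary.

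The remaining, and most delicate, step is to control the constant $C_{BV} = 2 \inf_{\alpha > 0} \bigl( \tfrac 1\alpha \bigl(\tfrac 32 + \log \mathbb{E}[e^{\alpha \|W(t)\|^2}] \bigr) \bigr)^{1/2}$ by something of order $\sqrt{(b+d)t}$. For this one applies It\^o's formula to $x \mapsto e^{\alpha \|x\|^2}$ along the $W$-diffusion; using that $-\nabla \log p_{\sigma^2}$ is dissipative with the parameters provided by \Cref{cor2}, standard Gronwall bookkeeping shows that for $\alpha$ smaller than a constant multiple of $m$, the exponential moment grows at most like $k_\alpha\, e^{C \alpha (b + d) t}$. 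This is precisely the content of the auxiliary lemma referenced as \Cref{exponential}, and the finiteness at $\alpha = 2M^2$ that Girsanov needs is handed to us for free by Assumption \ref{assumption4}. Choosing $\alpha$ of order $1/((b+d)t)$ in the Bolley--Villani optimum then produces $C_{BV} \le C\sqrt{(b+d)t}$ with the claimed dependence on $\alpha, k_\alpha$.

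Stitching the three pieces together and absorbing the small-probability event where $W(0) \notin B_R$ into the ``high probability'' qualifier yields the stated bound. The main obstacle, in my view, is the last step: tracking the exponential moment growth cleanly enough that the optimization over $\alpha$ really gives $\sqrt{(b+d)t}$ rather than something worse in the dimension, and verifying that the dissipativity constants of $-\nabla \log p_{\sigma^2}$ inherited from \Cref{cor2} suffice to close the Gronwall loop uniformly over $[0,t]$.
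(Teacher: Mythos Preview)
Your proposal is correct and follows essentially the same route as the paper: the corollary is obtained by chaining \Cref{prop6} (Girsanov), \Cref{prop3} (the path-integral bound), and the Bolley--Villani inequality with the exponential-moment control of \Cref{exponential}, exactly as you outline. The only cosmetic difference is that the paper, via \Cref{cor:bolley}, simply fixes a single admissible $\alpha<m$ from Assumption~\ref{assumption4} rather than optimizing $\alpha\sim 1/((b+d)t)$; either way the Bolley--Villani constant comes out as $C_\alpha\sqrt{(b+d)t}$ with the stated dependence on $\alpha,k_\alpha$.
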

    We are finally ready to prove \Cref{thm1}:
    \begin{proof}(Proof of \Cref{thm1})
        Using the triangle inequality, we need to bound each of the four terms in \Cref{eq1}.  These terms are bounded in \Cref{prop2}, \Cref{cor3}, \Cref{prop5}, and \Cref{lem1}.  If $\varepsilon > 0$ is sufficiently small then
        \begin{equation}
			\left(\varepsilon t + C ||p||_\infty^{\frac 12 - \frac 1d} e^{\frac{M \sqrt{d}}{4} t} \sqrt{t} \varepsilon^{\frac 1d}\right)^{\frac 12} \leq \left(\varepsilon t + C ||p||_\infty^{\frac 12 - \frac 1d} e^{\frac{M \sqrt{d}}{4} t} \sqrt{t} \varepsilon^{\frac 1d}\right)^{\frac 14}
        \end{equation}
        and thus up to a factor of 2 we can keep only the larger term.  This concludes the proof.
    \end{proof}
\section{Homotopy and Annealing}
    The above section and the discussion surrounding \Cref{thm1} focuses on one leg of the homotopy; this section uses \Cref{thm1} to analyze the effect that the homotopy method and annealing the DAE has on the Wasserstein distance between the sample and the population distribution. \par
     We consider the following sampling scheme.  For fixed $k$, we fix a sequence $\left\{ \left( \eta_i, \sigma_i^2\right) | 1 \leq i \leq N \right\}$ where $\eta_i, \sigma_i^2$ are both decreasing in $i$.   We train a DAE with variance parameter $\sigma_i^2$, $r_i$, and set $s_i(x) = \frac 1{\sigma_i^2}(r_i(x) - x)$  Then we initialize $W^{(1)} \sim \mu_0$ and evolve with $\widehat{f} = s_1$ with step size $\eta_1$ for $k$ iterations.  Then, we use warm restarts and for $2 \leq i \leq N$, we evolve a Langevin sampling scheme $W^{(i)}$ with $\widehat{f} = s_i$, step size $\eta_i$, and $W_0^{(i)} = W_k^{(i-1)}$.  Our final sample is $W_k^{(N)}$. \par
    The decomposition in \Cref{eq1} provides clues as to why the homotopy method described above speeds up the Langevin sampling.  Consider, first, the effect that $\eta$ has on the decomposition.  According to \Cref{thm1}, we have that the error is bounded by
    \begin{align}\label{eq4}
            &\mathcal{W}_2\left(\mu_k, p_{\sigma^2}\right) \leq \sigma \sqrt{d} + A_{M,d,B}(\eta,\tau) + \sqrt{c_{LS}(\sigma^2) \cdot KL\left( \mu_0, p_{\sigma^2}\right)} e^{- \frac{\tau}{c_{LS}(\sigma^2)}} + C_{\sigma^2, M, R, B, d, \alpha}(\tau, \varepsilon)
    \end{align}
    As $\eta$ increases, since $\tau = k \eta$, we see that $\tau$ increases and we note that both $A(\eta, \tau)$ and $C(\tau,\varepsilon)$ increase, but the third term decreases.  Thus with all other constants fixed, the optimal $\eta$ can be determined as $\eta_{opt} > 0$.  \par
    With regard to score estimation, the greater value of $\sigma^2$ makes it easier to estimate the score.  Consider that, in our regime, we are training a DAE and then using the transformation in \Cref{cor3} to plug the score estimate into \Cref{thm1}.  With a fixed number of samples, doing the proof of \Cref{cor4} in reverse, we note that if the DAE has squared error $\widetilde{\varepsilon}^2$, then the score estimate has squared error $\frac 1{\sigma^4} \widetilde{\varepsilon}^2$. Thus, if we fix $\widetilde{\varepsilon}^2$ as the achievable error of a DAE trained on $n$ samples, then as $\sigma^2$ increases, $\varepsilon^2$ decreases and thus so, too, does $C_{\sigma^2, M, R, B, d, \alpha}(\tau, \varepsilon)$. \par
    The effect of $\sigma^2$ on the log-Sobolev constant remains a bit more mysterious from a rigorous point of view.  While \Cref{cor2} provides a bound on $c_{LS}(\sigma^2)$, it is likely not tight, as it amounts to a `worst-case' analysis of the effect that the Gaussian smoothing has on the population distribution, using a crude argument involving Cauchy-Schwarz.  If we make further assumptions, these results can be tightened.  For example, if we suppose that $p$ has compact support, then \cite{Bardet} gives a bound on the log-Sobolev constant of the smoothed distribution that decreases with larger variance, thereby accelerating the convergence of the Langevin diffusion to its stationary distribution; while their bound is dimension-dependent, they suggest that future work may lose this handicap.  Thus, in this case, just as there exists an $\eta_{opt}$ that minimizes the right hand side of \Cref{eq4}, there is too such a $\sigma_{opt}^2$.  We leave to future work the task of better controlling $c_{LS}(\sigma^2)$ in the general case.  Thus the annealing and the homotopy method combine to provide a form of dynamic optimization of the upper bound of \Cref{thm1}, significantly decreasing the error and simultaneously speeding up the naive Langevin sampling that does not involve homotopy or annealing.  The above is empirically indicated by the success of the annealed score matching in \cite{Ermon}; as we saw in \Cref{prop1}, though, the annealed score matching is equivalent to DAE loss, so the empirical success in one area transfers to the other \emph{mutatis mutandis}.

\section{Conclusion and Further Directions}
    We have provided rigorous justification above for two empirical approaches that have recently generated excitement: the use of score estimators to run Langevin sampling and the homotopy method of \cite{Ermon}.  While the bounds in \Cref{thm1,thm2} allow for high probability guarantees with finite samples, there is a question of tightness.  First, the dependence on the dimension of the bound in \cref{eq3} is potentially suboptimal: it is possible that a more detailed analysis of the argument proving \Cref{prop3} would substantially improve this dependence on the dimension.  Second, while we have proved that the smoothed distribution $p_{\sigma^2}$ satisfies a log-Sobolev inequality, we almost certainly do not have the optimal log-Sobolev constant.  In fact, as discussed in the above section, while our bound on this constant gets worse with more noise, it is likely that the log-Sobolev constant actually improves with increased variance, which would explain fully the benefits of the annealing of DSM estimators that is so successful in \cite{Ermon}.  Third, we could likely improve the algorithm by using lower variance estimators of the score function and a higher-order method for approximating the continuous Langevin diffusion.  There has been some recent theoretical work in this direction (see, for example, \cite{Li2019}) and we suspect that, especially in practical application, this would considerably accelerate the algorithm. \par
    While the unconditional generative modeling studied in this work is certainly interesting in its own right, practitioners tend to focus on the benefits of conditional generative modeling, i.e., where there are two variables $x, y$ and we wish to input $y$ and generate samples from the conditional distribution $p(x|y)$.  It is highly likely that, given the right conditions on the joint distribution of $x$ and $y$ to ensure a uniformity in $y$ to the dissipativity and Lipschitz nature of the conditional distribution, many of the results above could be extended to this regime.

\section*{Acknowledgements}
	We would like to acknowledge the support from the MIT-IBM Watson AI Lab.  In addition, this material is based upon work supported by the National Science Foundation Graduate Research
	Fellowship under Grant No. 1122374.  We would also like to thank  Thibaut Le Gouic and Sinho Chewi for pointing out an error in an earlier draft.

\bibliographystyle{alpha}
\bibliography{references}

\appendix

\section{Empirical Processes and Proving Proposition \ref{thm2}}\label{app:empirical}
    We briefly sketch a few definitions from the theory of empirical processes.   Denote by $\mathbb{E}$ as expectation with respect to $p_{\sigma^2}$ and by $\widehat{\mathbb{E}}$ as expectation with respect to the empirical measure.  We have already defined the Rademacher complexity above.  Given an $r > 0$, a function class $\mathcal{F}$, and a sample of $n$ points $X_1, \dots, X_n \in \mathbb{R}^d$, we let
    \begin{equation}
        \mathcal{F}[r, S] = \left\{ f \in \mathcal{F} : \frac 1n \sum_{i = 1}^n f(X_i) \leq r \right\}
    \end{equation}
    We call a function $\phi_n: [0, \infty) \to \mathbb{R}_{\geq 0}$ an upper function for $\mathcal{F}$ if for all $r > 0$,
    \begin{equation}
        \sup_{S \subset \left(\mathbb{R}^d\right)^n} \widehat{\mathfrak{R}}_n(\mathcal{F}[r, S], S) \leq \phi_n(r)
    \end{equation}
    We define the  (nonunique) localization radius $r^*$ as an upper bound on the maximal solution to the equation $\phi_n(r) = r$.  We recall two results:
    \begin{lemma}\label{lem:rak1}\cite[Lemma 8.i]{Rakhlin2017}
        For any class $\mathcal{F}$ of real valued functions with image in the interval $[0,1]$, with $n \geq 2$, we may take as localization radius of $\mathcal{G} = \left\{ (f - f')^2| f \in \mathcal{F}\right\}$
        \begin{equation}
            r^* = C \log^3(n) \mathfrak{R}_n(\mathcal{F})^2
        \end{equation}
        where $C$ is a universal constant.
    \end{lemma}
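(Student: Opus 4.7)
The goal is to identify an admissible upper function $\phi_n$ for $\mathcal{G} = \{(f-f')^2 : f \in \mathcal{F}\}$ and take $r^*$ as the largest solution of $\phi_n(r) = r$. Since the target is $r^* = C\log^3(n)\, \mathfrak{R}_n(\mathcal{F})^2$, what I really want is a sub-root bound of the shape $\phi_n(r) \leq C\sqrt{r}\, \mathfrak{R}_n(\mathcal{F})\, \log^{3/2}(n)$, whose fixed point immediately yields the claimed radius. The proof naturally splits into contraction, localized chaining, and a conversion back to the global Rademacher complexity.

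First I would invoke Talagrand's contraction inequality to strip off the square: since $f, f' \in [0,1]$ forces $f - f' \in [-1, 1]$, and $x \mapsto x^2$ is $2$-Lipschitz on that interval, contraction gives
\begin{equation*}
    \widehat{\mathfrak{R}}_n(\mathcal{G}[r, S], S) \leq 4\, \mathbb{E}_{\varepsilon} \sup_{\substack{f \in \mathcal{F}\\ \frac{1}{n}\sum_i (f-f')^2(X_i) \leq r}} \frac{1}{n}\sum_{i=1}^n \varepsilon_i \bigl(f(X_i) - f'(X_i)\bigr).
\end{equation*}
The task then reduces to bounding the localized Rademacher complexity of $\mathcal{F} - f'$ at empirical $L^2(P_n)$-radius $\sqrt{r}$. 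To this I would apply Dudley's entropy integral, obtaining a bound of the form $\frac{C}{\sqrt{n}}\int_0^{c\sqrt{r}}\sqrt{\log N(\epsilon, \mathcal{F}, L^2(P_n))}\,d\epsilon$ in terms of $L^2(P_n)$ covering numbers of $\mathcal{F}$ at scale $\epsilon$.

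The main obstacle, and the third step, is replacing those covering numbers by the global Rademacher complexity $\mathfrak{R}_n(\mathcal{F})$. The standard route is to invert the usual direction of the Sudakov--Dudley machinery via a dyadic peeling argument combined with a Rademacher-to-Gaussian comparison. Each of these maneuvers costs roughly a $\sqrt{\log n}$ factor, so the delicate point is to combine them so that the resulting logarithmic exponent is exactly $3/2$ inside $\phi_n$, which translates to $\log^3(n)$ in $r^*$ after taking the fixed point. Carefully tracking all logarithmic factors through the chaining and the peeling levels, and verifying that the surviving constants are universal rather than depending on $\mathcal{F}$ or the sample $S$, is where the bulk of the technical work lies and why a direct appeal to the result of \cite{Rakhlin2017} is the most economical way to finish.
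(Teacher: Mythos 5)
The paper does not supply a proof of this lemma at all: it is quoted verbatim as a black-box citation of \cite[Lemma 8.i]{Rakhlin2017} and invoked directly in the proof of \Cref{thm2}, so there is no in-paper argument to compare your sketch against. Your outline is structurally sensible and captures the right ingredients—Talagrand contraction to strip the square (note $x\mapsto x^2$ on $[-1,1]$ is indeed $2$-Lipschitz, though different contraction statements put the factor at $2$ rather than $4$), Dudley-type chaining at the localized empirical $L^2$ radius $\sqrt{r}$, and then a Sudakov-style inversion to trade the metric-entropy bound back for the global Rademacher complexity, which is exactly where the $\log$ factors accumulate. However, the third step is not a routine bookkeeping exercise: replacing the entropy integral by $\mathfrak{R}_n(\mathcal{F})$ uniformly over all samples $S$ and all radii, while keeping only polylogarithmic loss and universal constants, is the actual theorem being cited, and your paragraph gestures at it without constructing the argument. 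As you note yourself, the honest conclusion is to invoke the reference, and that is precisely what the paper does; treating the lemma as a citation is therefore the correct reading, and your writeup is best viewed as an annotation of why the cited bound is plausible rather than a replacement proof.
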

    \begin{lemma}\label{lem:rak2}\cite[Lemma 9]{Rakhlin2017}
        For any class $\mathcal{F}$ of real valued functions with image contained in the unit interval, and $\delta > 0$, with probability at least $1 - 4 \delta$, if $X_1, \dots, X_n \sim p$ i.i.d., for all $f, f' \in \mathcal{F}$ 
        \begin{equation}
            \mathbb{E}[(f - f')^2] \leq 2\widehat{\mathbb{E}} (f - f')^2 + C(r^* + \beta)
        \end{equation}
        where $r^*$ is the localization radius of $\mathcal{G} = \{(f - f')^2| f,f' \in \mathcal{F}\}$ and
        \begin{equation}
            \beta = \frac{\log \frac 1\delta + \log\log n}{n}
        \end{equation}
        and $C$ is a universal constant.
    \end{lemma}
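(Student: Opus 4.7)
The plan is to deduce this statement from a localized version of Talagrand's concentration inequality applied to the class $\mathcal{G} = \{(f - f')^2 : f, f' \in \mathcal{F}\}$, combined with a peeling (slicing) argument over level sets $\{g : \mathbb{E} g \leq r\}$. Because $f, f' \in [0,1]$, every $g \in \mathcal{G}$ takes values in $[0,1]$, and moreover $\mathrm{Var}(g) \leq \mathbb{E}[g^2] \leq \mathbb{E}[g]$. This variance-mean inequality is what makes fast rates available: Talagrand--Bousquet-type concentration applied uniformly over $\mathcal{G}[r,S]$ gives, with probability at least $1-\delta$, a deviation bound of the shape
\begin{equation*}
	\sup_{g \in \mathcal{G}[r,S]} \bigl(\mathbb{E} g - \widehat{\mathbb{E}} g\bigr) \leq 2 \widehat{\mathfrak{R}}_n(\mathcal{G}[r,S], S) + C\sqrt{r \beta} + C \beta,
\end{equation*}
where $\beta = (\log(1/\delta) + \log\log n)/n$. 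By the definition of the upper function, the first term on the right is at most $2\phi_n(r)$, and by the definition of the localization radius, $\phi_n(r^*) \leq r^*$.

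The next step is the peeling argument. I would partition $\mathcal{G}$ into geometric shells $\mathcal{G}_k = \{g \in \mathcal{G} : 2^{k-1} r^* < \widehat{\mathbb{E}} g \leq 2^k r^*\}$ for $k = 1, 2, \dots, K$, together with a base shell $\mathcal{G}_0 = \{g : \widehat{\mathbb{E}} g \leq r^*\}$. Since $g \in [0,1]$, only $K = O(\log n)$ shells are nonempty, so a union bound costs only a $\log\log n$ factor inside $\beta$ (which is why the lemma's $\beta$ contains precisely that term). On each shell $\mathcal{G}_k$, applying the Talagrand bound with $r = 2^k r^*$ and using $\phi_n(2^k r^*) \leq 2^k \phi_n(r^*) \leq 2^k r^*$ (by a standard sub-root / concavity property of upper functions) gives
\begin{equation*}
	\mathbb{E} g \;\leq\; \widehat{\mathbb{E}} g + C\bigl(2^k r^* + \sqrt{2^k r^* \cdot \beta} + \beta\bigr) \;\leq\; \widehat{\mathbb{E}} g + C(\widehat{\mathbb{E}} g + r^* + \beta),
\end{equation*}
using AM-GM to absorb the cross term $\sqrt{2^k r^*\beta}$ into $2^k r^* + \beta$. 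On the base shell $\mathcal{G}_0$, the same inequality holds with $\widehat{\mathbb{E}} g$ replaced by $r^*$. Rearranging and redefining the constant $C$ yields the stated multiplicative inequality $\mathbb{E}(f - f')^2 \leq 2 \widehat{\mathbb{E}}(f - f')^2 + C(r^* + \beta)$.

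The main obstacle, I expect, is the correct application of the localized concentration step: one needs a version of Talagrand's inequality in which the variance proxy adapts to the shell (so that the deviation is $\sqrt{r\beta}$ rather than $\sqrt{\beta}$), and one needs to argue that the empirical Rademacher complexity $\widehat{\mathfrak{R}}_n(\mathcal{G}[r,S], S)$ can be replaced by the worst-case $\phi_n(r)$ without losing the high-probability guarantee. Both are handled in the Bousquet--Koltchinskii style, but require a symmetrization step and a careful contraction argument to pass from the complexity of $\mathcal{G}$ to that of $\mathcal{F}$ (the squared map $(u,v) \mapsto (u-v)^2$ is $2$-Lipschitz on $[0,1]^2$, so Ledoux--Talagrand contraction loses only a constant). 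Once these two ingredients are in place, the peeling and union bound are routine, and the $4\delta$ in the probability statement is simply $\delta$ doubled for Talagrand plus $\delta$ doubled for the symmetrization/contraction step.
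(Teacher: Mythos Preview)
The paper does not prove this lemma; it is quoted as \cite[Lemma 9]{Rakhlin2017} and used as a black box in the proof of Proposition~\ref{thm2}. There is thus no proof in the paper to compare your proposal against.

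That said, your sketch is the standard route to such lower-isometry inequalities (Talagrand/Bousquet concentration plus geometric peeling over empirical-size shells, in the style of Bartlett--Bousquet--Mendelson and Koltchinskii), and the ingredients you list are the right ones: the variance--mean bound $\mathrm{Var}(g)\leq \mathbb{E} g$ for $[0,1]$-valued $g$, a shell-adapted deviation $\sqrt{r\beta}$, the sub-root property giving $\phi_n(r)\leq r$ for $r\geq r^*$, and a union bound over $O(\log n)$ shells that accounts for the $\log\log n$ in $\beta$. One small confusion: the contraction step you describe at the end --- passing from the complexity of $\mathcal{G}$ to that of $\mathcal{F}$ via the $2$-Lipschitz property of $(u,v)\mapsto (u-v)^2$ on $[0,1]^2$ --- is not needed for \emph{this} lemma, since $r^*$ here is already the localization radius of $\mathcal{G}$ itself. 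That contraction is the content of Lemma~\ref{lem:rak1}, which bounds $r^*$ of $\mathcal{G}$ in terms of $\mathfrak{R}_n(\mathcal{F})^2$; it plays no role in the present statement.
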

    With these results, we can prove \Cref{thm2}:
    \begin{proof}(Proof of \Cref{thm2})
        We prove for the case $\sigma^2 = 0$, as we rely only on the dissipativity and Lipschitz assumptions; thus \Cref{cor1} and \Cref{cor2} allow us to apply the same argument with slightly different constants. \par
        By the fundamental theorem of calculus and the dissipativity assumption, we may invoke \Cref{subgaussian} to get that for sufficiently large $R > 0$ that  there is a constant $C$ such that the probability that $X \sim p$ has norm bigger than $R$ is bounded above by $C e^{- \frac{mR^2}{4}}$.  By a union bound, with probability at least $1 - Cn e^{- \frac{R^2}{m}}$ all of the samples lie in the ball bounded by $R$.  On this event then, the norm of $\nabla \log p$ evaluated on the data is bounded by $M R + B$ by \Cref{growth}.  Thus up to a factor of $M R + B$ we are in the situation of \Cref{lem:rak1} and \Cref{lem:rak2} with $\nabla \log p \mathbf{1}_{||X|| \leq R} \in \mathcal{F}$.  Let $s^* = \nabla \log p$ and 
        \begin{align}
            \widehat{s} &\in \argmin_{s\in\mathcal{F}} \widehat{\mathbb{E}}[||s(X) - s^*(X)||^2] .
        \end{align}
        Clearly,
        \begin{equation}
            \widehat{\mathbb{E}}[||\widehat{s}(X) - s^*(X)||^2] = 0
        \end{equation}
        since $s^*\in\mathcal{F}$ gives that
        \begin{equation}
            0 \leq \widehat{\mathbb{E}}[||\widehat{s}(X) - s^*(X)||^2] \leq \widehat{\mathbb{E}}[||s^*(X) - s^*(X)||^2] = 0
        \end{equation} 
        We have
        \begin{equation}
             \mathbb{E}[||\widehat{s}(X) - s^*(X)||^2] = \sum_{i=1}^d \mathbb{E}[(\widehat{s}(X)_i - s^*(X)_i)^2 ]
        \end{equation}
        where the subscript $i$ denotes coordinates.  Applying \Cref{lem:rak2} yields
        \begin{align}
            \mathbb{E}[(\widehat{s}(X)_i - s^*(X)_i)^2] &\leq 2 \widehat{\mathbb{E}}[(\widehat{s}(X)_i- s^*(X)_i)^2] + C (MR + B)(r^*_i + \beta) 
        \end{align}
        where $r^*_i$ is the localization radius for the coordinate restriction $\mathcal{F}_i$.
        Applying \Cref{lem:rak1} to bound $\sum r^*_i$ and noting that we are off by a factor of $(MR + B)$ from the result in \cite{Rakhlin2017}, gives that
        \begin{equation}
            \sum_{i=1}^d r^*_i \leq (MR + B) C \log^3(n) \cdot \sum_{i=1}^d \mathfrak{R}_n(\mathcal{F}_i)^2 \leq (MR + B) C \log^3(n) \cdot  \mathfrak{R}_n(\mathcal{F})^2
        \end{equation}
        which concludes the proof.
    \end{proof}

\section{The log-Sobolev Constant and Convergence to the Stationary Distribution}\label{app:sobolev}
    Background on log-Sobolev inequalities can be found in \cite{Bakry}.  Recall that the generator of the process $W(t)$ is given by a second order differential operator $\mathcal{L}$ acting on a test function $u$ by
    \begin{equation}
        \mathcal{L}u = \Delta u + \langle \nabla \log p_{\sigma^2}, \nabla u \rangle
    \end{equation}
    We call the Dirichlet form evaluated on a function $f$:
    \begin{equation}
        \mathcal{E}(f) = \int ||\nabla f||^2 p_{\sigma^2} d x
    \end{equation}
    Note that $W(t)$, if $\nabla \log p_{\sigma^2}$ is Lipschitz, has a unique invariant distribution of $p_{\sigma^2}$.  We say that $p_{\sigma^2}$ satisfies a Poincar{\'e} inequality  with constant $c_P$ if for all measures $\mu \ll p_{\sigma^2}$, we have
    \begin{equation}
        \int \left|\frac{d \mu}{d p_{\sigma^2}} - 1 \right|^2 p_{\sigma^2} \leq c_P \mathcal{E}\left( \sqrt{\frac{d \mu}{d p_{\sigma^2}}}\right)
    \end{equation}
    We say that $p_{\sigma^2}$ satisfies a log-Sobolev inequality with constant $c_{LS}$ if for all $\mu \ll p_{\sigma^2}$, we have
    \begin{equation}
        KL(\mu, p_{\sigma^2}) \leq c_{LS} \mathcal{E}\left(\sqrt{\frac{d \mu}{d p_{\sigma^2}}} \right)
    \end{equation}
    where $KL(\mu, \nu) = \mathbb{E}_\mu \left[\log \frac{d \mu}{d \nu} \right]$ is the relative entropy.  If $\nabla \log p$ were strongly concave, then there are well-known bounds on the log-Sobolev constant; as we assume no such convexity, we, as in \cite{RakhlinRaginsky}, use a dissipativity condition and the Lyaponov function criteria found in \cite{Guillin,Cattiaux}, presented in the following two theorems:
    \begin{theorem}(\cite{Guillin})
        Let $V: \mathbb{R}^d \to [1,\infty)$ be a real valued function, and let $\mathcal{L}$ be the generator of the diffusion $W$ with stationary distribution $p$.  If there are constants $\lambda_1, \lambda_2, R > 0$ such that
        \begin{equation}
            \frac{\mathcal{L}V(x)}{V(x)} \leq - \lambda_1 + \lambda 2 \mathbf{1}_{B_R}(x)
        \end{equation}
        Then $p$ satisfies a Poincar{\'e} inequality with constant
        \begin{equation}
            c_P \leq \frac{1}{\lambda_1} \left(1 + C \lambda_2 R^2 e^{ \osc_R(\log p)} \right)
        \end{equation}
        where $C > 0$ is a universal constant and for a continuous, real-valued function $f$, we let $\osc_R(f) = \max_{B_R} f - \min_{B_R} f$.
    \end{theorem}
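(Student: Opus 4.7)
The plan is to follow the classical Lyapunov-plus-local-Poincar\'e strategy for deducing a Poincar\'e inequality from a Lyapunov-function criterion. Writing the inequality in the standard form $\mathrm{Var}_p(f) \leq c_P \,\mathcal{E}(f)$, I would reduce to the case $\int f\,dp = 0$ and establish $\int f^2\,dp \leq c_P \int |\nabla f|^2\,dp$ for smooth compactly supported test functions, extending by density. The key idea is that the Lyapunov condition controls $f^2$ outside a large ball by the Dirichlet form, while inside the ball $B_R$ the density $p$ is comparable to Lebesgue measure up to the factor $e^{\osc_R(\log p)}$, reducing the local part to the classical Euclidean Poincar\'e inequality on $B_R$.

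Concretely, multiplying the hypothesis $\lambda_1 \leq -\mathcal{L}V/V + \lambda_2 \mathbf{1}_{B_R}$ by $f^2$ and integrating against $p$ yields
\begin{equation*}
    \lambda_1 \int f^2\,dp \;\leq\; -\int \frac{f^2}{V}\,\mathcal{L}V\,dp \;+\; \lambda_2 \int_{B_R} f^2\,dp.
\end{equation*}
For the first term I would exploit that $\mathcal{L}$ is self-adjoint in $L^2(p)$, so integration by parts gives
\begin{equation*}
    -\int \frac{f^2}{V}\,\mathcal{L}V\,dp \;=\; \int \nabla\!\left(\frac{f^2}{V}\right)\cdot \nabla V\,dp \;=\; 2\int \frac{f\,\nabla f\cdot \nabla V}{V}\,dp - \int \frac{f^2 |\nabla V|^2}{V^2}\,dp,
\end{equation*}
and the AM-GM inequality $2ab \leq a^2 + b^2$ applied with $a = |\nabla f|$ and $b = f|\nabla V|/V$ collapses the expression to the clean bound $-\int (f^2/V)\,\mathcal{L}V\,dp \leq \int |\nabla f|^2\,dp = \mathcal{E}(f)$.

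For the second term I would invoke the standard Euclidean Poincar\'e inequality on the ball with constant of order $R^2$, applied to $f$ minus its Lebesgue-average $\bar{f}_L = |B_R|^{-1}\int_{B_R} f\,dx$, namely $\int_{B_R}(f - \bar{f}_L)^2\,dx \leq CR^2 \int_{B_R} |\nabla f|^2\,dx$. Transferring between Lebesgue and the weighted measure $p\,dx$ on $B_R$ costs a multiplicative factor of $\sup_{B_R} p/\inf_{B_R} p = e^{\osc_R(\log p)}$, producing $\int_{B_R}(f - \bar{f}_L)^2\,dp \leq CR^2 e^{\osc_R(\log p)}\mathcal{E}(f)$. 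To swap $\bar{f}_L$ for the global mean (which is $0$), I would split $\int_{B_R} f^2\,dp \leq 2\int_{B_R}(f - \bar{f}_L)^2\,dp + 2\bar{f}_L^2\, p(B_R)$ and bound $\bar{f}_L^2\, p(B_R)$ using $\bar{f}_L\, p(B_R) = -\int_{B_R^c} f\,dp$ together with Cauchy--Schwarz, so that the leftover fraction of $\int f^2\,dp$ can be absorbed into the left-hand side. Collecting everything and dividing by $\lambda_1$ gives the advertised constant $c_P \leq \lambda_1^{-1}\bigl(1 + C\lambda_2 R^2 e^{\osc_R(\log p)}\bigr)$.

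The main obstacle is the integration-by-parts step: justifying that the boundary terms at infinity vanish requires non-trivial control on the growth of $V$ and the decay of $p$, so one must approximate by smooth compactly supported test functions and pass to a limit while tracking constants uniformly, which is where the assumption $V \geq 1$ and the differential nature of $\mathcal{L}$ really get used. A secondary subtlety is closing the loop when replacing $\bar{f}_L$ by the global mean: the absorption of $\bar{f}_L^2 p(B_R)$ into the $\lambda_1 \int f^2\,dp$ term succeeds cleanly only when $\lambda_2 p(B_R^c)$ sits safely below $\lambda_1$, so some care is needed to ensure the absorption does not degrade the final constant.
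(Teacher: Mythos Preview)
The paper does not prove this theorem; it is stated as a citation of \cite{Guillin} and used as a black box in the proof of \Cref{prop4}. So there is no ``paper's own proof'' to compare against. Your outline is essentially the standard Lyapunov-plus-local-Poincar\'e argument from the cited source, and the overall architecture is correct.

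One concrete glitch: in the final absorption step you write $\bar{f}_L\, p(B_R) = -\int_{B_R^c} f\,dp$, but $\bar{f}_L$ is the \emph{Lebesgue} average $|B_R|^{-1}\int_{B_R} f\,dx$, so this identity does not hold. The equation you want is for the $p$-average $\bar{f}_p = p(B_R)^{-1}\int_{B_R} f\,dp$, which does satisfy $\bar{f}_p\, p(B_R) = -\int_{B_R^c} f\,dp$ when $\int f\,dp = 0$. The cleanest fix, and the one in the original reference, is to avoid the issue entirely: rather than centering $f$ at its global $p$-mean, apply the whole Lyapunov computation to $g = f - c$ with $c$ chosen as the $p$-average of $f$ over $B_R$. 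Since $\mathrm{Var}_p(f) \leq \int g^2\,dp$ for any constant $c$, nothing is lost, and the local term becomes $\int_{B_R} g^2\,dp = \int_{B_R}(f - \bar f_p)^2\,dp$, which is directly a local variance and is bounded by $CR^2 e^{\osc_R(\log p)}\,\mathcal{E}(f)$ after comparing $p$ to Lebesgue on $B_R$. This removes the need for any absorption-into-the-left-hand-side argument and eliminates the worry you flag about $\lambda_2 p(B_R^c)$ versus $\lambda_1$.
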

    \begin{theorem}(\cite{Cattiaux})
        Suppose that $p$ is a measure such that $\nabla^2 \log p \succeq - K I_d$ for $K \geq 0$ in the sense of matrices and that $p$ satisfies a Poincar{\'e} inequality with constant $c_P$.  Further suppose that there is a Lyaponov function $V: \mathbb{R}^d \to [1,\infty)$ such that
        \begin{equation}
            \frac{\mathcal{L}V(x)}{V(x)} \leq \kappa - \gamma ||x||^2
        \end{equation}
        Then $p$ satisfies a log-Sobolev inequality with constant
        \begin{equation}
            c_{LS} \leq \frac{2K}{\gamma} + \frac 2K + c_P\left(2 + \frac {2K}\gamma \left( \kappa + \gamma \mathbb{E}_{X \sim p}\left[ ||X||^2\right]\right)\right) 
        \end{equation}
    \end{theorem}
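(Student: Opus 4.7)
The plan is to derive a defective log-Sobolev inequality by combining the curvature lower bound with the Lyapunov condition, and then promote it to a genuine log-Sobolev inequality via Rothaus's lemma using the Poincar\'e inequality that is already hypothesized.

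The first step is to translate the Lyapunov bound $\mathcal{L}V/V \leq \kappa - \gamma \,||x||^2$ into a weighted moment estimate for a smooth test function $f$. Multiplying through by $f^2 p$ and integrating, using $\int \mathcal{L}V \cdot (f^2/V)\, dp = -\int \Gamma(V, f^2/V)\, dp$ together with the chain rule for the carr\'e-du-champ, yields an inequality of the form
\begin{equation*}
\gamma \int ||x||^2 f^2 \, dp \leq \kappa \int f^2 \, dp + 2\,\mathcal{E}(f)^{1/2} \left( \int \frac{\Gamma(V)}{V^2} f^2 \, dp \right)^{1/2}.
\end{equation*}
Since $V \geq 1$, the $\Gamma(V)/V^2$ ratio can be bounded and the cross term absorbed, giving control of $\int ||x||^2 f^2\, dp$ by an affine combination of $\mathcal{E}(f)$ and $||f||_2^2$.

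The second step is to invoke the Bakry-Emery curvature condition $\nabla^2 \log p \succeq -K I_d$, i.e.\ $CD(-K,\infty)$. Along the semigroup $P_t$ generated by $\mathcal{L}$, curvature $-K$ gives a commutation estimate $\Gamma(P_t f) \leq e^{2Kt} P_t \Gamma(f)$, which after the standard Bakry-Emery interpolation of $t \mapsto P_t(P_{T-t}f \log P_{T-t}f)$ and a Cauchy-Schwarz step produces a defective LSI in which the defect is linear in $\int ||x||^2 f^2\, dp$. Substituting the Lyapunov-derived weighted moment bound from the first step then gives a defective log-Sobolev inequality of the form $\mathrm{Ent}_p(f^2) \leq A \mathcal{E}(f) + B \,||f||_2^2$ with $A$ on the order of $2/K + 2K/\gamma$ and $B$ on the order of $(2K/\gamma)(\kappa + \gamma \,\mathbb{E}_p[||X||^2])$.

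The final step is Rothaus's lemma: a defective LSI of the above form, combined with a Poincar\'e inequality with constant $c_P$, yields a true LSI with constant bounded by $A + (B + 2)c_P$. Substituting the values of $A$ and $B$ reproduces exactly the bound claimed in the statement. The main obstacle is the curvature-based defective LSI in the second step: extracting both the precise $2/K$ prefactor on $\mathcal{E}(f)$ and the correctly normalized $K$ prefactor on $\int ||x||^2 f^2 \, dp$ requires the right choice of interpolation time and careful use of the $CD(-K,\infty)$ gradient estimate, since any mismatch propagates multiplicatively through Rothaus's step and distorts the final constant.
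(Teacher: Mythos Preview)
The paper does not prove this theorem; it is quoted from \cite{Cattiaux} and invoked as a black box in the proof of \Cref{prop4}, so there is no in-paper argument to compare against.

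On the proposal itself: the three-step architecture (Lyapunov condition $\Rightarrow$ weighted second-moment bound; curvature $-K$ $\Rightarrow$ defective LSI with defect involving $\int \|x\|^2 f^2\,dp$; Rothaus $\Rightarrow$ genuine LSI) is exactly the Cattiaux--Guillin--Wu strategy, and your Rothaus step is correct. There is, however, a gap in Step~1. The claim ``since $V\ge 1$, the $\Gamma(V)/V^2$ ratio can be bounded'' is unsupported: $V\ge 1$ tells you nothing about $\nabla V$. The correct computation avoids this entirely. Expanding $\Gamma(V,f^2/V)=2(f/V)\,\nabla f\cdot\nabla V-(f^2/V^2)\,\|\nabla V\|^2$ and completing the square gives $\Gamma(V,f^2/V)\le \Gamma(f)$ pointwise, so that
\[
-\int \frac{\mathcal{L}V}{V}\,f^2\,dp \;\le\; \mathcal{E}(f),
\qquad\text{hence}\qquad
\gamma\int \|x\|^2 f^2\,dp \;\le\; \kappa\int f^2\,dp + \mathcal{E}(f),
\]
with no cross term and no need to control $\Gamma(V)/V^2$ separately.

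Step~2 is also too vague as written. The clean route to the defective LSI with the stated constants is not a raw semigroup interpolation but the HWI inequality under $CD(-K,\infty)$: for $\int f^2\,dp=1$,
\[
\mathrm{Ent}_p(f^2)\;\le\; W_2(f^2 p,\,p)\sqrt{I_p(f^2 p)}+\tfrac{K}{2}\,W_2(f^2 p,\,p)^2,
\]
with $I_p(f^2 p)=4\,\mathcal{E}(f)$. Applying Young's inequality to the first term and the crude bound $W_2^2\le 2\int\|x\|^2 f^2\,dp + 2\,\mathbb{E}_p\|X\|^2$ gives a defective LSI with $A=\tfrac{2}{K}+\tfrac{2K}{\gamma}$ and $B=\tfrac{2K}{\gamma}\bigl(\kappa+\gamma\,\mathbb{E}_p\|X\|^2\bigr)$ after substituting the Lyapunov moment bound. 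Rothaus then gives exactly the claimed constant. A direct semigroup argument can be made to work, but it does not produce the $\int\|x\|^2 f^2\,dp$ defect as transparently as you suggest; the second-moment term enters through the $W_2$ bound, not through the commutation estimate.
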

    Further discussion regarding both theorems can be found in the appendix of \cite{RakhlinRaginsky}.  With this in hand, we are ready to prove \Cref{prop4}:
    \begin{proof}(Proof of \Cref{prop4})
        Given that we rely on the $(m,b)$-dissipativity and the $\frac M2$-Lipschitz of $p_{\sigma^2}$, it suffices to prove in the case of $\sigma^2 = 0$ and apply \Cref{cor2}.  We apply in sequence the theorems from \cite{Guillin,Cattiaux}.  Consider the following Lyapanov function: $V(x) = e^{\frac m4 ||x||^2}$.  We compute:
        \begin{align}
            \frac{\mathcal{L}V}{V} &= \Delta\left( \frac m4 ||x||^2 \right) + \left|\left| \nabla\left( \frac{m}{4} ||x||^2\right) \right|\right| + \left\langle \frac m4 x, \nabla \log p(x) \right\rangle \\
            &= \frac{md}{2} + \frac{m^2}{4} ||x||^2 - \frac m2 \langle x, -\nabla \log p(x) \rangle \\
            &\leq \frac m2\left( d + b\right) - \frac{m^2}{4} ||x||^2
        \end{align}
        by the dissapitivity assumption.  Let
        \begin{align}
            R^2 = \frac{4\left(d + \frac b2\right)}{m} && \lambda_1 = m(d + b) && \lambda_2 = \frac m2\left(d + b\right)
        \end{align}
        in the theorem of \cite{Guillin}.  Then we see that by \Cref{growth} and the fundamental theorem of calculus, we may choose $B$ such that
        \begin{equation}
            \osc_R(\log p) \leq (M+B)R^2 + B
        \end{equation}
        Thus we have
        \begin{equation}
            c_P \leq \frac{2}{m(d + b)} \left( 1 + C  (d + b)^2 e^{\frac{8(M+B)(d+b)}{m}}\right)
        \end{equation}
        Now, we may apply the theorem of \cite{Cattiaux} with the same Lyapanov function.  Note that as $\nabla \log p$ is $\frac{M}{2}$-Lipschitz, we have $\nabla^2 \log p \geq - M I_d$.  Then if we set
        \begin{align}
            \kappa = \frac m2 \left(d + b\right) && \gamma = \frac {m^2}4
        \end{align}
        then the above computation with the Lyaponov function shows that the assumptions of the theorem of \cite{Cattiaux} hold.  In order to conclude, we need to bound $\mathbb{E}_p[||X||^2]$.  Let $W(t)$ be the Langevin diffusion initialized on a measure with finite second moment.  Then by \Cref{fourthmoment}, we have
        \begin{align}
            \mathbb{E}_p[||X||^2] = \lim_{t \to \infty} \mathbb{E}[||W(t)||^2] \leq \lim_{t \to \infty} \mathbb{E}\left[||W(0)||^2\right] e^{-2mt} + \frac{b+d}{m}\left(1 - e^{-2mt}\right) = \frac{b+d}{m}
        \end{align}
        The result follows.
    \end{proof}
\section{Bounding the Distance between $\widehat{W}(t)$ and $W(t)$}\label{app3}
    In this appendix, we provide a detailed account of the results appearing in \Cref{subsec:hatdiffusion}, in particular a proof of  the key proposition, \Cref{prop3}.  We assume that $\widehat{f}$ is Lipschitz and, for the sake of convenience, we assume that $\widehat{f}$, $\nabla\log p_{\sigma^2}$ have the same Lipschitz constant $\frac M2$.  Again we let $W(t)$ evolve according to the continuous Langevin process, and we consider the process $\widehat{W}(t)$, the unique solution
	\begin{align*}
		d \widehat{W}(t) =  \widehat{f}\left(\widehat{W}(t)\right) d t + \sqrt{ 2} d B(t) && \widehat{W}(0) = W(0)
	\end{align*}
	First, we need to control the Wasserstein distance in terms of the relative entropy.  In order to do this, we apply the following result of Bolley and Villani: 
	\begin{theorem}[Corollary 2.3 from \cite{Bolley2005}] \label{bolley}
		Let $X \sim \mu$ be a random variable in $\mathbb{R}^d$ and suppose that
		\begin{equation}
			C = 2 \inf_{\alpha > 0} \left(\frac 1\alpha \left(\frac 32 + \log \mathbb{E}\left[e^{\alpha ||X||^2}\right]\right)\right)^{\frac 12} < \infty
		\end{equation}
		Then for all $\nu \ll \mu$
		\begin{equation}
			W_2(\nu, \mu) \leq C \left(KL(\nu, \mu)^{\frac 12} + \left(\frac{KL(\nu, \mu)}{2}\right)^{\frac 14}\right)
		\end{equation}
	\end{theorem}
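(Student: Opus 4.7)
The plan is to follow the classical Bolley-Villani strategy, which reduces the $W_2$ bound to a weighted total-variation bound and then proves the latter via the Donsker-Varadhan variational formula for relative entropy combined with a Pinsker-style truncation argument.

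The first step is a geometric reduction. I would establish the elementary inequality
\[
W_2(\mu,\nu)^2 \leq 2 \int \|x\|^2 \, d|\nu - \mu|(x)
\]
by coupling the two measures as follows: if $f,g$ are densities of $\nu,\mu$ against a common dominating measure, the common part $\min(f,g)$ is kept in place at zero transport cost, while the excess mass of $\nu$ (density $(f-g)^+$) and the deficit mass of $\mu$ (density $(g-f)^+$) are matched via any joint distribution. Applying $(x-y)^2 \leq 2\|x\|^2 + 2\|y\|^2$ to the squared cost yields the displayed bound. Thus it suffices to control $\int \|x\|^2 \, d|\nu - \mu|$ in terms of $KL(\nu\|\mu)$ and the exponential-moment hypothesis on $\mu$.

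The substantive step is a weighted Pinsker-type estimate: assuming $\mathbb{E}_\mu[e^{\alpha \|X\|^2}] \leq A < \infty$, one shows that $\int \|x\|^2 \, d|\nu - \mu|$ is bounded by a multiple of $\alpha^{-1}(3/2 + \log A)$ times a function of $KL(\nu\|\mu)$ whose dominant terms (after taking the square root to recover $W_2$) are $\sqrt{KL}$ and $KL^{1/4}$. I would begin from the Donsker-Varadhan duality
\[
\int g\, d\nu \leq KL(\nu\|\mu) + \log\int e^g\, d\mu,
\]
applied with $g(x) = t\|x\|^2$ for a free parameter $t > 0$, which controls the second moment of $\nu$ in terms of the exponential moment of $\mu$ plus a logarithmic correction. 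I would then decompose $|d\nu - d\mu|$ according to whether the Radon-Nikodym derivative $d\nu/d\mu$ is close to or far from $1$: on the bulk region, classical Pinsker ($\|\nu-\mu\|_{TV} \leq \sqrt{2 KL}$) generates the $\sqrt{KL}$ contribution, while on the tail the Donsker-Varadhan second-moment bound, coupled with a Markov estimate at an optimally chosen truncation threshold, generates the $KL^{1/4}$ contribution.

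Combining the two ingredients, taking a square root, and optimizing over $\alpha > 0$ produces the claimed constant $C$. I expect the main obstacle to be executing the weighted Pinsker step cleanly: the simultaneous appearance of $\sqrt{KL}$ and $KL^{1/4}$ does not follow from any single standard inequality, but from an interpolation between the small-$KL$ Pinsker regime and the large-$KL$ exponential-tail regime, together with a Young-type manipulation to separate and balance the two contributions. Everything else in the argument is reduction, convexity, or optimization of constants.
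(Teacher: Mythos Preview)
The paper does not prove this theorem at all: it is stated as a quotation of Corollary~2.3 from \cite{Bolley2005} and is used as a black box, with no argument supplied in either the main text or the appendices. So there is no ``paper's own proof'' against which to compare your attempt.

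That said, what you have sketched is precisely the strategy of the original Bolley--Villani paper: the coupling lemma giving $W_2(\mu,\nu)^2 \le 2\int \|x\|^2\,d|\nu-\mu|$, followed by a weighted Csisz\'ar--Kullback--Pinsker inequality obtained via Donsker--Varadhan duality and a truncation argument that separates the Pinsker regime (producing $\sqrt{KL}$) from the exponential-tail regime (producing $KL^{1/4}$). Your outline is correct and faithful to the source; the only caveat is that the ``interpolation'' you describe is somewhat delicate in the original, and the constants $3/2$ and $2$ arise from specific choices in the truncation and Young-inequality steps rather than from a generic optimization, so if you were to write this out in full you would want to follow \cite{Bolley2005} closely to recover the stated form.
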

	As a corollary, we have
	\begin{proposition}\label{cor:bolley}
		Suppose that $\widehat{f}$ is $(m, b)$-dissipative and $\frac M2$-Lipschitz, and that we are in the setting of Assumption \ref{assumption4} for some $\alpha < m$.  Then if $\nu_t$ is the law of $W(t)$ and $\mu \ll \nu_t$ then
		\begin{equation}
			W_2(\mu, \widehat \nu_t) \leq 2 \left(\frac{3}{2\alpha} + \frac{k_\alpha}{\alpha} + 2 (b + d)t\right)^{\frac 12} \left(\sqrt{KL(\mu, \widehat \nu_t)} + \left(\frac{KL(\mu, \widehat \nu_t)}{2}\right)^{\frac 14} \right)
		\end{equation}
		In particular, if $KL(\mu, \widehat \nu_t) \leq 1$ then
		\begin{equation}
			W_2(\mu, \widehat \nu_t) \leq 32 \left(\frac{3}{2\alpha} + \frac{k_\alpha}{\alpha} + 2 (b + d)t\right)^{\frac 12} (KL(\mu, \widehat \nu_t))^{\frac 14} \leq C_\alpha \sqrt{(b + d)t}KL(\mu,\widehat  \nu_t)^{\frac 14}
		\end{equation}
	\end{proposition}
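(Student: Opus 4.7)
The plan is to apply Theorem \ref{bolley} of Bolley--Villani directly with $\mu = \mu$ and $X \sim \nu_t$, which reduces the whole task to producing a uniform-in-time bound on the Gaussian exponential moment $\mathbb{E}\!\left[e^{\alpha \|W(t)\|^2}\right]$. Once I have such a bound of the qualitative form $\log \mathbb{E}[e^{\alpha \|W(t)\|^2}] \leq k_\alpha + 2\alpha(b+d)t$, plugging it into the definition of the constant $C$ in Theorem \ref{bolley} yields
\[
\tfrac{1}{\alpha}\!\left(\tfrac{3}{2} + \log \mathbb{E}[e^{\alpha \|W(t)\|^2}]\right) \leq \tfrac{3}{2\alpha} + \tfrac{k_\alpha}{\alpha} + 2(b+d)t,
\]
so taking the infimum trivially at the admissible $\alpha$ coming from Assumption \ref{assumption4} produces the first displayed inequality.

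The main work is therefore the exponential moment bound, which I would obtain by It\^o's formula applied to $\Phi(t) = e^{\alpha \|W(t)\|^2}$. The drift contribution is $\alpha e^{\alpha\|W\|^2}\!\left(2\langle W, \nabla\log p_{\sigma^2}(W)\rangle + 2d\right)$, and the quadratic variation term contributes $4\alpha^2 \|W\|^2 e^{\alpha\|W\|^2}$. Using that $-\nabla \log p_{\sigma^2}$ is $(m, b)$-dissipative (via \Cref{cor2} if needed), the drift bound reads
\[
2\alpha\langle W, \nabla\log p_{\sigma^2}(W)\rangle \leq -2\alpha m \|W\|^2 + 2\alpha b.
\]
Combining with the It\^o correction and choosing $\alpha$ small enough that $2\alpha m - 4\alpha^2 \geq 0$ (which is exactly why the statement imposes $\alpha < m$, and I would select $\alpha \leq m/2$ within that admissible window), the coefficient of $\|W\|^2$ in the expected derivative is nonpositive, leaving
\[
\tfrac{d}{dt} \mathbb{E}[\Phi(t)] \leq 2\alpha(b+d)\,\mathbb{E}[\Phi(t)].
\]
Gronwall's lemma together with the initial condition $\mathbb{E}[\Phi(0)] = k_\alpha$ from Assumption \ref{assumption4} then delivers $\mathbb{E}[e^{\alpha \|W(t)\|^2}] \leq k_\alpha\, e^{2\alpha(b+d)t}$, which after taking logarithms plugs into Bolley--Villani as described.

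For the second displayed inequality I would simply use that when $KL(\mu,\nu_t) \leq 1$ one has $\sqrt{KL(\mu,\nu_t)} \leq KL(\mu,\nu_t)^{1/4}$, so the bracketed sum is controlled by a constant multiple of $KL(\mu,\nu_t)^{1/4}$; the final form $C_\alpha \sqrt{(b+d)t}\, KL^{1/4}$ absorbs the $O(1)$ terms $\frac{3}{2\alpha} + \frac{k_\alpha}{\alpha}$ into the prefactor $C_\alpha$ (at the cost of assuming $t$ is not too small, or else bounding these terms by a constant times $(b+d)t$ crudely). The only subtle point I anticipate is the It\^o step: the local martingale $\int 2\sqrt{2}\alpha \langle W, dB\rangle \Phi(t)$ is not obviously a true martingale without an a priori estimate, so I would proceed rigorously by a standard localization argument (stopping at $\tau_n = \inf\{t : \|W(t)\| \geq n\}$, applying the deterministic differential inequality to $\mathbb{E}[\Phi(t\wedge\tau_n)]$, then sending $n \to \infty$ with monotone convergence). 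Everything else is routine bookkeeping.
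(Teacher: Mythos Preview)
Your proposal is correct and follows essentially the same route as the paper: the paper's proof simply invokes Theorem~\ref{bolley} together with \Cref{exponential}, and your It\^o computation for $e^{\alpha\|W(t)\|^2}$ is precisely the content of \Cref{exponential} (the paper handles the martingale issue by citing \cite{Djellout} rather than localizing, but either works). One small bookkeeping point: in the paper $k_\alpha$ in \Cref{exponential} denotes $\log \mathbb{E}[e^{\alpha\|W(0)\|^2}]$, so your Gronwall output should read $\log\mathbb{E}[\Phi(t)] \le k_\alpha + 2\alpha(b+d)t$ rather than $\mathbb{E}[\Phi(t)] \le k_\alpha e^{2\alpha(b+d)t}$, which then matches the stated bound exactly.
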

	\begin{proof}
		The second statement follows immediately from the first and the fact that for $c \leq 1$, $c^{\frac 12} \leq c^{\frac 14}$.  The first statement follows from \Cref{bolley} if we can bound the constant $C$.  This bound follows immediately from \Cref{exponential}, concluding the proof.
	\end{proof}
	Thus we are left with bounding the relative entropy between $\nu_t$, the law of $W(t)$ and $\widehat{\nu}_t$ the law of $\widehat{W}(t)$.  We follow the method used in \cite{RakhlinRaginsky,Bubeck2018} of the application of the Girsanov theorem.  In particular, we restate and prove \Cref{prop6}:
    \begin{proposition}
        Let $W(t), \widehat{W}(t)$ be as above and assume that $\nabla \log p_{\sigma^2}$ is $\frac M2$-Lipschitz.  Suppose further that $W(0) = \widehat{W}(0) \sim \mu_0$ such that
        \begin{equation}
            \mathbb{E}_{X \sim \mu_0} \left[ e^{2 M^2 ||X||^2}\right] < \infty
        \end{equation}
        Then
        \begin{equation}
            KL\left(\nu_t, \widehat{\nu}_t \right) = \mathbb{E}\left[\frac 14 \int_0^t \left|\left| \nabla \log p_{\sigma^2}( W(s)) - \widehat{f}( W(s))\right|\right|^2\right]
        \end{equation}
    \end{proposition}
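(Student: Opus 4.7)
The plan is to apply Girsanov's theorem to convert an SDE comparison into a Radon--Nikodym derivative between the path measures, then evaluate its log-expectation. Concretely, let $Q$ denote the path measure of $\widehat{W}$ and $P$ that of $W$ on $C([0,t];\mathbb{R}^d)$, both started at $\mu_0$ with a common driving Brownian motion. Since $\widehat f$ and $\nabla\log p_{\sigma^2}$ are $\frac{M}{2}$-Lipschitz, each has linear growth, and the moment assumption $\mathbb{E}[e^{2M^2\|W(0)\|^2}]<\infty$ combined with standard Gronwall estimates on $\mathbb{E}\|W(s)\|^2$ will be used to verify Novikov's condition
\[
  \mathbb{E}\exp\!\Bigl(\tfrac14\!\int_0^t \|\widehat f(W(s))-\nabla\log p_{\sigma^2}(W(s))\|^2\,ds\Bigr)<\infty.
\]
This is the main technical obstacle: Novikov requires an exponential moment on a quadratic-in-drift integrand, and both drifts grow linearly, so $\|\widehat f-\nabla\log p_{\sigma^2}\|^2$ can grow like $\|W\|^2$. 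Linear growth $\|\widehat f(x)-\nabla\log p_{\sigma^2}(x)\|\le M\|x\|+B'$ together with the $\alpha=2M^2$ exponential moment assumption on $W(0)$ (precisely the reason Assumption~\ref{assumption4} was stated) will suffice.

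With Novikov verified, Girsanov's theorem yields that
\[
  \frac{dQ}{dP} \;=\; \exp\!\Bigl(\tfrac{1}{\sqrt{2}}\!\int_0^t \bigl(\widehat f-\nabla\log p_{\sigma^2}\bigr)(W(s))\cdot dB_s \;-\;\tfrac14\!\int_0^t \|\widehat f-\nabla\log p_{\sigma^2}\|^2(W(s))\,ds\Bigr),
\]
and that, under $Q$, the canonical process $W$ has the law of $\widehat W$ driven by a $Q$-Brownian motion $\widetilde B_s = B_s - \tfrac{1}{\sqrt 2}\!\int_0^s(\widehat f-\nabla\log p_{\sigma^2})(W(u))\,du$. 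Then I would compute
\[
  KL(Q\,\|\,P) \;=\; \mathbb{E}_Q\!\left[\log\tfrac{dQ}{dP}\right].
\]
Substituting $dB_s = d\widetilde B_s - \tfrac{1}{\sqrt 2}(\nabla\log p_{\sigma^2}-\widehat f)(W(s))\,ds$ into the stochastic integral, the $\widetilde B$-martingale part vanishes in expectation (its $Q$-quadratic variation is integrable by the Novikov check), while the cross term cancels half of the quadratic variation correction, leaving exactly $\tfrac14 \mathbb{E}_Q\!\int_0^t\|\widehat f-\nabla\log p_{\sigma^2}\|^2(W(s))\,ds$.

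Finally, to pass from path measures to the marginal laws $\widehat\nu_t,\nu_t$, I would apply the data-processing inequality for KL under the projection $\pi_t:C([0,t];\mathbb{R}^d)\to\mathbb{R}^d$, $\pi_t(\omega)=\omega(t)$, yielding $KL(\widehat\nu_t,\nu_t)\le KL(Q,P)$. Rewriting the $Q$-expectation using the fact that $W$ under $Q$ is equal in law to $\widehat W$ under $P$ (or, under the paper's convention on the argument of the drift-difference square, directly as an expectation along the base process) gives the claimed identity. The Lipschitz assumption on both drifts and the $(m,b)$-dissipativity/moment control from earlier results are precisely what close the Novikov and Gronwall steps; everything else is manipulation of the stochastic integral.
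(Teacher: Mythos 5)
Your proof follows essentially the paper's argument: apply Girsanov's theorem to obtain the path-space Radon--Nikodym derivative, verify exponential integrability via linear growth of the drifts and the exponential moment on $W(0)$ (the paper does this through \Cref{exponential}), and identify the relative entropy as the expected log-density, where the stochastic-integral term drops out as a true martingale. You are in fact somewhat more careful than the paper on two small points --- you invoke the data-processing inequality to pass from the path measures to the time-$t$ marginals $\widehat\nu_t,\nu_t$ (which yields only $\le$ rather than the stated $=$, but that is all the downstream Bolley--Villani step requires), and you correctly note that the clean $\mathbb{E}_Q$-identity puts the drift difference along the $\widehat W$ trajectory rather than $W$, exposing a minor imprecision in the paper's statement.
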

    \begin{proof}
        We apply the version of Girsanov's theorem found in Theorem 7.20 in \cite{Lipster}.  Note that 
        \begin{equation}
            \mathbb{E}\left[\exp\left( \frac 14 \int_0^t ||\widehat{f}(\widehat W(s)) - \nabla \log p_{\sigma^2}(\widehat W(s)||^2 d s\right)  \right] \leq \mathbb{E}\left[ \exp \left( 2M^2 \int_0^t ||\widehat W(t)||^2 d s \right) \right] < \infty
        \end{equation}
        by \Cref{exponential}.  Thus by Girsanov's theorem, we have that
        \begin{align}
            \frac{d \nu_t}{d \widehat{\nu}_t} = \exp\left(\frac{1}{\sqrt{2}}\int_0^t (\widehat{f}(\widehat W(s)) - \nabla \log p_{\sigma^2}(\widehat W(s))) d B_s - \frac 14 \int_0^t ||\widehat{f}(\widehat W(s)) - \nabla \log p_{\sigma^2}(\widehat W(s))||^2 d s\right)
        \end{align}
        and so
        \begin{align}
            KL(\nu_t, \widehat{\nu}_t) &= \mathbb{E}_{ \nu_t}\left[\log \frac{\nu_t}{d \widehat{\nu}_t}  \right] \\
            &= \mathbb{E}\left[\-\int_0^t (\widehat{f}( W(s)) - \nabla \log p_{\sigma^2}( W(s))) d B_s + \frac 14 \int_0^t ||\widehat{f}(W(s)) - \nabla \log p_{\sigma^2}(W(s))||^2 d s \right]
        \end{align}
        But the first term is a real martingale by Novikov's condition so has expectation zero, yielding the result.
    \end{proof}
	Thus it suffices to bound
	\begin{align*}
		\mathbb{E}\left[\int_0^t || \widehat{f}( W(s)) - \nabla\log p_{\sigma^2}( W(s))||^2 d s \right]
	\end{align*}
	We first need the following identity:
	\begin{lemma}\label{localtime}
		Let $U \subset \mathbb{R}^d$ be measurable and let $W(s)$ be as above.  Let $\pi_s(x, y)$ be the transition density of $W(s)$.  Then if $\mathbb{E}_x[\cdot]$ denotes expectation with respect to the measure associated with $W(s)$ started at $W(0) = x$, then
		\begin{align}
			\mathbb{E}_x\left[\int_0^t \mathbf{1}_U(W(s)) d s\right] = \int_U\left(\int_0^t \pi_s(x,y) d s\right) d y
		\end{align}
	\end{lemma}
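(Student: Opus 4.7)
The plan is to prove this identity by a straightforward double application of Tonelli's theorem, together with the defining property of the transition density. Since the integrand $\mathbf{1}_U(W(s)) \geq 0$ everywhere, no integrability conditions need to be verified beyond measurability, which follows from continuity of sample paths of $W$.

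First I would interchange the expectation and the time integral. Writing out $\mathbb{E}_x[\cdot]$ as an integral against the path-space measure $\mathbb{P}_x$ on $C([0,t], \mathbb{R}^d)$, the integrand $(s, \omega) \mapsto \mathbf{1}_U(W(s)(\omega))$ is jointly measurable (by path continuity of $W$ under the strong solution guaranteed by Assumption 2) and nonnegative, so Tonelli gives
\begin{equation}
    \mathbb{E}_x\left[\int_0^t \mathbf{1}_U(W(s)) ds\right] = \int_0^t \mathbb{E}_x[\mathbf{1}_U(W(s))] ds.
\end{equation}

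Next, for each fixed $s \in (0,t]$, the law of $W(s)$ started from $x$ admits the transition density $\pi_s(x, \cdot)$, which is exactly the statement that $\mathbb{E}_x[\mathbf{1}_U(W(s))] = \mathbb{P}_x(W(s) \in U) = \int_U \pi_s(x,y) dy$. (At $s = 0$ the law is a point mass at $x$, which contributes measure zero to the outer Lebesgue integral over $[0,t]$, so this edge case is harmless.) Substituting back yields
\begin{equation}
    \mathbb{E}_x\left[\int_0^t \mathbf{1}_U(W(s)) ds\right] = \int_0^t \int_U \pi_s(x,y) dy \, ds.
\end{equation}

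Finally, I would apply Tonelli once more to swap the $s$ and $y$ integrals, using nonnegativity of $\pi_s(x,y)$, which produces $\int_U \left(\int_0^t \pi_s(x,y) ds\right) dy$ as desired. There is no real obstacle here: the existence of a (jointly measurable, strictly positive) transition density for the SDE is standard once $\nabla \log p_{\sigma^2}$ is Lipschitz and dissipative (see e.g.\ the references to \cite{Karatzas} already in use), and the only subtlety is confirming joint measurability in $(s, \omega)$ and in $(s, y)$ so that Tonelli applies; both follow from continuity of sample paths and joint continuity of the heat-kernel-type density of a uniformly elliptic diffusion. The lemma is therefore essentially bookkeeping, setting up the identification of expected occupation time with the time integral of the transition density that is then used in the proof of Proposition 3 (via bounds on $\pi_s$ through Girsanov).
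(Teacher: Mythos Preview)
Your proposal is correct and is essentially identical to the paper's own proof: the paper also interchanges expectation with the time integral, invokes the definition of the transition density, and then swaps the $s$ and $y$ integrals, all justified by Fubini (your use of Tonelli is the appropriate nonnegative version). The only difference is cosmetic: the paper cites \cite{Peres} and \cite{Geman1980} for the existence of the transition density in the non-Brownian case, whereas you appeal to the standing Lipschitz/dissipativity assumptions and \cite{Karatzas}; your slightly more careful discussion of joint measurability and the $s=0$ edge case is a welcome addition but not a different approach.
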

	\begin{proof}
		Following the proof of \cite[Theorem 3.32]{Peres} and appealing to \cite{Geman1980} for justification in the case that $W(t)$ is not just Brownian motion, we have
		\begin{align}
			\mathbb{E}_x\left[\int_0^t \mathbf{1}_U(W(s)) d s\right] = \int_0^t \mathbb{E}_x\left[\mathbf{1}_U(W(s))\right] d s = \int_0^t \int_U \pi_s(x, y) d y d s = \int_A\left(\int_0^t \pi_s(x,y) d s\right) d y
		\end{align}
		by Fubini's theorem and the definition of the transition density.
	\end{proof}
	In order to construct a bound on the relevant integral, we need a bound on $\pi_t(x,y)$.  In order to construct a Gaussian bound, we adapt an argument of \cite{Downes}:
	\begin{lemma}\label{parabolicpde}
		For fixed $x$, let $W(t)$ evolve as above and let $W(0) = x$.  If $\nabla \log p_{\sigma^2}$ is $\frac M2$-Lipschitz, $\pi_t(x,y)$ is the transition density of $W(t)$ for $y \in \mathbb{R}^d$, then we have
		\begin{align}
			\pi_t(x,y) \leq \frac{p_{\sigma^2}(y)}{p_{\sigma^2}(x)} e^{\frac{M \sqrt{d}}{2} t} g_t(x,y)
		\end{align}
		where $g_t(x,y)$ is the standard Gaussian heat kernel.
	\end{lemma}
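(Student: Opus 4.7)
My plan follows the Girsanov-Itô recipe used by Downes. Introduce a reference Brownian motion $\tilde B$ with diffusion coefficient $\sqrt 2$ starting at $x$, so that its transition density is exactly $g_t(x,\cdot)$. By Girsanov's theorem the Radon-Nikodym derivative of the Langevin path measure with respect to that of $\tilde B$ on $\mathcal F_t$ is $Z_t = \exp\bigl(\tfrac12 \int_0^t \nabla \log p_{\sigma^2}(\tilde B_s)\cdot d\tilde B_s - \tfrac14 \int_0^t \|\nabla \log p_{\sigma^2}(\tilde B_s)\|^2\,ds\bigr)$, and conditioning on the endpoint gives $\pi_t(x, y) = g_t(x, y)\,\mathbb{E}^{\mathrm{br}(x\to y)}[Z_t]$ as an expectation over the Brownian bridge from $x$ to $y$.

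\textbf{Key step: eliminate the stochastic integral.} The drift is a gradient, so Itô's formula applied to $\log p_{\sigma^2}(\tilde B_s)$ converts the stochastic integral in the Girsanov exponent into a boundary term and a Riemann integral,
\[ \int_0^t \nabla \log p_{\sigma^2}(\tilde B_s)\cdot d\tilde B_s = \log p_{\sigma^2}(\tilde B_t) - \log p_{\sigma^2}(x) - \int_0^t \Delta \log p_{\sigma^2}(\tilde B_s)\,ds. \]
Substituting and using that $\tilde B_t = y$ on the bridge, $Z_t$ factors as a density-ratio $\sqrt{p_{\sigma^2}(y)/p_{\sigma^2}(x)}$ times a Feynman-Kac factor $\exp\bigl(-\int_0^t V(\tilde B_s)\,ds\bigr)$, where $V = \tfrac12 \Delta \log p_{\sigma^2} + \tfrac14 \|\nabla \log p_{\sigma^2}\|^2$.

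\textbf{Bounding the Feynman-Kac factor.} The $M/2$-Lipschitz hypothesis forces $\|\nabla^2 \log p_{\sigma^2}\|_{\mathrm{op}}\le M/2$, which gives a dimension-dependent lower bound on $\Delta \log p_{\sigma^2}$; combined with the nonnegativity of $\|\nabla \log p_{\sigma^2}\|^2$ this bounds the bridge expectation of $\exp(-\int V\,ds)$ by $\exp(C t)$ for an explicit $C = C(M, d)$. Substituting back into the Girsanov representation produces the desired pointwise bound on $\pi_t(x,y)$ in terms of $g_t(x,y)$, the density ratio, and an exponential in $t$.

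\textbf{Main obstacle.} The bookkeeping of constants in the last step is the most delicate part: the naive conversion of an operator-norm bound on $\nabla^2 \log p_{\sigma^2}$ into a bound on its trace $\Delta \log p_{\sigma^2}$ costs a factor of $d$, so to recover the sharper $M\sqrt d$ dependence in the statement one needs a Frobenius-type estimate, perhaps exploiting the smoothing structure $p_{\sigma^2} = p\ast g_{\sigma^2}$ together with Tweedie's formula to write $\nabla^2 \log p_{\sigma^2}$ as $\sigma^{-4}\mathrm{Cov}(X\mid Y)-\sigma^{-2}I$. Likewise, upgrading the square-root density ratio produced by the Girsanov computation to the full linear ratio in the statement requires either an additional comparison step combined with reversibility ($\pi_t(x,y)p_{\sigma^2}(x) = \pi_t(y,x)p_{\sigma^2}(y)$) or a different choice of reference process in Step 1. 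I expect these two issues, especially recovering the $\sqrt d$ and the linear (rather than square-root) ratio, to absorb the bulk of the technical effort.
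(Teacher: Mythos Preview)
Your approach is the paper's: Girsanov against a Brownian reference, It\^o's formula applied to $\log p_{\sigma^2}$ to eliminate the stochastic integral, then bound $\Delta\log p_{\sigma^2}$ via the Lipschitz hypothesis and drop the nonnegative $\|\nabla\log p_{\sigma^2}\|^2$ term. The only cosmetic difference is that the paper reads off a pathwise bound on the Radon--Nikodym density rather than passing through a bridge expectation.

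Regarding the two obstacles you flag: the paper does not resolve them so much as sidestep them. It writes the Girsanov exponent as $\int_0^t\nabla\log p_{\sigma^2}\,dW-\tfrac12\int_0^t\|\nabla\log p_{\sigma^2}\|^2\,ds$ and applies It\^o with a $\tfrac12\Delta$ coefficient, i.e.\ it treats the reference process as a \emph{standard} Brownian motion rather than one with diffusion $\sqrt2$; with those coefficients the full ratio $p_{\sigma^2}(y)/p_{\sigma^2}(x)$ drops out directly and the square-root issue you encountered never arises. Your more careful $\sqrt2$-bookkeeping is arguably the right computation, and your suggestion to upgrade the square-root bound to the linear one via reversibility $\pi_t(x,y)p_{\sigma^2}(x)=\pi_t(y,x)p_{\sigma^2}(y)$ is a clean fix that the paper does not supply. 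Similarly, the paper simply asserts ``by the fact that $\nabla\log p_{\sigma^2}$ is $\tfrac M2$-Lipschitz, we have $|\Delta\log p_{\sigma^2}(y)|\le\sqrt d\,M$'' with no further argument; your concern that an operator-norm bound on the Hessian only yields $dM/2$ on the trace is well-founded. In short, both issues you identify as the bulk of the technical effort are places where the paper's proof is loose rather than gaps in your own plan.
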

	\begin{proof}
		We adapt a proof in \cite{Downes} to the case of higher dimensions.  By the same argument as in the proof of \Cref{prop6}, we may apply Girsanov's theorem.  Let $\mathbb{Q}_x$ be a measure under which $W(t) = \widetilde{B}_t$ is a $\mathbb{Q}_x$ Brownian motion started at $x$.  Let $\mathbb{P}_x$ be the original measure pertaining to the Brownian motion $B_t$ that drives $W(t)$.  Then by Girsanov's theorem, we have
		\begin{align}
			\left(\frac{d \mathbb{P}_x}{d \mathbb{Q}_x}\right)_t &= \widehat{\mathbb{E}}_x\left[\exp\left(\int_0^t \nabla \log p_{\sigma^2} (W(s)) d W(s) - \frac 12 \int_0^t \left|\left|\nabla \log p_{\sigma^2}(W(s))\right|\right|^2 d s \right)\right] \\
			&= \widehat{\mathbb{E}}_x\left[\exp\left(\int_0^t \nabla \log p_{\sigma^2}(W(s)) d \widetilde{B}_s - \frac 12 \int_0^t \left|\left|\nabla \log p_{\sigma^2}(W(s))\right|\right|^2 d s \right)\right]
		\end{align}
		where $\widehat{\mathbb{E}}_x$ denotes expectation with respect to $\mathbb{Q}_x$.  Now, by Rademacher's theorem we may apply Ito's lemma to $\log p_{\sigma^2}(\cdot)$, thus we get
		\begin{align}
			\log p_{\sigma^2}(W(t)) - \log p_{\sigma^2}(W(0)) = \int_0^t \nabla \log p_{\sigma^2}(W(s)) d W(s) + \frac 12 \int_0^t \Delta \log p_{\sigma^2}(W(s)) d s
		\end{align}
		Rearranging, we get that
		\begin{align}
			\int_0^t \nabla \log p_{\sigma^2}(W(S)) d \widetilde{B}_s = \log \left(\frac{p_{\sigma^2}(W(t))}{p_{\sigma^2}(x)}\right) - \frac 12 \int_0^t \Delta \log p_{\sigma^2}(W(s)) d s
		\end{align}
		By the fact that $\nabla \log p_{\sigma^2}$ is $\frac M2$-Lipschitz, we have that $| \Delta \log p_{\sigma^2}(y)| \leq \sqrt{d} M$ for all $y \in \mathbb{R}^d$.  Because $||\nabla \log p_{\sigma^2}(\cdot)||^2 \geq 0$, we have by the above work and the fact that the transition density of Brownian motion is $g_t(x,y)$, that
		\begin{align}
			\pi_t(x,y) \leq \frac{p_{\sigma^2}(y)}{p_{\sigma^2}(x)} e^{\frac{M \sqrt{d}}{2} t} g_t(x, y)
		\end{align}
		as desired.
	\end{proof}
	In order to bound the right hand side in \Cref{localtime}, we need to introduce the notion of assymmetric decreasing rearrangements.  A full exposition on the topic can be found in \cite{bennett1988}.  We have the following definition
	\begin{definition}
		Let $\mu$ be a probability measure on $\mathbb{R}^d$ and let $f$ be a nonnegative, measurable function $f: \mathbb{R}^d \to \mathbb{R}_{\geq 0}$.  We define for all $s \geq 0$,
		\begin{equation}
			\mu^f(s) = \mu \left(\left\{f(x) > s \right\} \right)
		\end{equation}
		We define the decreasing rearrangement of $f$ to be
		\begin{equation}
			f^*(t) = \inf\left\{s > 0 | \mu^f(s) \leq t \right\}
		\end{equation}
	\end{definition}
	Note that the specific case when $\mu$ is the Lebesgue measure is known as a symmetric decreasing rearrangement and is well known to geometric analysts.  In this special case, there is a well known inequality, the Hardy-Littlewood inequality that governs integrals of products of rearrangements.  A generalization to the arbitrary $\mu$ case, whose proof can be found in \cite{bennett1988}, is
	\begin{theorem}\label{HardyLittlewood}\cite[Theorem 2.2]{bennett1988}
		Let $f, g : \mathbb{R}^d \to \mathbb{R}_{\geq 0}$ be measurable functions.  Then
		\begin{equation}
			\int_{\mathbb{R}^d} f(x) g(x) d \mu(x) \leq \int_0^\infty f^*(t) g^*(t) d t
		\end{equation}
	\end{theorem}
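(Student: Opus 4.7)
The plan is to prove this by layer-cake representation and the fact that decreasing rearrangements convert complicated level-set geometry into one-dimensional intervals, where intersections are trivial to compute. Throughout, I will assume $f, g$ are nonnegative and measurable.

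First I would write, for any nonnegative measurable $h$, the layer-cake identity $h(x) = \int_0^\infty \mathbf{1}_{\{h(x) > s\}}\, ds$. Applying this to both $f$ and $g$, integrating against $\mu$, and interchanging order of integration via Fubini yields
\begin{equation}
\int_{\mathbb{R}^d} f(x)g(x)\, d\mu(x) = \int_0^\infty \!\!\int_0^\infty \mu\bigl(\{f > s\} \cap \{g > t\}\bigr)\, ds\, dt.
\end{equation}
Since $\mu(A \cap B) \leq \min(\mu(A), \mu(B))$ for any measurable $A,B$, this is bounded above by
\begin{equation}
\int_0^\infty \!\!\int_0^\infty \min\bigl(\mu^f(s), \mu^g(t)\bigr)\, ds\, dt.
\end{equation}

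Next I would show that the right-hand side of the claimed inequality equals this same double integral, which would conclude the proof (in fact giving equality between the rearrangement integral and the min-integral). The key fact is that by construction $f^*$ is a nonincreasing, right-continuous function on $[0,\infty)$ with Lebesgue measure satisfying $\lambda(\{t : f^*(t) > s\}) = \mu^f(s)$ for every $s \geq 0$; moreover, because $f^*$ is nonincreasing, the superlevel set $\{t : f^*(t) > s\}$ is actually an interval of the form $[0, \mu^f(s))$. The analogous statement holds for $g^*$. Applying layer-cake to $f^*g^*$ on $[0,\infty)$ and again swapping the order of integration gives
\begin{equation}
\int_0^\infty f^*(t)g^*(t)\, dt = \int_0^\infty \!\!\int_0^\infty \lambda\bigl([0, \mu^f(s)) \cap [0, \mu^g(u))\bigr)\, ds\, du = \int_0^\infty\!\! \int_0^\infty \min\bigl(\mu^f(s), \mu^g(u)\bigr)\, ds\, du,
\end{equation}
which matches the upper bound from the first step.

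The main obstacle I anticipate is the technical bookkeeping for the equimeasurability identity $\lambda(\{t : f^*(t) > s\}) = \mu^f(s)$, which needs some care at jump points of the nonincreasing function $\mu^f$: one must use the right-continuity built into the $\inf$ in the definition of $f^*$, and handle the (measure-zero) sets where $f^*$ takes the value $\inf\{s : \mu^f(s) \leq t\}$ exactly. Since both $\mu^f$ and $f^*$ are monotone and right-continuous, a standard argument in \cite{bennett1988} based on generalized inverses handles this; I would cite that directly rather than reprove it. Everything else is a double application of layer-cake and Fubini, which is routine once nonnegativity is used to justify interchanging integrals.
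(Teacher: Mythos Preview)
Your argument is correct and is in fact the standard proof of the Hardy--Littlewood rearrangement inequality; the paper does not supply its own proof of this statement but simply cites \cite{bennett1988}, where exactly this layer-cake/equimeasurability argument appears. So there is nothing to compare against beyond the cited reference, and your proposal matches it.
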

	With this result in hand, we are able to prove the following proposition:
	\begin{proposition}\label{rearrangements}
		Considering the setting of \Cref{localtime}, let $U \subset \mathbb{R}^d$ be a measurable set.  If $d \geq 3$, then
		\begin{equation}
			\mathbb{E}_x\left[\int_0^t \mathbf{1}_U(W(s)) d s \right] \leq \frac{3 e^{\frac 1e}}{2 \pi} ||p_{\sigma^2}||_\infty^{1 - \frac 2d} \frac{e^{\frac{M \sqrt{d}}{2} t}}{p_{\sigma^2}(x)} p_{\sigma^2}(U)^{\frac 2d}
		\end{equation}
	\end{proposition}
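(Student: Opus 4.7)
The plan is to chain the two preceding lemmas with the Hardy--Littlewood rearrangement inequality. By \Cref{localtime}, the expected occupation time equals
$$\mathbb{E}_x\left[\int_0^t \mathbf{1}_U(W(s))\, ds\right] = \int_U \int_0^t \pi_s(x,y)\, ds\, dy.$$
I would then invoke \Cref{parabolicpde} to write $\pi_s(x,y) \leq [p_{\sigma^2}(y)/p_{\sigma^2}(x)] \, e^{M\sqrt{d}\, s/2}\, g_s(x,y)$, and pull the factor $e^{M\sqrt{d}\, t/2}$ outside the $s$-integral since the exponent is increasing in $s$. The remaining finite-time integral of $g_s$ is bounded by the full-time integral, which converges to the Newtonian Green's function $\Gamma(d/2 - 1) / [4 \pi^{d/2} ||x - y||^{d-2}]$ exactly when $d \geq 3$ — the sole place where the dimension hypothesis is actually used. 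This reduces the claim to controlling the singular integral $\int_U p_{\sigma^2}(y)\, ||x - y||^{-(d-2)}\, dy$.

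For this integral I would apply \Cref{HardyLittlewood} with base measure $\mu = p_{\sigma^2}\, dy$, $f(y) = \mathbf{1}_U(y)$, and $g(y) = ||x - y||^{-(d-2)}$. The rearrangement $f^*$ is simply $\mathbf{1}_{[0, p_{\sigma^2}(U)]}$. For $g^*$, the super-level set $\{g > s\}$ is the ball $B(x, s^{-1/(d-2)})$, and the crude pointwise bound $p_{\sigma^2}(B(x, r)) \leq \|p_{\sigma^2}\|_\infty \omega_d r^d$ (with $\omega_d$ the volume of the unit ball) yields $g^*(t) \leq (\|p_{\sigma^2}\|_\infty\, \omega_d / t)^{(d-2)/d}$. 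Integrating $f^* g^*$ over $[0, p_{\sigma^2}(U)]$ produces the power $t^{-(d-2)/d}$, whose integral equals $(d/2) p_{\sigma^2}(U)^{2/d}$, giving
$$\int_U \frac{p_{\sigma^2}(y)}{||x - y||^{d-2}}\, dy \;\leq\; \frac{d}{2}\, \omega_d^{(d-2)/d}\, \|p_{\sigma^2}\|_\infty^{1 - 2/d}\, p_{\sigma^2}(U)^{2/d}.$$
This reproduces exactly the dependence on $\|p_{\sigma^2}\|_\infty$ and on $p_{\sigma^2}(U)$ appearing in the statement, together with the factor $e^{M\sqrt{d}\, t/2}/p_{\sigma^2}(x)$ arising from \Cref{parabolicpde}.

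Combining the three estimates proves the proposition with an explicit dimension-dependent prefactor
$$C_d \;=\; \frac{d}{8\pi} \cdot \frac{\Gamma(d/2 - 1)}{\Gamma(d/2 + 1)^{(d-2)/d}},$$
once one uses $\omega_d = \pi^{d/2} / \Gamma(d/2 + 1)$ to cancel the $\pi^{d/2}$ from the Green's function against the $\pi^{(d-2)/2}$ from $\omega_d^{(d-2)/d}$. The only genuinely delicate step that remains, and therefore the main obstacle, is to show that $C_d \leq 3 e^{1/e}/(2\pi)$ uniformly in $d \geq 3$. Using the identity $\Gamma(d/2 + 1) = (d/2)(d/2 - 1) \Gamma(d/2 - 1)$ one rewrites $C_d$ in terms of $\Gamma(d/2 - 1)^{2/d}$ divided by a power of $d(d - 2)/4$; a Stirling estimate together with the elementary maximum $\sup_{x > 0} x^{1/x} = e^{1/e}$ (attained at $x = e$) then absorbs the exponential-in-$d$ factors and produces the claimed universal constant. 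This bookkeeping is where the cleanest constants are necessarily lost, but it is otherwise routine.
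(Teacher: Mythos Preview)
Your proposal is correct and follows essentially the same route as the paper: apply \Cref{localtime} and \Cref{parabolicpde}, bound $\int_0^t g_s\,ds$ by the Newtonian potential (using $d\geq 3$ for convergence), then invoke \Cref{HardyLittlewood} with $\mu=p_{\sigma^2}\,dy$, $f=\mathbf{1}_U$, $g(y)=\|x-y\|^{2-d}$, and the crude estimate $p_{\sigma^2}(B(x,r))\leq \|p_{\sigma^2}\|_\infty\,\omega_d r^d$ to bound $g^*$. The paper's treatment of the constant is the same in spirit---it uses $\Gamma(x)\leq x^x$ and $x^{1/x}\leq e^{1/e}$ rather than Stirling, together with $d/(d-2)\leq 3$ for $d\geq 3$, to arrive at $3e^{1/e}/(2\pi)$---so your ``routine bookkeeping'' is exactly what the paper does.
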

	\begin{proof}
		By \Cref{localtime,parabolicpde}, we have
		\begin{align}
			\mathbb{E}_x\left[\int_0^t \mathbf{1}_U(W(s)) d s \right] &= \int_U \int_0^t \pi_s(x,y) d s d y \leq \int_U \int_0^t \frac{p_{\sigma^2}(y)}{p_{\sigma^2}(x)} e^{\frac{M \sqrt{d}}{2} s} g_s(x,y) d s d y \\
			&\leq \frac{e^{\frac{M \sqrt{d}}{2} t}}{p_{\sigma^2}(x)} \int_U \int_0^t g_s(x,y) p_{\sigma^2}(y) d s d y
		\end{align}
		Note now, that for $d \geq 3$,
		\begin{equation}
			\int_0^t g_s(x,y) d s \leq \int_0^\infty g_s(x,y) d s = \frac{\Gamma\left(\frac d2 - 1\right)}{2 \pi^{\frac d2}} ||x - y||^{2-d}
		\end{equation}
		by, for instance, \cite[Theorem 3.33]{Peres}.  Let $f(y) = \mathbf{1}_U(y)$ and let $g(y) = ||x - y||^{2 - d}$.  Then we note that
		\begin{align}
			p_{\sigma^2}^f(s) = \begin{cases}
				p_{\sigma^2}(U) & s < 1 \\
				0 & \text{otherwise}
			\end{cases} && p_{\sigma^2}^g(s) = p_{\sigma^2}\left(B_d\left(x, s^{- \frac 1{d-2}}\right)\right)
		\end{align}
		where $B_d(x, r)$ denotes the $d$-dimensional ball centred at $x$ with radius $r$.  Thus we have that
		\begin{align}
			f^*(t) = \mathbf{1}_{\{s < p_{\sigma^2}(U) \}}(t) && g^*(t) = \inf\left\{s | p_{\sigma^2}\left(B_d(x, s^{- \frac 1{d-2}})\right) \leq t \right\}
		\end{align}
		Let $\omega_d = \frac{\pi^{\frac d2}}{\Gamma\left(\frac d2 + 1\right)}$ be the Lebesgue volume of the unit ball.  Then we note that if $s = \left(\frac{t}{\omega_d ||p_{\sigma^2}||_\infty}\right)^{\frac 2d - 1}$, then we have
		\begin{equation}
			p_{\sigma^2}\left(B_d(x, s^{- \frac 1{d-2}})\right) \leq ||p_{\sigma^2}||_\infty \omega_d s^{- \frac{d}{d - 2}} = t
		\end{equation}
		Thus
		\begin{equation}
			g^*(t) \leq \left(\frac{t}{\omega_d ||p_{\sigma^2}||_\infty}\right)^{\frac 2d - 1}
		\end{equation}
		Now, applying \Cref{HardyLittlewood},  with $d \mu(y) = p_{\sigma^2}(y) d y$, we have
		\begin{align}
			\int_U \int_0^t g_s(x,y) d s p_{\sigma^2}(y) d y &\leq \int_U \frac{\Gamma\left(\frac d2 - 1\right)}{2 \pi^{\frac d2}} |x - y|^{2 - d} p_{\sigma^2}(y) d y \\
			&\leq \frac{\Gamma\left(\frac d2 - 1\right)}{2 \pi^{\frac d2}} \int_0^\infty \mathbf{1}_{\{s < p_{\sigma^2}(U)\}}(t) \left(\frac{t}{\omega_d ||p_{\sigma^2}||_\infty}\right)^{\frac 2d - 1} d t \\
			&= \frac{\Gamma\left(\frac d2 - 1\right)}{2 \pi^{\frac d2}}\int_0^{p_{\sigma^2}(U)} \left(\frac{t}{\omega_d ||p_{\sigma^2}||_\infty}\right)^{\frac 2d - 1} d t \\
			&= \frac{\Gamma\left(\frac d2 - 1\right)}{2 \pi^{\frac d2}} \omega_d^{1 - \frac 2d} ||p_{\sigma^2}||_\infty^{1 - \frac 2d} \frac d2 p_{\sigma^2}(U)^{\frac 2d}
		\end{align}
		Now, note that
		\begin{equation}
			\frac{\Gamma\left(\frac d2 - 1\right)}{2 \pi^{\frac d2}} \omega_d = \frac{2}{d(d - 2)}
		\end{equation}
		using the fact that $\Gamma(x+1) = x\Gamma(x)$.  Moreover, we have
		\begin{equation}
			\omega_d^{- \frac 2d} = \left(\frac{\Gamma\left(\frac d2 + 1\right)}{\pi^{\frac d2}}\right)^{\frac 2d} = \frac{\left(\frac d2\right)^{\frac 2d} \Gamma\left(\frac d2\right)^{\frac 2d}}{\pi} \leq \frac{e^{\frac 1e} \frac d2 }{\pi}
		\end{equation}
		by the fact that $x^{\frac 1x} \leq e^{\frac 1e}$ and the fact that $\Gamma(x) \leq x^x$.  Thus,
		\begin{equation}
			\int_U \int_0^t g_s(x,y) d s d y \leq ||p_{\sigma^2}||_\infty^{1 - \frac 2d} \frac{e^{\frac 1e}}{\pi} \frac d2 \frac{2}{d(d-2)} \frac d2 p_{\sigma^2}(U)^{\frac 2d} = \leq \frac{3 e^{\frac 1e}}{2 \pi} ||p_{\sigma^2}||_\infty^{1 - \frac 2d} p_{\sigma^2}(U)^{\frac 2d}
		\end{equation}
		Plugging this into the first set of inequalities above concludes the proof.
	\end{proof}
	With this in mind, we are able to prove the key lemma:
	\begin{lemma}\label{girsanovlemma}
		Let $\phi: \mathbb{R}^d \to \mathbb{R}_{\geq 0}$ be measurable and $W(t)$ as above and suppose that $\mathbb{E}[\phi(X)] \leq \varepsilon^2$, where $X$ is distributed according to $p_{\sigma^2}$.  Let $x \in \mathbb{R}^d$.  Then
		\begin{align}
			\mathbb{E}_x\left[\int_0^t \phi(W(s)) d s\right] \leq \varepsilon t + \sqrt{\left(\int_0^t \mathbb{E}_x[\phi(W(s))^2] d s\right)\frac{3 e^{\frac 1e}}{2 \pi} ||p_{\sigma^2}||_\infty^{1 - \frac 2d} \frac{e^{\frac{M \sqrt{d}}{2} t}}{p_{\sigma^2}(x)} \varepsilon^{\frac 2d}}
		\end{align}
		where $\mathbb{E}_x$ denotes expectation with respect to $W(t)$ started at $W(0) = x$.
	\end{lemma}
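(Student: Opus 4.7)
The plan is to split $\mathbb{R}^d$ by thresholding $\phi$. Define the bad set $U = \{y \in \mathbb{R}^d : \phi(y) > \varepsilon\}$ and write
\begin{equation*}
\int_0^t \phi(W(s))\, ds = \int_0^t \phi(W(s)) \mathbf{1}_{U^c}(W(s))\, ds + \int_0^t \phi(W(s)) \mathbf{1}_{U}(W(s))\, ds.
\end{equation*}
On $U^c$ we have $\phi \leq \varepsilon$ by definition, so after taking $\mathbb{E}_x$ the first piece is bounded deterministically by $\varepsilon t$, which already produces the first term of the claim.

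For the second piece I would apply Cauchy--Schwarz pathwise to obtain
\begin{equation*}
\int_0^t \phi(W(s)) \mathbf{1}_U(W(s))\, ds \leq \sqrt{\int_0^t \phi(W(s))^2\, ds}\cdot\sqrt{\int_0^t \mathbf{1}_U(W(s))\, ds},
\end{equation*}
then take $\mathbb{E}_x$ of both sides and apply Cauchy--Schwarz a second time in expectation to pull the square root outside. Fubini converts $\mathbb{E}_x\!\left[\int_0^t \phi(W(s))^2\, ds\right]$ into $\int_0^t \mathbb{E}_x[\phi(W(s))^2]\, ds$, which is precisely the factor appearing under the square root in the target inequality; the remaining task is to control the expected occupation time $\mathbb{E}_x\!\left[\int_0^t \mathbf{1}_U(W(s))\, ds\right]$.

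For the occupation time I would quote \Cref{rearrangements} directly, giving
\begin{equation*}
\mathbb{E}_x\!\left[\int_0^t \mathbf{1}_U(W(s))\, ds\right] \leq \frac{3 e^{1/e}}{2\pi}\, ||p_{\sigma^2}||_\infty^{1 - 2/d}\, \frac{e^{M\sqrt{d}\, t/2}}{p_{\sigma^2}(x)}\, p_{\sigma^2}(U)^{2/d}.
\end{equation*}
The final ingredient is a Markov estimate: since $\phi \geq 0$ and $\mathbb{E}_{p_{\sigma^2}}[\phi] \leq \varepsilon^2$, Markov's inequality gives $p_{\sigma^2}(U) \leq \varepsilon^2 / \varepsilon = \varepsilon$, hence $p_{\sigma^2}(U)^{2/d} \leq \varepsilon^{2/d}$. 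Substituting this into the previous display and combining with the Cauchy--Schwarz step yields exactly the right-hand side in the lemma. There is no real obstacle here; every step is either Cauchy--Schwarz, Fubini, Markov's inequality, or a direct appeal to \Cref{rearrangements}. The only subtlety worth watching is the ordering of the two Cauchy--Schwarz applications, so that the square root ends up outside both the time integral and the outer expectation, exactly matching the functional form of the conclusion.
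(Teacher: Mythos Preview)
Your proposal is correct and matches the paper's proof essentially step for step: the same threshold set $U=\{\phi>\varepsilon\}$, the same trivial bound $\varepsilon t$ on $U^c$, the same Markov-type estimate $p_{\sigma^2}(U)\le\varepsilon$, and the same appeal to \Cref{rearrangements} for the occupation time. The only cosmetic difference is that the paper applies Cauchy--Schwarz once on the product space $\Omega\times[0,t]$ with measure $\mathbb{P}_x\otimes ds$, whereas you split it into a pathwise Cauchy--Schwarz followed by one in expectation; these two routes give the identical bound.
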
	
	\begin{proof}
		Let $U = \left\{y \in \mathbb{R}^d \big| \phi(y) > \varepsilon \right\}$.  Then we have
		\begin{align}
			\int_0^t \phi(W(s))  d s = \int_0^t \phi(W(s)) \mathbf{1}_{U^c} d s + \int_0^t \phi(W(s)) \mathbf{1}_{U} d s \leq \varepsilon t + \int_0^t \phi(W(s)) \mathbf{1}_{U} d s
		\end{align}
		By Cauchy-Schwarz,
		\begin{align}
			\mathbb{E}_x\left[\int_0^t \phi(W(s)) \mathbf{1}_{U} d s \right] \leq \sqrt{\int_0^t \mathbb{E}_x\left[\phi(W(s))^2\right] d s \int_0^t \mathbb{E}_x\left[\mathbf{1}_U\right] d s}
		\end{align}
		By assumption, we have
		\begin{align}
		\varepsilon^2 \geq \mathbb{E}[\phi(X)] \geq \mathbb{E}\left[ \phi(X) \mathbf{1}_U\right] \geq \varepsilon \int_U p_{\sigma^2}(y) d y
		\end{align}
		Thus we have that $p_{\sigma^2}(U) \leq \varepsilon$.  Applying \Cref{rearrangements} to bound the second factor under the square root concludes the proof.
	\end{proof}

	Finally, we are able to prove \Cref{prop3}.
	\begin{proof}(Proof of \Cref{prop3})
		We apply \Cref{girsanovlemma} to $\phi(x) = ||\nabla \log p(x) - \widehat{f}(x)||^2$.  By the Lipschitz condition and \Cref{growth}, we note that
		\begin{align}
			\phi(x)^2 \leq 16 M^4 ||x||^4 + 16B^4
		\end{align}
		and by \Cref{fourthmoment}, we have then
		\begin{align}
			\int_0^t\mathbb{E}_x\left[\phi(W(s))^2\right] d s \leq \left(16 M^4 \left(\mathbb{E}\left[||W(0)||^4\right] + \frac{(b + d + 2)\left(\mathbb{E}\left[||W(0)||^2\right] + \frac{b + d}{m}\right)}{m}\right) + 16 B^4\right) t
		\end{align}
		Thus there exists a constant $C$ depending on the initialization and the smoothness parameters $m, b, M, B$ such that
		\begin{equation}
			\int_0^t\mathbb{E}_x\left[\phi(W(s))^2\right] d s \leq C d^2 t
		\end{equation}
		Now, note that since $W(0)$ is concentrated with high probability in a ball of radius $R$, we have by \Cref{growth} that $p_{\sigma^2}(W(0)) \geq Ce^{- MR^2 - B R}$.  Thus by \Cref{girsanovlemma}, we have that
		\begin{align}
			\mathbb{E}\left[\int_0^t \phi(W(s)) d s\right] \leq \varepsilon t + \sqrt{C d^2 t e^{M R^2 + BR} \frac{3 e^{\frac 1e}}{2 \pi} ||p_{\sigma^2}||_\infty^{1 - \frac 2d} e^{\frac{M \sqrt{d}}{2} t} \varepsilon^{\frac 2d}} \leq \varepsilon t + C ||p||_\infty^{\frac 12 - \frac 1d} e^{\frac{M \sqrt{d}}{4} t} \sqrt{t} \varepsilon^{\frac 1d}
		\end{align}
		as desired.
	\end{proof}

\section{Miscellanious Proofs}\label{app1}
    \begin{proof}(Proof of \Cref{prop1})
        Let $\epsilon \sim g_{\sigma^2}$ and $X \sim p$.  Then we have, letting $y = x + \epsilon$,
        \begin{align}
            L_{DAE}(r) &= \mathbb{E}_X \mathbb{E}_\epsilon\left[ ||r(x + \epsilon) - x||^2 \right] = \int\int ||r(y) - y + \epsilon||^2 p(y-\epsilon) g(\epsilon) d y d \epsilon \\
            &= \int\int ||r(y) - y||^2 p(y -\epsilon)g(\epsilon) d y d \epsilon +\int\int 2 \langle \epsilon, r(y)-y\rangle p(y-\epsilon) g(\epsilon) d y d \epsilon \\
            &+ \int\int ||\epsilon||^2 p(y-\epsilon)g(\epsilon) d y d \epsilon
        \end{align}
        The last term above does not depend on $r$ and so we may ignore it.  We focus now on the second term.  Let $\xi \sim g_1$ be a standard Gaussian and let $s'(x) = r(x) -x$.  By \Cref{steinlemma}, we have that
        \begin{eqnarray}
            \int \langle \epsilon, s'(x+\epsilon) \rangle  g(\epsilon) d \epsilon &=\sigma \int \langle \xi,s'(x+\sigma \xi )\rangle g_{1}(\xi)d\xi \\
            &= \sigma^2 \frac{1}{\sigma}\int \langle \xi,s'(x+\sigma \xi )\rangle g_{1}(\xi)d\xi\\
            &= \sigma^2  \divv(\int s'(x+\epsilon) g(\epsilon) d \epsilon)
        \end{eqnarray}
        where we used Gaussian Stein identity above. Now, note that as we know that $p_{\sigma^2}$ is a density, it must tend to zero as $||x|| \to \infty$.  Thus we may apply the divergence theorem to get
        \begin{align}
            \int\int 2 \langle \epsilon, r(y)-y\rangle p(y-\epsilon) g(\epsilon) d y d \epsilon &=  2 \sigma^2 \int p(x) \divv \left( \mathbb{E}_\epsilon [s'(x+\epsilon)]\right) d x \\
            &=  2\sigma^2 \int \divv(s'(y)) p_{\sigma^2}(y) d y \\
            &=- 2 \sigma^2 \int \left\langle s'(x), \frac{\nabla p_{\sigma^2}(x)}{p_{\sigma^2}(x)} \right\rangle p_{\sigma^2}(x) d x
        \end{align}
        Thus we have that
        \begin{align}
            L_{DAE}(r) &= \mathbb{E}_{X \sim p_{\sigma^2}}\left[ ||r(X) - X||^2 - 2 \sigma^2 \langle s'(X), \nabla \log p_{\sigma^2}(X)\rangle \right] + C(p,\sigma^2) \\
            &= \mathbb{E}_{X \sim p_{\sigma^2}}\left[\left|\left| s'(X) - \sigma^2 \nabla \log p_{\sigma^2}(X)\right|\right|^2 \right] + C(p, \sigma^2) - \sigma^4 \mathbb{E}_{X \sim p_{\sigma^2}} [||\nabla \log p_{\sigma^2}||^2] \\
            &= \mathbb{E}_{X \sim p_{\sigma^2}}\left[\left|\left| s'(X) - \sigma^2 \nabla \log p_{\sigma^2}(X)\right|\right|^2 \right] + C'(p, \sigma^2)
        \end{align}
        where $C(p, \sigma^2)\,,\,C'(p, \sigma^2)$ do not depend on $r$.  Dividing by $\sigma^2$ and setting $s(x) = \frac{s'(x)}{\sigma^2}$ shows that
        \begin{equation}
            L_{DSM}(s) = \mathbb{E}_{p_{\sigma^2}}[||s(x) - \nabla \log p_{\sigma^2}||^2 ] = \frac 1{\sigma^2} L_{DAE}(r) + C(p, \sigma^2)
        \end{equation}
        Thus, the two losses are equivalent to minimize with respect to $r$ or $s$.
    \end{proof}
	\begin{proof}(Alternate proof of \Cref{firstcor})
		For $y \in \mathbb{R}^d$, the loss of the DAE is given by
		\begin{equation}
			\int_{\mathbb{R}^d} \mathbb{E}_\epsilon \left[p(y) || r(y + \epsilon) - y||^2 \right] d y = \int_{\mathbb{R}^d} \mathbb{E}_\epsilon \left[p(x - \epsilon) ||r(x) - x + \epsilon ||^2 \right] d x
		\end{equation}
		where we substituted $x = y + \epsilon$.  Now by the calculus of variations, it suffices to minimize the integrand with respect to $r(x)$ for each $x \in \mathbb{R}^d$.  Taking the derivative and setting it equal to zero gives
		\begin{equation}
			r_{\sigma^2}(x) = \frac{\mathbb{E}_\epsilon \left[p(x - \epsilon) (x - \epsilon)\right]}{\mathbb{E}_\epsilon \left[p(x - \epsilon)\right]} 
		\end{equation}
		the result given by \cite[Theorem 1]{Alain}.  By linearity, then we have
		\begin{equation}
			r_{\sigma^2}(x) = \frac{\mathbb{E}_\epsilon \left[x p(x-\epsilon)\right]}{\mathbb{E}_\epsilon \left[p(x - \epsilon)\right]} - \frac{\mathbb{E}_\epsilon\left[\epsilon p(x - \epsilon)\right]}{\mathbb{E}_\epsilon \left[p(x-\epsilon)\right]} = x - \frac{\mathbb{E}_\epsilon\left[\epsilon p(x - \epsilon)\right]}{\mathbb{E}_\epsilon \left[p(x-\epsilon)\right]}
		\end{equation}
		But we have
		\begin{equation}
			\mathbb{E}_\epsilon \left[\epsilon p(x - \epsilon)\right] = \int \epsilon p(x - \epsilon) g_{\sigma^2}(\epsilon) d \epsilon = \sigma^2 \int \nabla g_{\sigma^2}(\epsilon) p(x - \epsilon) d \epsilon = - \sigma^2 \int g_{\sigma^2}(\epsilon) \nabla p(x - \epsilon) d \epsilon
		\end{equation}
		by \Cref{steinlemma}.  But then we have
		\begin{equation}
			\frac{\mathbb{E}_\epsilon\left[\epsilon p(x - \epsilon)\right]}{\mathbb{E}_\epsilon \left[p(x-\epsilon)\right]} = - \sigma^2\frac{\mathbb{E}_\epsilon\left[- \nabla_x p(x - \epsilon) \right]}{\mathbb{E}_\epsilon \left[p(x - \epsilon)\right]} = \sigma^2 \frac{\nabla p \ast g_{\sigma^2}(x)}{p \ast g_{\sigma^2}(x)} = \sigma^2 \nabla \log p_{\sigma^2}
		\end{equation}
		Putting this together yields the result.
	\end{proof}
	\begin{proof}(Proof of \Cref{cor2})
	Let $\sigma^2 \leq \sigma_{max}^2$. Let $\eta_{\sigma^2}(X)= \nabla \log p_{\sigma^2}(X)- \nabla \log p(X) $.  By \Cref{cor1}, $\eta_{\sigma^2}$ is Lipschitz with constant $\sigma^2 \frac M2$.  Thus, we have : 
	\begin{align}
	\langle - \nabla \log p_{\sigma^2}(x), x\rangle = \langle - \nabla \log p(x), x \rangle -  \langle \eta_{\sigma^2}(X), x \rangle
	\end{align}
	By \Cref{growth} we know that there is some constant, which, by raising $B$ if necessary, we may take to be equal to $B$, such that
	\begin{equation}
	||\eta_{\sigma^2}(x)|| \leq \sigma^2 M ||x|| + B
	\end{equation}
	By Cauchy-Schwarz,
	\begin{equation}
	\left| \langle \eta_{\sigma^2}(X), x \rangle \right| \leq ||\eta_{\sigma^2}(x)|| \cdot  ||x|| \leq  \sigma^2 M ||x||^2 + B ||x||
	\end{equation}
	and thus
	\begin{equation}
	\langle \eta_{\sigma^2}(X), x \rangle  \geq - \sigma^2 M ||x||^2 - B ||x||
	\end{equation}
	By the dissipativity assumption, we have
	\begin{equation}
	\langle - \nabla \log p(x), x \rangle \geq m ||x||^2 - b
	\end{equation}
	Thus we have
	\begin{align}
	\langle - \nabla \log p_{\sigma^2}(x), x \rangle &\geq (m - \sigma^2 M ) ||x||^2 - b -  B ||x|| \\
	&\geq \frac{m - \sigma^2 M }{2} ||x||^2 - b + \frac{m - \sigma^2 M}{2} ||x||^2 -  B ||x|| \\
	&\geq \frac{m -  \sigma^2 M}{2} ||x||^2 -b -  \frac{B^2}{2(m -  \sigma^2 M)}
	\end{align}
	where the last inequality follows by the fact that
	\begin{align}
	\frac{m - \sigma^2 M }{2} ||x||^2 -  B ||x||  &= \frac{m -  \sigma^2 M}{2} \left( ||x|| - \frac{ B}{m -  \sigma^2 M}\right)^2 -\frac{ B^2}{2 (m -  \sigma^2 M)} \\
	&\geq -\frac{ B^2}{2 (m -  \sigma^2 M)}
	\end{align}
\end{proof}

    \begin{proof}(Proof of \Cref{prop2})
		Consider the coupling where the Brownian motion driving $W(t)$ also generates the Gaussians in $W_k$.  Let $\widehat{W}(s)$ be the continuous time process such that $\widehat{W}\left(s \right) = W_{\left\lfloor \frac{s}{\eta} \right\rfloor \eta }$ and $\widehat{W}(0) = W(0)$.  Recall $\tau = k \eta$.  Then we can compute
		\begin{align*}
			\mathbb{E}\left[\left|\left|W(\tau) - W_{k} \right|\right|^2 \right] &= \mathbb{E} \left[\left|\left|W(\tau) - \widehat{W}(\tau) \right|\right|^2 \right] = \mathbb{E}\left[\left|\left|\int_0^{\tau} \nabla \log p(W(s)) - \nabla \log p(\widehat{W}(s)) d s\right|\right|^2 \right] \\
			&\leq \int_0^{\tau} \mathbb{E}\left[\left|\left|\nabla \log p(W(s)) - \nabla \log p(\widehat{W}(s)) \right|\right|^2\right] d s \\
			&\leq M^2 \sum_{j = 0}^{k-1} \int_{j\eta}^{(j+1) \eta} \mathbb{E}\left[\left|\left|W(s) - W_j \right|\right|^2\right] d s
		\end{align*}
		Now note that for $j \eta \leq s < (j+1) \eta$, we have
		\begin{align*}
			\mathbb{E}\left[\left|\left|W(s) - W_j \right|\right|^2\right] \leq  2 \mathbb{E}\left[\left|\left|W(s) - W(j \eta) \right|\right|^2\right] + 2 \mathbb{E}\left[\left|\left|W(j\eta) - W_j \right|\right|^2\right]
		\end{align*}
		For the first term, we have
		\begin{align}
			\mathbb{E}\left[\left|\left|W(s) - W_j \right|\right|^2\right] &= \mathbb{E}\left[\left|\left|\int_{j\eta}^s \nabla \log p(W(u)) d u + \sqrt{2} \int_{j\eta}^s d B_u \right|\right|^2\right] \\
			&\leq 2 \eta M^2 \mathbb{E}\left[||W(s)||^2 \right] + 4d\eta \leq 2 \eta M^2 \mathbb{E}\left[||W(\tau)||^2 \right] + 4d \eta 
		\end{align}
		where we recall that $\tau = k \eta$.  Plugging this last bound into our above inequality, we get
		\begin{align*}
			\mathbb{E}\left[\left|\left|W(\tau) - W_{k} \right|\right|^2 \right] \leq M^2 \tau \left(2 \eta M^2 \mathbb{E}\left[||W(\tau)||^2 \right] + 4d \eta \right) + M^2 \eta \sum_{j = 0}^{k-1} \mathbb{E}\left[\left|\left|W(j \eta) - W_{j\eta} \right|\right|^2 \right]
		\end{align*}
		Applying the discrete Gronwall lemma, we have
		\begin{align}
			\mathbb{E}\left[\left|\left|W(\tau) - W_{k} \right|\right|^2 \right] \leq M^2 \tau \left(2 \eta M^2 \mathbb{E}\left[||W(\tau)||^2 \right] + 4d \eta \right) e^{M^2 \tau}
		\end{align}
		To conclude, we apply \Cref{fourthmoment} to bound $\mathbb{E}\left[||W(\tau)||^2\right]$ and the result follows from the fact that the Wasserstein distance is bounded by the above explicit coupling of the two laws $\mu_k$ and $\nu_{\tau}$ to get
		\begin{align}
			\mathcal{W}_2\left(\mu_k, \nu_{\tau}\right)^2 \leq M^2 \tau \eta \left(4 d + M^2 \left(\mathbb{E}\left[||W(0)||^2\right] + \frac{b + d}{m} \right)\right) e^{M^2 \tau} \leq C d \eta \tau e^{M^2 \tau}
		\end{align}
		with $C$ depending on $M, b, m$, and $\mathbb{E}\left[||W(0)||^2\right]$ as desired.
	\end{proof}

\section{Auxiliary Lemmata}
    \begin{lemma}\label{growth}
        If $\nabla \log p$ is $\frac M2$-Lipschitz, then there is some constant $B$ such that
        \begin{equation}
            ||\nabla \log p(x)||^p \leq M^p ||x||^p + B^p
        \end{equation}
        for all $p \geq 1.$
    \end{lemma}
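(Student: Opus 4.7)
The plan is to use the Lipschitz hypothesis to obtain linear growth of $\|\nabla \log p(\cdot)\|$, and then apply an elementary convexity inequality to pass to the $p$-th power. First, applying the Lipschitz condition to the pair $(x,0)$ gives
\begin{equation*}
\|\nabla \log p(x)\| \leq \|\nabla \log p(0)\| + \tfrac{M}{2}\|x\| =: B_0 + \tfrac{M}{2}\|x\|,
\end{equation*}
so the score is controlled by a linear function of $\|x\|$ with an absolute intercept $B_0 := \|\nabla \log p(0)\|$, which is finite under the stated assumptions.

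To lift this estimate to the $p$-th power, I would invoke the standard consequence of the convexity of $t \mapsto t^p$, namely $(a+b)^p \leq 2^{p-1}(a^p + b^p)$ for all $a, b \geq 0$ and $p \geq 1$. Applied with $a = \tfrac{M}{2}\|x\|$ and $b = B_0$, this yields
\begin{equation*}
\|\nabla \log p(x)\|^p \leq 2^{p-1}\bigl((M/2)^p\|x\|^p + B_0^p\bigr) = \tfrac{1}{2} M^p\|x\|^p + 2^{p-1}B_0^p \leq M^p\|x\|^p + (2B_0)^p,
\end{equation*}
so setting $B := 2\|\nabla \log p(0)\|$, which is independent of $p$, closes the argument.

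The key observation that makes the argument work is that the stated conclusion leaves a factor of $2$ of slack between the Lipschitz constant $M/2$ and the exponential base $M$ appearing in the claim, and this slack is exactly what absorbs the combinatorial factor $2^{p-1}$ produced by the convexity step. There is no serious obstacle; the only mildly delicate point is verifying that a single constant $B$ suffices \emph{uniformly} over $p \geq 1$ rather than only for some fixed exponent, which is why I take $B = 2 B_0$ rather than the sharper value $B_0$ that would suffice at $p=1$ alone.
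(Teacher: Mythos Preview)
Your argument is correct and follows the same two-step route as the paper: first use the Lipschitz condition at the origin to get $\|\nabla\log p(x)\|\le \tfrac{M}{2}\|x\|+\|\nabla\log p(0)\|$, then lift to the $p$-th power via the elementary convexity inequality $(a+b)^p\le 2^{p-1}(a^p+b^p)$, which the paper summarizes as ``Minkowski's inequality.'' Your explicit tracking of the constant $B=2\|\nabla\log p(0)\|$ and the remark about uniformity in $p$ make the same point more carefully, but there is no substantive difference in approach.
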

    \begin{proof}
        By the definition of Lipschitz, we have that
        \begin{equation*}
            ||\nabla \log p(x)|| \leq ||\nabla \log p(x) - \nabla \log p(0)|| + ||\nabla \log p(0)|| \leq  \frac M2 ||x|| + ||\nabla \log p(0)||
        \end{equation*}
        Applying Minkowski's inequality concludes the proof.
    \end{proof}
    \begin{lemma}\label{moments}
	    For all $p > 1$ and all $t > 0$, we have that if $\nabla \log p$ is $\frac M2$ Lipschitz, then
	    \begin{equation}
		    \mathbb{E}\left[||W(t)||^p\right] \leq \left(\left(2 d t\right)^{\frac p2} + B^p t \right) e^{M^p t}
		\end{equation}
		where $B$ is as appears in \Cref{growth}.
	\end{lemma}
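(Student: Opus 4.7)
The plan is to derive a Gronwall-type inequality for $m(t) := \mathbb{E}[\|W(t)\|^p]$ directly from the integral form of the Langevin SDE together with \Cref{growth}. First I would write
\[
W(t) = W(0) + \int_0^t \nabla \log p(W(s))\,ds + \sqrt{2}\,B_t,
\]
take norms, and raise to the $p$-th power via the convexity bound $(a+b+c)^p \le 3^{p-1}(a^p+b^p+c^p)$ to separate the initial-condition, drift, and diffusion contributions.

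Each piece is then handled individually. The Brownian term yields $\mathbb{E}[\|\sqrt{2}\,B_t\|^p] \lesssim (2dt)^{p/2}$ by Gaussian moment calculations (exact for $p=2$, with the $p$-dependent constants absorbed for general $p$). For the drift term, Jensen's inequality gives
\[
\Bigl\|\int_0^t \nabla\log p(W(s))\,ds\Bigr\|^p \le t^{p-1}\int_0^t \|\nabla \log p(W(s))\|^p\,ds,
\]
and \Cref{growth} controls the integrand pointwise by $M^p\|W(s)\|^p + B^p$. Taking expectations and exchanging integrals by Fubini yields an integral inequality of the schematic form
\[
m(t) \le (2dt)^{p/2} + B^p t + M^p \int_0^t m(s)\,ds,
\]
to which Gronwall's lemma (with the non-decreasing forcing $a(t) := (2dt)^{p/2} + B^p t$ and rate $M^p$) applies, producing the stated bound $m(t) \le \bigl((2dt)^{p/2} + B^p t\bigr)e^{M^p t}$.

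The main obstacle is matching the clean functional form without extraneous constants: the naive triangle-inequality decomposition above introduces a $3^{p-1}$ prefactor, a spurious $t^{p-1}$ in front of the drift integral, and an additive $\mathbb{E}[\|W(0)\|^p]$ residual (which the stated bound implicitly assumes is either zero or absorbed into $B$). The cleanest route that avoids these parasitic factors is to apply Ito's formula directly to $\phi(x) = (1+\|x\|^2)^{p/2}$: the Laplacian contribution gives precisely the $(2dt)^{p/2}$ growth via the $\Delta\phi$ term, the drift term yields $M^p\phi + B^p$ after Cauchy--Schwarz and \Cref{growth}, and the local-martingale piece vanishes in expectation once one verifies integrability of $\|W(s)\|$ moments (which follows by a standard stopping-time argument combined with the second-moment estimate used in \Cref{prop2}). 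Gronwall applied to $\mathbb{E}[\phi(W(t))]$ then delivers the claim with the advertised constants.
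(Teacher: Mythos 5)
Your first route (integral form of the SDE, convexity, Jensen, \Cref{growth}, Gronwall) is essentially the paper's own proof, and your worry about parasitic factors is accurate: the paper's proof asserts $\bigl\|\sqrt{2}B_t + \int_0^t \nabla\log p(W(s))\,ds\bigr\|^p \leq 2^{p/2}\|B_t\|^p + \int_0^t \|\nabla\log p(W(s))\|^p\,ds$, which silently drops $W(0)$, uses an inequality for $\|a+b\|^p$ that is not a clean triangle inequality for general $p$, and omits the $t^{p-1}$ Jensen factor on the drift integral. So the stated clean form is not, in fact, derived rigorously by the paper either. This looseness is inconsequential: as the remark immediately following the lemma notes, the estimate is not tight and the lemma is used only to guarantee that every moment of $W(t)$ is finite for all finite $t$ so that certain stochastic integrals are genuine martingales. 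Your instinct to track the $3^{p-1}$, $t^{p-1}$, and $\mathbb{E}[\|W(0)\|^p]$ residuals carefully is the more honest version of this computation.

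Your second route, applying Ito's formula to $\phi(x) = (1+\|x\|^2)^{p/2}$, is a genuinely different argument, and it is a valid way to establish finiteness of moments. However, the final claim that Gronwall applied to $\mathbb{E}[\phi(W(t))]$ ``delivers the claim with the advertised constants'' is not justified and, as far as I can see, not true. The drift contribution $\langle\nabla\phi, \nabla\log p\rangle = p(1+\|x\|^2)^{p/2-1}\langle x, \nabla\log p(x)\rangle$, after Cauchy--Schwarz, \Cref{growth}, and Young's inequality to absorb the lower-order $\|x\|^{p-1}$ piece, produces a Gronwall rate that scales roughly like $pM/2$ plus additive corrections, not like $M^p$; and the Laplacian term $\Delta\phi$ is of order $p(p+d-2)(1+\|x\|^2)^{p/2-1}$, a sublinear-in-$\phi$ forcing that, once fed through Gronwall, does not neatly reduce to the $(2dt)^{p/2}$ term in the statement. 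You would obtain a bound of the same qualitative character (finite for all $t$, exponential in $t$, polynomial in $d$), which suffices for the downstream applications, but not the particular functional form $\bigl((2dt)^{p/2}+B^pt\bigr)e^{M^pt}$. Either assert the weaker qualitative conclusion, or stick with the integral-form argument and accept the extra constants you correctly identified.
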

	\begin{proof}
		By the triangle inequality and \Cref{growth}, we have
		\begin{align}
			||W(t)||^p &= \left|\left|\sqrt{2} B_t + \int_0^t \nabla \log p(W(s)) d s \right|\right|^p \leq 2^{\frac p2} ||B_t||^p + \int_0^t ||\nabla \log p(W(s))||^p d s \\
			&\leq 2^{\frac p2} ||B_t||^p + \int_0^t \left(M^p ||W(s)||^p + B^p \right) d s
		\end{align}
		Taking expected values and applying Fubini, we get
		\begin{align}
			\mathbb{E}\left[||W(t)||^p\right] &\leq 2^{\frac p2} \mathbb{E}\left[||B_t||^p\right] + \int_0^t \left(M^p \mathbb{E}\left[||W(s)||^p\right] + B^p \right) d s \\
			&\leq \left(2dt\right)^{\frac p2} + B^p t + M^p\int_0^t \mathbb{E}\left[||W(s)||^p\right] d s
		\end{align}
		Applying Gronwall's inequality finishes the proof.
	\end{proof}
	\begin{remark}
		The following lemma will demonstrate that the bound in \Cref{moments} is not tight as an inductive argument applied to the result below would show that each moment of $W(t)$ is actually bounded uniformly in time.  The utility of the above lemma is that we have finiteness of all moments without the extra difficulties of iterating the proof below.
	\end{remark}
	\begin{lemma}\label{fourthmoment}
		There exists a constant $\kappa$ depending only on the initialization of $W_{\sigma^2}(0)$ such that for all $t$,
		\begin{align}
			\mathbb{E}\left[||W_{\sigma^2}(t)||^2\right] &\leq \mathbb{E}\left[||W(0)||^2\right] e^{-2m t} + \frac{b + d}{m} \\
			\mathbb{E}\left[||W_{\sigma^2}(t)||^4\right] &\leq \mathbb{E}\left[||W(0)||^4\right] + \frac{(b + d + 2)\left(\mathbb{E}\left[||W(0)||^2\right] + \frac{b + d}{m}\right)}{m}
		\end{align}
	\end{lemma}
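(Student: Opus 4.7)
The plan is to apply Itô's formula to the functions $f_2(x) = \|x\|^2$ and $f_4(x) = \|x\|^4$, take expectations to kill the martingale term, use the dissipativity hypothesis to convert the drift contribution into a linear (in $\mathbb{E}[\|W\|^2]$ or $\mathbb{E}[\|W\|^4]$) differential inequality, and then conclude by Grönwall. The second-moment bound will feed into the fourth-moment bound.

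Concretely, for the second moment, Itô applied to $f_2$ with $dW = \nabla\log p_{\sigma^2}(W)\,dt + \sqrt{2}\,dB_t$ gives
\begin{equation*}
d\|W(t)\|^2 = 2\langle W(t),\nabla\log p_{\sigma^2}(W(t))\rangle\,dt + 2\sqrt{2}\langle W(t),dB_t\rangle + 2d\,dt.
\end{equation*}
After taking expectations (the stochastic integral is a true martingale by \Cref{moments}, or by a standard localization with stopping times $T_n = \inf\{t : \|W(t)\| \geq n\}$), the dissipativity of $-\nabla\log p_{\sigma^2}$ yields
\begin{equation*}
\tfrac{d}{dt}\mathbb{E}[\|W(t)\|^2] \leq -2m\,\mathbb{E}[\|W(t)\|^2] + 2(b+d),
\end{equation*}
which by Grönwall gives $\mathbb{E}[\|W(t)\|^2] \leq e^{-2mt}\mathbb{E}[\|W(0)\|^2] + \frac{b+d}{m}(1-e^{-2mt})$, and the claimed bound follows by the trivial estimate $e^{-2mt} \leq 1$ and $1 - e^{-2mt} \leq 1$.

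For the fourth moment, computing $\nabla f_4(x) = 4\|x\|^2 x$ and $\Delta f_4(x) = (4d+8)\|x\|^2$, Itô's formula gives
\begin{equation*}
d\|W(t)\|^4 = 4\|W\|^2\langle W,\nabla\log p_{\sigma^2}(W)\rangle\,dt + 4\sqrt{2}\|W\|^2\langle W,dB_t\rangle + (4d+8)\|W\|^2\,dt.
\end{equation*}
Again taking expectations and applying dissipativity in the form $\|W\|^2\langle W,\nabla\log p_{\sigma^2}(W)\rangle \leq -m\|W\|^4 + b\|W\|^2$, we obtain
\begin{equation*}
\tfrac{d}{dt}\mathbb{E}[\|W(t)\|^4] \leq -4m\,\mathbb{E}[\|W(t)\|^4] + 4(b+d+2)\,\mathbb{E}[\|W(t)\|^2].
\end{equation*}
Substituting the uniform second-moment bound just proved and applying Grönwall once more yields the stated fourth-moment inequality.

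The only genuine obstacle is the justification that the Itô stochastic-integral terms vanish in expectation, since the integrands $W$ and $\|W\|^2 W$ need not be a priori $L^2$ in $t$. This is handled by the standard localization trick, truncating at exit times from balls of radius $n$, showing the corresponding stopped processes satisfy the same inequalities with the stopped second/fourth moment (which is bounded by $n^4$), and then passing to the limit $n \to \infty$ via Fatou's lemma on the left-hand side and monotone convergence on the right; \Cref{moments} furnishes the a posteriori finiteness of all moments, making this limit passage legitimate.
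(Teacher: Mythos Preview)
Your proof is correct and follows essentially the same approach as the paper: apply It\^o's formula to $\|W\|^2$ and $\|W\|^4$, use dissipativity to bound the drift, and solve the resulting linear differential inequality. The only cosmetic difference is that the paper carries out the integrating-factor computation explicitly (multiplying by $e^{2mt}$ and $e^{4mt}$ before integrating) rather than invoking Gr\"onwall, and it justifies vanishing of the martingale term by citing \Cref{moments} directly rather than via localization.
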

	\begin{proof}
		We adapt the proof of \cite[Lemma 3.2]{RakhlinRaginsky}.  Without loss of generality, we take $\sigma^2 = 0$ as the proof relies only on the Lipschitz and dissipative constants.
		
		Let $Y(t) = ||W(t)||^4$.  By Ito's lemma, then, we have
		\begin{equation}
			d Y(t) = 4 ||W(t)||^2 \langle W(t), \nabla \log p(W(t)) \rangle d t + 4 (d  +2) ||W(t)||^2 + 4 ||W(t)||^2 W(t) \sqrt{2} d B_t
		\end{equation}
		Thus
		\begin{align}
			d\left(e^{4mt} Y(t)\right) = &4m e^{4m t} ||W(t)||^4 d t + 4 e^{4m t} ||W(t)||^2 \langle W(t), \nabla \log p(W(t)) \rangle d t \\
			&+ 4  e^{4m t}(d  +2) ||W(t)||^2 + 4 ||W(t)||^2 W(t) \sqrt{2} e^{4m t} d B_t
		\end{align}
		Thus we have
		\begin{align}
			Y(t) = &e^{-4mt} Y(0) + \int_0^t e^{4m(s-t)}4 ||W(s)||^2 \left(\langle W(s), \nabla \log p(W(s)) \rangle + m ||W(s)||^2\right) d s\\
			&+ \int_0^t e^{4m(s-t)} 4 (d + 2) ||W(s)||^2 d s + \int_0^t e^{4m(s-t)} \sqrt{2} 4 ||W(s)||^2 W(s) d B_s
		\end{align}
		By \Cref{moments}, the last term is an actual martingale and so has expectation zero.  By the dissipativity assumption, we have
		\begin{align}
			\langle W(s), \nabla \log p(W(s)) \rangle + m ||W(s)||^2 &= - \langle - \nabla \log p(W(s)), W(s) \rangle + m ||W(s)||^2 \\
			&\leq - m ||W(s)||^2 + b + m||W(s)||^2 = b
		\end{align}
		Thus we have
		\begin{align}
			\mathbb{E}[Y(t)] \leq e^{-4mt} \mathbb{E}[Y(0)] + \int_0^t e^{4m(s-t)} 4(b + d + 2) \mathbb{E}\left[||W(s)||^2\right] d s 
		\end{align}
		Now, we need to bound $\mathbb{E}\left[||W(s)||^2\right]$ independently of $s$.  We repeat the same trick from \cite{RakhlinRaginsky}.  We define $Y'(t) = ||W(t)||^2$.  Then
		\begin{align}
			d Y'(t) = 2 \langle W(t), \nabla \log p(W(t)) \rangle d t + 2 d d t + \sqrt{2} W(t) d B_t
		\end{align}
		Thus
		\begin{align}
			d\left(e^{2mt} Y'(t)\right) = 2 m e^{2mt} ||W(t)||^2 d t + 2 \langle W(t), \nabla \log p(W(t)) \rangle d t + 2 d d t + \sqrt{2} W(t) d B_t
		\end{align}
		Again, we have
		\begin{align}
			Y'(t) = &e^{-2mt} Y'(0) + \int_0^t e^{2m(s-t)}2 \left(\langle W(s), \nabla \log p(W(s)\rangle + m||W(s)||^2\right) d s \\
			&+ \int_0^t e^{2m(s -t)} 2d d t + \int_0^t e^{2m (s -t)} \sqrt{2} W(t) d B_t
		\end{align}
		Again, we note that the first integrand is bounded by $e^{2m(s-t)} 2 b$ by dissipativity and the last integral drops out in expectation.  Thus
		\begin{equation}
			\mathbb{E}\left[||W(t)||^2\right] \leq e^{-2mt} \mathbb{E}\left[||W(0)||^2\right] + \int_0^t e^{2m(s -t)} 2(b + d) d t = e^{-2mt} \mathbb{E}\left[||W(0)||^2\right] + \frac{b + d}{m}\left(1 - e^{-2mt}\right)
		\end{equation}
		which establishes the first inequality.  For the second inequality, we see that
		\begin{align}
			\mathbb{E}\left[||W(t)||^4\right] &\leq e^{-4mt} \mathbb{E}\left[||W(0)||^4\right] + \int_0^t e^{4m(s-t)} 4(b + d + 2) \left(\mathbb{E}\left[||W(0)||^2\right] + \frac{b + d}{m}\right) d s \\
			&= e^{-4mt} \mathbb{E}\left[||W(0)||^4\right] + \frac{(b + d + 2)\left(\mathbb{E}\left[||W(0)||^2\right] + \frac{b + d}{m}\right)}{m} \left(1 - e^{-4mt}\right) \\
			&\leq \mathbb{E}\left[||W(0)||^4\right] + \frac{(b + d + 2)\left(\mathbb{E}\left[||W(0)||^2\right] + \frac{b + d}{m}\right)}{m}
		\end{align}
		as desired.
	\end{proof}
	\begin{lemma}\label{exponential}
		Suppose that $- \nabla \log p_{\sigma^2}$ is $\frac M2$-Lipschitz and $(m,b)$-dissipative.  Suppose that $W(t)$ is a solution to \Cref{eq:langevin} initialized at $W(0)$ such that there is some $0 < \alpha \leq m$ such that $\log \mathbb{E}\left[e^{\alpha ||W(0)||^2}\right] = k_{\alpha} < \infty$.  Then for all $t > 0$,
		\begin{equation}
			\log \mathbb{E}\left[e^{\alpha ||W(t)||^2}\right] \leq k_{\alpha} + 2 \alpha(b + d) t.
		\end{equation}
            If $\widehat{f}$ is also $\frac M2$-Lipschitz and $(m, b)$-dissipative, then the same result holds for $\widehat{W}(t)$, i.e.,
            \begin{equation}
                \log \mathbb{E}\left[e^{\alpha ||W(t)||^2}\right] \leq k_{\alpha} + 2 \alpha(b + d)t.
            \end{equation}
	\end{lemma}
	\begin{remark}
		Note that this is essentially \cite[Lemma 3.3]{RakhlinRaginsky} in our setting and is proved in the same way.  The proof is included for the sake of completeness.
	\end{remark}
	\begin{proof}
		As above, without loss of generality, we take $\sigma^2 = 0$.  Let $Y(t) = e^{\alpha ||W(t)||^2}$.  By the Ito calculus, we have
		\begin{align}
			d Y(t) &= 2 \alpha W(t) Y(t) d W(t) + 2d \alpha Y(t) d t + 2 \alpha^2 ||W(t)||^2 Y(t) d t \\
			&= \left(2 \langle W(t), \nabla \log p(W(t)) \rangle + 2d + 2\alpha ||W(t)||^2 \right) \alpha Y(t) d t + 2 \sqrt{2} \alpha Y(t) W(t) d B_t
		\end{align}
	Thus
	\begin{equation}
		Y(t) = Y(0) + \int_0^t \left(2 \langle W(s), \nabla \log p(W(s)) \rangle + 2d + 2\alpha ||W(s)||^2 \right) \alpha Y(s) d s + \int_0^t 2 \sqrt{2} \alpha Y(s) W(s) d B_s
	\end{equation}
	By \cite[Corollary 4.1]{Djellout}, we know $\int_0^t \mathbb{E}[||Y(s)W(s)||^2] ds < \infty$ and thus the last term above is a real martingale.  Taking expectations, we have by assumption that $\mathbb{E}\left[Y(0)\right] = e^{k_\alpha}$ and thus we have
	\begin{align}
		\mathbb{E}[Y(t)] = e^{k_\alpha} + \mathbb{E}\left[ \int_0^t \left(2 \langle W(s), \nabla \log p(W(s)) \rangle + 2d + 2\alpha ||W(s)||^2 \right) \alpha Y(s) d s\right]
	\end{align}
	Now, we note that by dissipativity, we have
	\begin{align}
		2 \langle W(s), \nabla \log p(W(s)) \rangle + 2d + 2\alpha ||W(s)||^2 \leq 2 b - 2 m ||W(s)||^2 + 2d + 2 \alpha ||W(s)||^2 \leq 2 b + 2 d
	\end{align}
	by the assumption that $\alpha \leq m$.  Thus we have
	\begin{align}
		\mathbb{E}[Y(t)] \leq e^{k_\alpha} + \int_0^t Y(s) 2 \alpha (b + d) d s
	\end{align}
	By Gronwall's inequality, the first result follows.  Applying the identical argument to $\widehat{W}(t)$ concludes the proof.
	\end{proof}

    \begin{lemma}\label{mingrowth}
        Let $-\nabla \log p$ be $(m,b)$-dissipative.  Then
        \begin{equation}
            \log p(x) \leq - \frac m4 ||x||^2 + \frac{2b^2}{m} + \log p(0)
        \end{equation}
    \end{lemma}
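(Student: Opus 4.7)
The approach is to integrate $\nabla \log p$ along the ray from $0$ to $x$, invoke dissipativity at every intermediate point to bound the integrand, and then use Young's inequality to rearrange the result into the claimed quadratic-plus-constant form. By the fundamental theorem of calculus,
\begin{equation*}
\log p(x) - \log p(0) = \int_0^1 \langle \nabla \log p(tx), x \rangle \, dt.
\end{equation*}
For each $t \in (0,1]$, the identity $\langle \nabla \log p(tx), x\rangle = \tfrac{1}{t}\langle \nabla \log p(tx), tx\rangle$, combined with dissipativity applied at $y = tx$ (i.e.\ $\langle \nabla \log p(y), y\rangle \le -m||y||^2 + b$), yields the pointwise bound $\langle \nabla \log p(tx), x\rangle \le -m t ||x||^2 + b/t$.

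Integrating this bound over all of $[0,1]$ produces a divergent $b \int_0^1 dt/t$ contribution at the origin, so my plan is to cure this by splitting the interval at some cutoff $t_\star \in (0,1]$. On $[t_\star, 1]$ the dissipativity bound contributes $-\tfrac{m}{2}(1 - t_\star^2) ||x||^2 + b \log(1/t_\star)$. On $[0, t_\star]$ I would invoke the paper's standing $\tfrac{M}{2}$-Lipschitz assumption on $\nabla \log p$ together with \Cref{growth} to bound $||\nabla \log p(tx)|| \le B + \tfrac{M t ||x||}{2}$, contributing at most $B t_\star ||x|| + \tfrac{M t_\star^2 ||x||^2}{4}$. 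Summing the two pieces and choosing $t_\star$ to balance yields an overall bound of shape $-\tfrac{m}{2}||x||^2 + c_1 ||x|| + c_2$ for constants $c_1, c_2$ depending on $m, b, M, B$.

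Young's inequality in the form $c_1 ||x|| \le \tfrac{m}{4}||x||^2 + \tfrac{c_1^2}{m}$ then absorbs the linear-in-$||x||$ term into half of the quadratic coefficient and produces the stated bound $-\tfrac{m}{4}||x||^2 + \tfrac{2b^2}{m}$ after bookkeeping of numerical constants.

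The main obstacle is exactly the non-integrable $b/t$ singularity at $t = 0$ in the dissipativity bound: dissipativity \emph{alone} provides no control over $\nabla \log p$ in a neighborhood of the origin, so the clean additive constant $\tfrac{2b^2}{m}$ cannot be extracted purely from $(m,b)$-data without some auxiliary input. The delicate step is ensuring that the local Lipschitz contribution from the $[0, t_\star]$ piece disappears from the final constant after optimizing $t_\star$ and applying Young's, so that the stated form depends only on $m$ and $b$.
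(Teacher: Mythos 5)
You have correctly identified a genuine defect that the paper's own argument elides. The paper's proof applies the fundamental theorem of calculus exactly as you do, but then bounds the integrand as though dissipativity gave $\langle \nabla \log p(tx), x\rangle \leq -mt||x||^2 + b||x||$, integrates to obtain $\log p(x) \leq \log p(0) - \tfrac{m}{2}||x||^2 + b||x||$, and finally maximizes $-\tfrac{m}{4}||x||^2 + b||x||$ over $||x||$. The problem is that this integrand bound does not follow from the definition of $(m,b)$-dissipativity actually in force: writing $\langle \nabla\log p(tx), x\rangle = \tfrac{1}{t}\langle\nabla\log p(tx), tx\rangle$ and applying $\langle -\nabla\log p(y), y\rangle \geq m||y||^2 - b$ at $y = tx$ gives precisely the bound you derive, $-mt||x||^2 + b/t$, whose positive part is non-integrable at $t = 0$. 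The integrand bound the paper implicitly uses would instead follow from the strictly stronger, non-equivalent condition $\langle -\nabla\log p(y), y\rangle \geq m||y||^2 - b||y||$, which is not what is assumed.

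Your proposed repair --- split the ray integral at a cutoff $t_\star$, use the Lipschitz bound together with \Cref{growth} on $[0,t_\star]$, dissipativity on $[t_\star,1]$, and then balance --- is the right fix and is more careful than the paper's own argument. Your closing worry is also well founded: the local Lipschitz contribution cannot be made to vanish by optimizing $t_\star$, because sending $t_\star \to 0$ to kill the $M$- and $B$-dependent terms drives the $b\log(1/t_\star)$ contribution to infinity, so the optimal $t_\star$ is bounded away from zero and the additive constant in the final bound therefore genuinely depends on $M$ and $B$ as well as on $m$ and $b$. The lemma should accordingly be read as asserting $\log p(x) \leq -\tfrac{m}{4}||x||^2 + C(m,b,M,B) + \log p(0)$ for a finite constant $C$; the specific value $\tfrac{2b^2}{m}$ is not attainable from the stated hypotheses alone. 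This weaker form is all that is used downstream, since \Cref{subgaussian} needs only \emph{some} sub-Gaussian tail bound with rate $m/4$, so the remainder of the paper is unaffected.
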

    \begin{proof}
        By the fundamental theorem of calculus,
        \begin{align}
            \log p(x) &= \log p(0) + \int_0^1 \frac{d}{dt}\left( \log p(tx)\right) d t = \log p(0) + \int_0^1 \langle \nabla \log p(tx), x \rangle d t \\
            &\leq \log p(0) - \int_0^1 m t^2|| x||^2 + b d t = \log p(0) + b - \frac m2 ||x||^2 \\
            &= - \frac m4 ||x||^2  + \log p(0) - \frac m4 ||x||^2 + b ||x||
        \end{align}
        by the dissapitivity assumption.  Maximizing the last two terms with respect to $||x||$ yields the result.
    \end{proof}
    \begin{lemma}\label{subgaussian}
        Let $-\nabla \log p$ be $(m,b)$-dissapitive.  Then there is a constant $C$ such that for all $||x||$,
        \begin{equation}
            p(x) \leq C e^{- \frac{m ||x||^2}{4}}
        \end{equation}
        In particular, $p$ is $\frac 2m$-sub-Gaussian.
    \end{lemma}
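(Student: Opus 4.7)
The statement is essentially an exponentiated form of the preceding \Cref{mingrowth}, so the proof is short and routine. I will first derive the pointwise upper bound and then extract sub-Gaussianity as a consequence of the density dominance.

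The plan is to invoke \Cref{mingrowth} directly: under $(m,b)$-dissipativity we have
\begin{equation*}
\log p(x) \leq -\frac{m}{4}\|x\|^2 + \frac{2b^2}{m} + \log p(0).
\end{equation*}
Exponentiating both sides and setting $C := p(0)\exp\!\bigl(2b^2/m\bigr)$, which is a finite positive constant depending only on $m$, $b$ and the (fixed) value $p(0)$, yields the claimed bound $p(x) \leq C e^{-m\|x\|^2/4}$ uniformly in $x \in \mathbb{R}^d$.

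For the sub-Gaussianity claim, I would then compare moments of $X\sim p$ against those of a centered Gaussian of variance $2/m$ in each direction. Concretely, for any unit vector $u \in \mathbb{R}^d$ and any $\lambda \in \mathbb{R}$,
\begin{equation*}
\mathbb{E}\bigl[e^{\lambda \langle u, X\rangle}\bigr] = \int_{\mathbb{R}^d} e^{\lambda \langle u, x\rangle} p(x)\,dx \leq C \int_{\mathbb{R}^d} e^{\lambda \langle u, x\rangle} e^{-m\|x\|^2/4}\,dx,
\end{equation*}
and the right-hand Gaussian integral evaluates to a constant multiple of $\exp\!\bigl(\lambda^2/m\bigr)$, which is exactly the sub-Gaussian MGF with variance proxy $2/m$ (up to the normalizing constant $C$, which can be absorbed into a shift of the mean or into the constant prefactor as is standard in the definition of sub-Gaussianity used in the paper).

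There is really no obstacle here: the only subtlety is that the Lebesgue integral $\int p \, dx = 1$ is needed to make $C$ finite and to confirm $p(0) < \infty$, both of which follow from Assumption 1 (positivity and the fact that $p$ is a density). Once \Cref{mingrowth} is in hand, the rest is one line of algebra plus a standard Gaussian moment computation.
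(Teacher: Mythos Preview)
Your proposal is correct and matches the paper's approach: the paper's proof is simply ``The first statement follows immediately from \Cref{mingrowth}. The second follows from a Gaussian tail-bound.'' You have filled in exactly these two steps, exponentiating \Cref{mingrowth} for the density bound and then deducing sub-Gaussianity from it. The only cosmetic difference is that the paper phrases the second step as a tail bound (which is how the lemma is later used, bounding $\mathbb{P}(\|X\|>R)$), whereas you go through the MGF; both are standard and equivalent characterizations.
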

    \begin{proof}
        The first statement follows immediately from \Cref{mingrowth}.  The second follows from a Gaussian tail-bound.
    \end{proof}
    \begin{lemma}\label{steinlemma}(Gaussian Stein Identity, \cite{Stein1981})
        Let $\xi \sim N(0, I_d)$ and let $g: \mathbb{R}^d \to \mathbb{R}$ be an almost everywhere differentiable function with $\mathbb{E}_\xi [||\nabla g(\xi)||] < \infty$.  Then
        \begin{equation}
            \mathbb{E}_\xi \left[g(\xi) \xi \right] = \mathbb{E}_\xi \left[ \nabla g(\xi) \right]
        \end{equation}
    \end{lemma}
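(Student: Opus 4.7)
The plan is to prove the identity coordinate-by-coordinate using integration by parts against the Gaussian density. Let $\phi(x) = (2\pi)^{-d/2}\exp(-\|x\|^2/2)$ denote the density of $\xi$. The key algebraic observation is that $\phi$ satisfies $\nabla \phi(x) = -x\,\phi(x)$, equivalently $x_i\,\phi(x) = -\partial_i \phi(x)$ for each coordinate $i$. This turns the factor of $\xi$ on the left-hand side into a derivative of the weight, which is what allows the integration by parts to produce the gradient of $g$.

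Fix $i \in \{1,\dots,d\}$. I would write
\begin{equation*}
\mathbb{E}[g(\xi)\,\xi_i] \;=\; \int_{\mathbb{R}^d} g(x)\, x_i\, \phi(x)\, dx \;=\; -\int_{\mathbb{R}^d} g(x)\, \partial_i \phi(x)\, dx.
\end{equation*}
Applying integration by parts in the $x_i$-direction (equivalently, the divergence theorem applied to the vector field $g(x)\phi(x)\,e_i$ on the ball $B_R$ of radius $R$) gives
\begin{equation*}
-\int_{B_R} g(x)\,\partial_i \phi(x)\, dx \;=\; \int_{B_R} \partial_i g(x)\, \phi(x)\, dx \;-\; \int_{\partial B_R} g(x)\,\phi(x)\,\nu_i\, dS,
\end{equation*}
where $\nu_i$ is the $i$-th component of the outward unit normal. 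Sending $R\to\infty$, the interior integral tends to $\mathbb{E}[\partial_i g(\xi)]$ by monotone/dominated convergence using the hypothesis $\mathbb{E}[\|\nabla g(\xi)\|] < \infty$, and assembling over all coordinates yields the claimed vector identity.

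The main technical obstacle is twofold. First, because $g$ is only almost-everywhere differentiable rather than smooth, classical integration by parts does not directly apply; I would handle this by a standard mollification argument, approximating $g$ by a sequence $g_\varepsilon = g * \rho_\varepsilon$ of smooth functions for which the identity holds pointwise, and then passing to the limit using the integrability hypothesis and dominated convergence (the integrand $\|\nabla g_\varepsilon(\xi)\| \phi(\xi)$ is controlled in $L^1$ by $\|\nabla g(\xi)\|\phi(\xi)$). Second, I need the boundary term to vanish: since $\phi(x)$ decays as $e^{-\|x\|^2/2}$ while $g$ grows at most in a way compatible with $\mathbb{E}[\|\nabla g(\xi)\|]<\infty$ (so $|g(x)|$ cannot grow faster than subexponentially along a.e.\ ray), a routine estimate shows $\int_{\partial B_R} |g|\phi\, dS \to 0$ along a subsequence $R_n\to\infty$, which is enough to conclude.
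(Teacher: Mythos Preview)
The paper does not prove this lemma at all; it merely states it as the Gaussian Stein Identity with a citation to \cite{Stein1981}. Your integration-by-parts argument via $x_i\phi(x)=-\partial_i\phi(x)$ is the standard proof and is correct in outline, so there is nothing to compare against on the paper's side.

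One minor remark on your write-up: the justification that the boundary term vanishes is slightly informal. The hypothesis $\mathbb{E}[\|\nabla g(\xi)\|]<\infty$ does control the growth of $g$, but the cleanest way to make this rigorous is to first observe (via the one-dimensional fundamental theorem of calculus along the $x_i$-axis and Fubini) that $\mathbb{E}[|g(\xi)\xi_i|]<\infty$, which both ensures the left-hand side is well defined and, combined with the Gaussian decay of $\phi$, forces the boundary integral over $\partial B_{R_n}$ to tend to zero along a subsequence. Your mollification step to handle a.e.\ differentiability is also standard and fine.
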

\end{document}